\theoremstyle{plain}
\newtheorem{theorem}{Theorem}
\newtheorem{lemma}{Lemma}
\newtheorem{corollary}{Corollary}
\newtheorem{example}{Example}
\theoremstyle{definition}
\newtheorem{definition}{Definition}
\newtheorem{abstraction}{Abstraction}
\theoremstyle{remark}
\newtheorem{remark}{Remark}
\def\cA{{\mathcal{A}}}
\def\cB{{\mathcal{B}}}
\def\cC{{\mathcal{C}}}
\def\cD{{\mathcal{D}}}
\def\cF{{\mathcal{F}}}
\def\cG{{\mathcal{G}}}
\def\cK{{\mathcal{K}}}
\def\cN{{\mathcal{N}}}
\def\cU{{\mathcal{U}}}
\def\cX{{\mathcal{X}}}
\def\cY{{\mathcal{Y}}}
\def\cZ{{\mathcal{Z}}}
\def\bbE{{\mathbb{E}}}
\def\bbR{{\mathbb{R}}}
\newcommand{\E}{\mathbb{E}}
\newcommand{\op}[1]{\operatorname{#1}}
\DeclareMathOperator*{\argmax}{arg\,max}
\DeclareMathOperator*{\argmin}{arg\,min}
\newcommand{\DegP}{\operatorname{DegP}}
\newcommand{\supp}{\operatorname{supp}}
\newcommand{\spn}{\operatorname{span}}
\newcommand{\Proj}{\operatorname{Proj}}
\newcommand{\APE}{\text{APE}}
\newcommand{\RPE}{\text{RPE}}
\newcommand{\GRPE}{\text{GRPE}}
\newcommand{\RFM}{\text{RFM}}
\newcommand{\RFMP}{\text{RFMP}}
\newcommand{\PLAA}{\text{PLAA}}
\newcommand{\crrv}[1]{{#1}}
\icmltitlerunning{Low-Dimension-to-High-Dimension Generalization And Its Implications for Length Generalization}
\begin{document}

\twocolumn[
\icmltitle{Low-Dimension-to-High-Dimension Generalization \\ And Its Implications for Length Generalization}




\begin{icmlauthorlist}
\icmlauthor{Yang Chen}{aipku}
\icmlauthor{Long Yang}{aipku}
\icmlauthor{Yitao Liang}{iai,bigai}
\icmlauthor{Zhouchen Lin}{aipku,iai,pazhou}
\end{icmlauthorlist}

\icmlaffiliation{aipku}{State Key Lab of General Artificial Intelligence, School of Intelligence Science and Technology, Peking University}
\icmlaffiliation{iai}{Institute for Artificial Intelligence, Peking University}
\icmlaffiliation{bigai}{Beijing Institute for General Artificial Intelligence}
\icmlaffiliation{pazhou}{Pazhou Laboratory (Huangpu), Guangzhou, Guangdong, China}

\icmlcorrespondingauthor{Zhouchen Lin}{zlin@pku.edu.cn}
\icmlcorrespondingauthor{Yitao Liang}{yitaol@pku.edu.cn}

\icmlkeywords{Machine Learning, ICML}

\vskip 0.3in
]



\printAffiliationsAndNotice{}  


\begin{abstract}
    Low-Dimension-to-High-Dimension (LDHD) generalization, a subset of Out-of-Distribution (OOD) generalization, involves training on a low-dimensional subspace and testing in a high-dimensional space. Assuming instances are generated from latent variables reflecting problem scale, LDHD generalization captures the inherent scaling challenge of length generalization. We theoretically show that LDHD generalization is unattainable without appropriate inductive bias. Focusing on Boolean functions, we demonstrate that different architectures trained with (S)GD converge to \emph{min-degree interpolators w.r.t. different linearly independent sets}, achieving LDHD generalization only when the target function aligns with this bias. From the perspective of LDHD generalization for length generalization, we explain the success of CoT in restructuring latent space for improved LDHD generalization. We further propose a principle for designing position embeddings to address both LDHD generalization and data format nuisances separately. Following the principle, we introduce RPE-Square, a novel embedding that enhances RPE to better handle data formats.
\end{abstract}

\section{Introduction}\label{sec:introduction}

Learning to reason has gained significant popularity in the machine learning community due to its impressive performance in reasoning tasks such as natural language processing \citep{openai2023chatgpt, openai2023gpt4}, mathematics \citep{frieder2023mathematical, jelassi2023length}, coding \citep{zhang2022planning}, symbolic logic \citep{abbe2023generalization, garcez2022neural}, and planning \citep{zhao2023large, valmeekam2023planning}. 
One of the most significant challenges in learning to reason is \emph{length generalization} \citep{anil2022exploring, zhang2022unveiling}, where models trained on small-scale instances must generalize to large-scale instances. Length generalization is crucial because the size of sample spaces often increases exponentially with the complexity of reasoning problems, leading to intractable sample complexity and computational costs for models that do not achieve length generalization.

Numerous works have investigated various techniques for length generalization, including modifications to model architectures \citep{shaw2018self, jelassi2023length, kazemnejad2024impact}, transformations in data formats \citep{lee2023teaching, zhou2023algorithms}, prompt engineering for Large Language Models (LLMs) \citep{wei2022chain, feng2024towards}.
Although some of the above techniques work uniformly well across a wide class of problems, many are fragile and even ad-hoc, applicable only to specific problems with certain formats \citep{zhou2024transformers}.
This is due to the mismatch between the inherent problem scale and the length of the input string. We illustrate it in the next \cref{eg:addition-language}.

\begin{example}\label{eg:addition-language}
    Consider the addition learning problem where the input is a string. For an $N$-digit-plus-$N$-digit (written as $N$-addition) addition of two numbers $x$ and $y$, where $x=x_{N-1} \dots x_0$, $y=y_{N-1} \dots y_0$, $x_i,y_i\in\{0,\dots,9\}$ ($x_{N-1}>0$ or $y_{N-1}>0$), consider two formats: the Aligned and Reverse Format (ARF), where the instance is represented as ``$\mathtt{x_0 \dots x_{n-1} + y_0 \dots y_{n-1} =}$''; 
    and the Unaligned and Reverse Format (URF), where the instance is represented as ``$\mathtt{x_0 \dots x_{n_x - 1} + y_0 \dots y_{n_y - 1} =}$'', $n_c = \argmax_{i} \{c_i\neq 0\},c\in\{x,y\}$. However, the length of the input strings in neither format faithfully reflects the problem scale: in ARF, the length of the input strings is always invariant; in URF, a shorter string may correspond to a larger scale than a longer string (e.g., $s_1=$``$1 + 1 2 3 4=$'' is $4$-addition, $s_2=$``$1 2 3 + 1 2 3 =$'' is $3$-addition; $s_1$ is of larger scale than $s_2$ but $\op{length}(s_1) < \op{length}(s_2)$). 
\end{example}

\begin{figure*}[th]
    \centering
    \begin{subfigure}{0.31\linewidth}
        \centering
        \includegraphics[width=\linewidth]{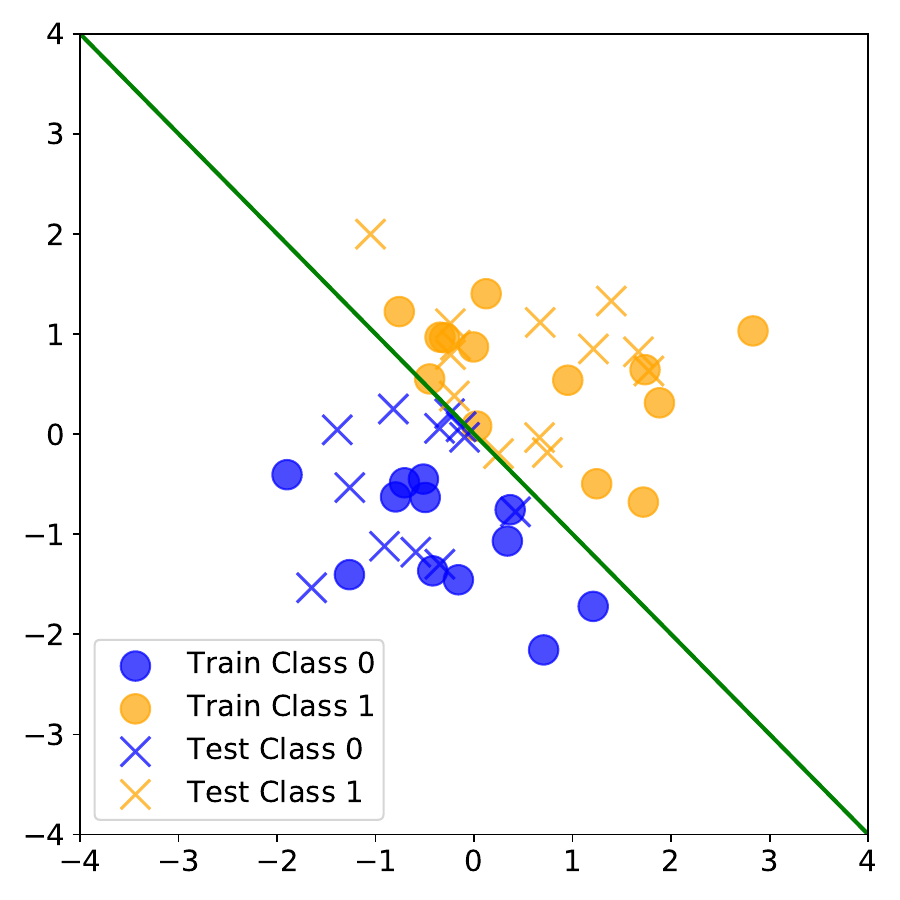}
        \subcaption{In-Distribution Generalization.}
        \label{subfig:iid}
    \end{subfigure}
    \hfill
    \begin{subfigure}{0.31\linewidth}
        \centering
        \includegraphics[width=\linewidth]{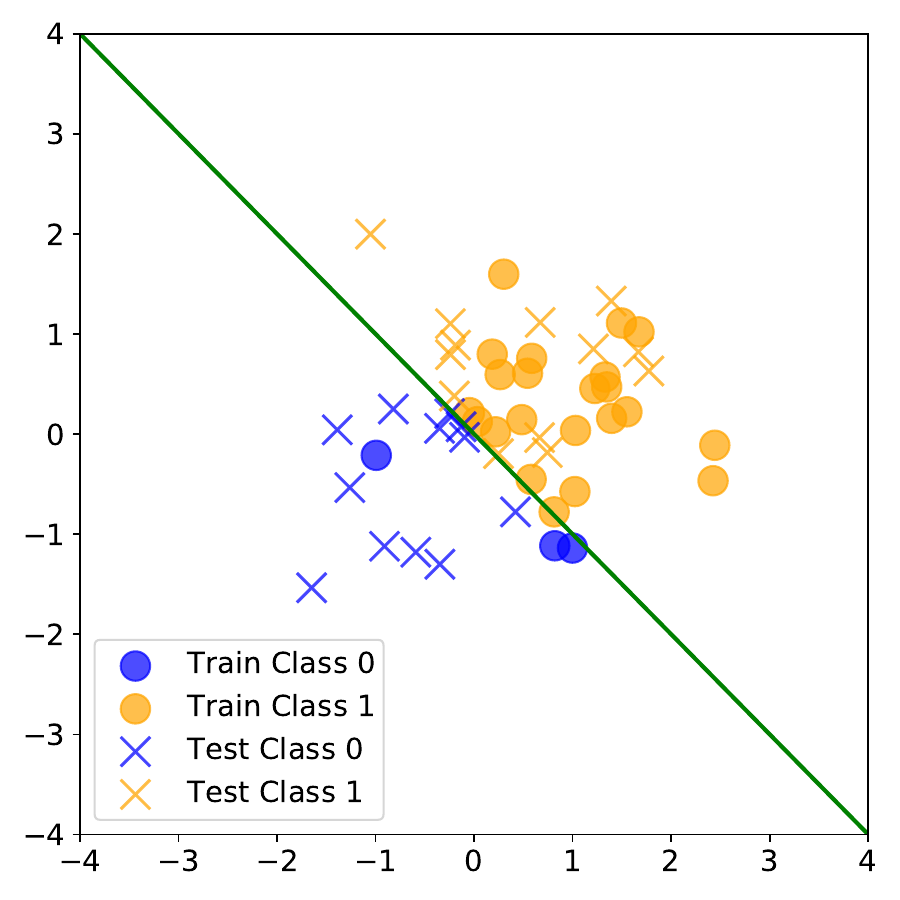}
        \subcaption{(Typical) OOD Generalization.}
        \label{subfig:ood}
    \end{subfigure}
    \hfill
    \begin{subfigure}{0.31\linewidth}
        \centering
        \includegraphics[width=\linewidth]{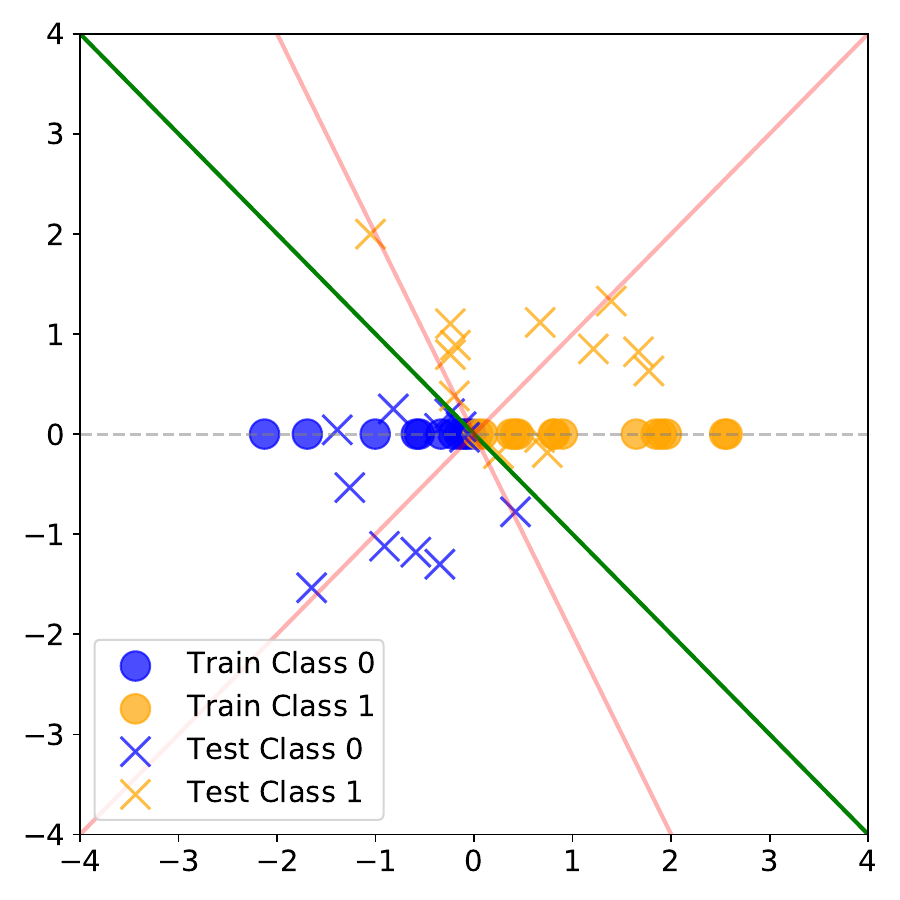}
        \subcaption{LDHD Generalization.}
        \label{subfig:ldhd}
    \end{subfigure}
    \caption{Illustrative comparison of in-distribution generalization, typical OOD generalization, and LDHD generalization.
    (a) In-distribution generalization assumes identical training and testing distributions.
    (b) Typical OOD generalization involves a shift between training and testing distributions, which remain relatively ``close'' (e.g., sharing support or having small distributional distances).
    (c) LDHD generalization features a training distribution restricted to a low-dimensional subspace and a testing distribution on a high-dimensional space, often vastly different. While LDHD is a type of OOD generalization, its structured shift poses unique challenges, as training data provide no clues about the additional dimensions' contribution to the label.}
    \label{fig:iid-ood-ldhd}
\end{figure*}
\vspace{-0.1in}

\cref{eg:addition-language} demonstrates the sensitivity of length generalization to the data format. Developing robust and transferable methods for scalable models requires a formulation that captures the inherent scaling challenge of length generalization and is invariant to the data format nuisance.

\subsection{Our Main Works}

\textbf{Low-Dimension-to-High-Dimension Generalization Perspective for Length Generalization.} 
We propose \cref{abs:dglg} that disentangles the problem scale and the data format, which provides a more precise formulation for the analysis of length generalization.

\begin{abstraction}[Data Generation in Length Generalization]\label{abs:dglg}
    The data generation process of an instance of scale $n$ with the concept $c$ is as follows:
    
    1. A latent variable $h$ is sampled from a subspace $\Sigma^n$ of dimension $n$;
    
    2. The label $y$ is determined by the concept $c$ and the latent variable $h$, i.e., $y=c(h)$;
    
    3. The latent variable $h$ is transformed to an input sequence by the data format mapping $\phi$.
\end{abstraction}

In \cref{abs:dglg}, the scale of the instance is captured by the dimension of the latent space from which the instance is sampled. The scale shift in length generalization is characterized by Low-Dimension-to-High-Dimension (LDHD) generalization in the latent space. In other words, the generalization from ``short'' instances to ``long'' instances corresponds to the generalization from the low-dimension latent subspace of the training data to the high-dimension latent subspace. The computation of the task is represented by the concept $c$, which is defined on the latent space and independent of the data format. The data format decides how a hidden variable is mapped to an actual input sequence. The next \cref{eg:addition} illustrates the abstraction in the addition.

\begin{example}[Addition]\label{eg:addition}
    Let $\Sigma=\{0,\dots,9\}^2$. The instance of $n$-addition $x_{n-1} \dots x_0 + y_{n-1} \dots y_0$ can be represented by the latent vector $h_n=\left[\left(x_0,y_0\right),\dots,\left(x_{n-1},y_{n-1}\right)\right]$. The $k$-th dimension of $h_n$ corresponds to the $(k-1)$-th digits of the two addenda (i.e., $x_{k-1}$ and $y_{k-1}$). The expansion of the state vector in the dimension corresponds to the increase in the addendum digits. In the length generalization of the addition task, the scale shift from $N_0$-addition to $N$-addition can be seen as a generalization from the low-dimensional latent space $\Sigma^{N_0}$ to the high-dimensional latent space $\Sigma^{N}$. The label, i.e., the sum of the addenda, is determined by the computation of the addition task. The input string is determined by both the latent variable and the data format (e.g., ARF or URF).
\end{example}

\cref{eg:addition} demonstrates that each instance of $n$-addition is generated from an $n$-dimension latent variable via a data formant mapping. Length generalization of addition requires addressing both LDHD generalization in the latent space, which captures the computation of addition as the scale increases, and the data format, which may hinder the model in learning the addition operator. In general, from the LDHD generalization perspective, length generalization involves two key aspects: LDHD generalization in the latent space and the nuisance of the data format. LDHD generalization reflects the scale shift between the training and test data, which is the inherent challenge of length generalization. 

\textbf{No-Free-Lunch Theorem of LDHD Generalization.} The main challenge of LDHD generalization is that the testing space has extra dimensions compared to the training space. As a result, the testing space contains instances with orthogonal components to the training space. The training samples cannot reveal any information about how these components contribute to the results. For instance, in \cref{eg:addition}, learning with $N_0$-addition solely does not tell how $(N_0 + 1)$-th digits to $N$-th digits contribute to the result unless we have the prior knowledge that the addition of each digit shares the same process. \cref{fig:iid-ood-ldhd} shows an intuitive illustration for the challenge of LDHD generalization of linear models on $\bbR^2$, compared to in-distribution generalization and typical OOD generalization settings. 
When learning a linear decision boundary from the low-dimension training data, we need to know the slope of the decision boundary as a prior; otherwise, we will fail to achieve LDHD generalization: While all the three solid lines in \cref{subfig:ldhd} perfectly separate the low-dimension training data, only the true decision boundary can achieve LDHD generalization.

We formalize the above challenge as \emph{No-Free-Lunch Theorem \citep{wolpert1997no, wolpert2002supervised} of LDHD Generalization}. \cref{thm:nfl-ldhd} shows that no algorithm could achieve LDHD generalization uniformly for all tasks, which necessitates the use of prior knowledge in the learning process in order to achieve LDHD generalization. 

\textbf{Inductive Bias of Models and Algorithms for LDHD Generalization.}
In practice, the prior knowledge is usually incorporated via the inductive bias of the learning algorithms and the models. To develop LDHD generalizable models, we investigate the inductive bias of different architectures trained by (S)GD.
While random feature models are shown to be \emph{min-degree interpolators} \citep{abbe2023generalization}, \cref{thm:rfmp} shows that random feature models with projections, where inputs are transformed by projections before fed to the random feature models, converge to \emph{min-degree interpolators w.r.t. different linearly independent sets}. Furthermore, we consider Position-Only Linear Attentions with Advice (PLAA), i.e., linear attentions whose attention scores are determined only by positions, with additional hints about the scales of the instances. This model can be seen as a simplified abstraction of decoder-only transformers with a special focus on positional relations that are considered crucial for length generalization. Theorems~\cref{thm:plaa-ape} and~\cref{thm:plaa-grpe} show that PLAA with Absolute Position Embedding (APE) and PLAA with Relative Position Embedding (RPE) converge to min-degree interpolators w.r.t. different linearly independent sets. The results illustrate the limitation of PLAA with APE in LDHD generalization and the potential of PLAA with RPE to overcome the limitation.

\textbf{Implications of the LDHD Generalization Perspective for Practical Length Generalization Techniques.} 
The LDHD generalization perspective further provides insights into practical length generalization techniques. In \cref{subsec:further-cot}, we discuss the role of Chain-of-Thought (CoT) \citep{wei2022chain} in length generalization.  We show that CoT can be seen as a change of the hidden space, where each dimension of the latent space is augmented with an additional middle state. This transformation could facilitate LDHD generalization in the latent space for some problems. Additionally, in \cref{subsec:further-pe}, we propose a principle of position embedding design for length generalization with Transformers: We need to handle both the inherent LDHD generalization and the nuisances such as the data format in the design of the position embeddings. Following this principle, we propose a novel position embedding named RPE-Square. Our experiments show that the RPE-Square evidently enhances the RPE with the ability to handle the nuisance of the unaligned data format.

The next sections of the paper are organized as follows: 
In \cref{sec:framework}, we introduce the formal definition and No-Free-Lunch Theorem of LDHD generalization; in \cref{sec:results}, we present the results of the inductive bias of different models trained by gradient and how the inductive bias influences LDHD generalization; in \cref{sec:further}, we show how the LDHD perspective helps to understand and design practical techniques for length generalization.

\subsection{Notations} 
We use $[n]$ to represent the set of numbers $\{1,\dots,N\}$. We denote the set of all functions from the set $\cX$ to the set $\cY$ as $\cF_{\cX,\cY}$. We define $\Proj(x,V)$ as the coordinate of the projection of $x$ onto the space spanned by $V$ with the basis $V=\left[v_1,\dots,v_r\right]$, i.e., $\left[\Proj(x,V)\right]_i=\langle x, v_i\rangle$ for all $i=1,\dots,r$.
We use $A^*$ to denote the Kleene closure of the set $A$, i.e., $A^*=\bigcup_{k=0}^\infty A^k$.
We use $\deg(p)$ to denote the degree of the polynomial $p$. We represent the set of $N\times N$ upper triangular matrix as $\cU_N$.
\section{Related Work}\label{sec:related-work}

\textbf{Length generalization in reasoning problems.} 
Length generalization is a key challenge in learning to reason, typically interpreted as the ability to learn with small-scale instances of a task and generalize to unseen large instances of the same task \citep{anil2022exploring, zhou2023algorithms}. Various reasoning tasks are considered to investigate length generalization, including arithmetic \citep{jelassi2023length, feng2024towards}, Boolean logic \citep{abbe2023generalization, d2023boolformer}, symbolic reasoning \citep{zhang2022unveiling}, etc. Despite the rich literature on length generalization on specific reason tasks, few works have considered challenges and overconditions of length generalization for general problems. The existing works that analyze general length generalization mainly focus on the change of the input sequence length \citep{xiao2023conditions, ahuja2024provable, huang2024formal}. This formulation conflates the inherent scaling challenges with the effects of data format, making it difficult to precisely capture the challenges of length generalization. To address this, our work proposes disentangling length generalization into two distinct aspects: LDHD generalization in the latent space, which characterizes the challenges associated with increasing scale, and the data format nuisance.

\textbf{Inductive Bias of Model Architectures and Algorithms.}
In the situation of learning with overparatermization or incomplete information, proper inductive bias is essential to select the true model from the hypothesis \citep{neyshabur2017implicit, bartlett2021deep, teney2024neural}. One of the most studied scenarios is the inductive bias of different model architectures under (S)GD. Previous research shows (S)GD combined with different model architectures lead to different effects of implicit regularization, such as different norms of the learnable parameters \citet{gunasekar2017implicit, bartlett2021deep} and different complexity characterizations of the models \citep{razin2020implicit,razin2021implicit, abbe2023generalization}. We specially mention \citet{abbe2023generalization}, which proposes that models trained with (S)GD are biased towards min-degree profile interpolators in the context of Boolean functions, which do not achieve general LDHD generalization. Our results show that the min-degree profile bias does not hold for all models. We further show that models with different architecture can converge to min-degree profile interpolators under different linearly independent sets when trained with (S)GD. This partially explains how model architectures can affect length generalization in reasoning.

\section{LDHD Generalization}\label{sec:framework}

\begin{definition}[Low-Dimension-to-High-Dimension Generalization]\label{def:ldhd}
    Suppose that $\cX$ is a sample space and $\cX_1, \cX_2$ are two subspaces such that $\cX_1 \subset \cX_2 \subset \cX$ and $\dim(\cX_1) < \dim(\cX_2)$. Consider a concept class $\cC\subset\cF_{\cX,\cY}$, two distributions $\cD_1, \cD_2$ where $\supp(\cD_1)=\cX_1$ and $\supp(\cD_2)=\cX_2$, and a learning algorithm $\cA: \left(\cX\times\cY\right)^* \mapsto \cF_{\cX,\cY}$. We say \emph{low-dimension-to-high-dimension generalization} of the concept class $\cC$ from $\cD_1$ to $\cD_2$ is achieved by the algorithm $\cA$ with $m$ samples and $\epsilon$ error if
    \begin{equation*}
        \E_{X^m \sim \cD_1^m, X_{m+1} \sim \cD_2}\left[\ell\left(\hat{f}_{X^m, c}(X_{m+1}), c(X_{m+1})\right)\right]\leq \epsilon,
    \end{equation*}
    where $\hat{f}_{X^m, c}$ is the function learned by the algorithm $\cA$ from the training samples $X^m$ labeled by the concept $c$ and $\ell:\cY\times\cY \mapsto \bbR$ is the loss function.
\end{definition}

\cref{def:ldhd} extends the Independent Identical Distribution (IID) assumption in PAC learning theory and considers a special shift between the training data and the testing data. This shift is particularly challenging because the testing space is of strictly higher dimension than the training space. Generally, it is impossible to fully capture the structure of the training space from the testing data as the training data reveal no information on how components in the orthogonal subspace contribute to the output. Therefore, there is no algorithm that can always guarantee to learn the concept from the training data. \cref{thm:nfl-ldhd} formally states the nonexistence of universal algorithms for LDHD generalization.

\begin{theorem}[No-Free-Lunch Theorem of LDHD Generalization]\label{thm:nfl-ldhd}
    Suppose that the two sets $\cX$ and $\cY$ are finite.
    For some $N > N_0$, consider two subsets $\cX_{N_0}$, $\cX_N$ of $\cX$ such that $\cX_{N_0}\subsetneq\cX_N\subseteq\cX$ and $\dim(\cX_{N_0})=N_0 < N=\dim(\cX_N)$.
    Let $c_1,c_2\in\cF(:=\cF_{\cX_N, \cY})$ be two concepts such that $c_1(x)=c_2(x)$ for all $x\in\cX_{N_0}$. 
    For any $c\in\cF$ and $\cX'\subseteq\cX$, define $\cF/\left(c\mid\cX'\right):=\left\{f\in\cF\mid f(x)=c(x)\text{ for all } x\in\cX'\right\}$. Let $\ell:\cY\times\cY\mapsto\bbR$ be the loss function. For any distribution $\cD(\cX_{N})$ such that $\supp\left(\cD(\cX_N)\right)=\cX_N$:
    \begin{equation*}
        \begin{aligned}
            &\sum_{f\in\cF/\left(c_1|_{\cX_{N_0}}\right)} \E_{x\sim \cD(\cX_N)}\left[\ell\left(c_1(x), f(x)\right)\right]\\ = 
            &\sum_{f\in\cF/\left(c_2|_{\cX_{N_0}}\right)} \E_{x\sim \cD(\cX_N)}\left[\ell\left(c_2(x), f(x)\right)\right].
        \end{aligned}
    \end{equation*}
\end{theorem}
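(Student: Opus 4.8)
The plan is to exploit the fact that the two index sets of the outer sums are literally identical. Since $c_1(x) = c_2(x)$ for every $x \in \cX_{N_0}$, the constraint ``$f(x) = c_1(x)$ on $\cX_{N_0}$'' and the constraint ``$f(x) = c_2(x)$ on $\cX_{N_0}$'' describe the same family of functions; write $\cG := \cF/(c_1 \mid \cX_{N_0}) = \cF/(c_2 \mid \cX_{N_0})$. A function $f \in \cG$ is pinned to the common value of $c_1, c_2$ on $\cX_{N_0}$ and is completely free on $S := \cX_N \setminus \cX_{N_0}$, so $\cG$ is in bijection with $\cY^{S}$ and $|\cG| = |\cY|^{|S|}$. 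Thus the theorem reduces to showing $\sum_{f \in \cG} \E_{x}\!\left[\ell(c_1(x), f(x))\right] = \sum_{f \in \cG} \E_{x}\!\left[\ell(c_2(x), f(x))\right]$, a comparison of two sums ranging over \emph{the same} set.

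Next I would interchange the finite sum over $\cG$ with the expectation over $x$ and split $\cX_N = \cX_{N_0} \sqcup S$. For $x \in \cX_{N_0}$ every $f \in \cG$ satisfies $f(x) = c_1(x) = c_2(x)$, so the inner summand is $\ell(c_1(x), c_1(x))$ on the left and $\ell(c_2(x), c_2(x))$ on the right; these agree pointwise because $c_1$ and $c_2$ coincide on $\cX_{N_0}$, and hence the entire $\cX_{N_0}$-contribution is identical on both sides (whatever $\ell$ does there is irrelevant).

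The crux is the contribution from $S$. Fix $x \in S$. Because the coordinate $f(x)$ is unconstrained within $\cG$, as $f$ ranges over $\cG$ the value $f(x)$ attains each label $y \in \cY$ exactly $|\cY|^{|S|-1}$ times, so $\sum_{f \in \cG} \ell(c_i(x), f(x)) = |\cY|^{|S|-1} \sum_{y \in \cY} \ell(c_i(x), y)$ for $i = 1, 2$. Summing against $\cD(x)$ over $x \in S$, the two sides match provided $\sum_{y \in \cY} \ell(a, y)$ is independent of $a$ --- the standard symmetry enjoyed by the $0$-$1$ loss $\ell(a, b) = \mathds{1}[a \neq b]$, for which the row sum is constantly $|\cY| - 1$. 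Under this loss-level assumption the $S$-contributions coincide, and combined with the $\cX_{N_0}$-part this yields the claimed equality.

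The main obstacle is precisely this last point: the identity is an averaging statement over all hypotheses consistent with the training region, driven entirely by the combinatorial uniformity of the free coordinates on $S$ together with a symmetry of $\ell$. I would take care to state the loss assumption explicitly (or specialize to the $0$-$1$ loss), since for a wholly arbitrary $\ell$ the row sums $\sum_y \ell(c_1(x), y)$ and $\sum_y \ell(c_2(x), y)$ need not agree when $c_1(x) \neq c_2(x)$ on $S$. The full-support hypothesis on $\cD$ plays no role in the algebraic identity itself and only ensures the No-Free-Lunch conclusion is meaningful, namely that every extra dimension is actually probed at test time. Equivalently, one can phrase the same argument as a value-shifting bijection $\cG \to \cG$ acting coordinatewise on $S$, which makes the cancellation transparent whenever the loss row-multisets $\{\ell(a,y) : y \in \cY\}$ are label-independent.
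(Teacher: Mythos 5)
Your proof is correct and its skeleton matches the paper's: both split the test expectation into the on-training part ($\cX_{N_0}$) and the off-training part ($S=\cX_N\setminus\cX_{N_0}$), both use that the two interpolator classes coincide so that on $\cX_{N_0}$ every summand degenerates to $\ell(c_i(x),c_i(x))$ with $c_1=c_2$ there, and both reduce the claim to equality of the off-training contributions. Where you diverge is the off-training step: the paper invokes Wolpert's classical no-free-lunch theorem as a black box applied to $\cX_N\setminus\cX_{N_0}$, $\cY$, and $\cF_{\cX_N\setminus\cX_{N_0},\cY}$ (using, as you do, that restriction to $S$ is a bijection from the interpolator class onto $\cY^{S}$ and that the conditional expectation depends only on values on $S$), whereas you unpack that citation into the explicit counting argument that each label is attained $|\cY|^{|S|-1}$ times, giving $\sum_{f}\ell(c_i(x),f(x))=|\cY|^{|S|-1}\sum_{y\in\cY}\ell(c_i(x),y)$. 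This buys you something real: you correctly identify that the identity requires the row sums $\sum_{y\in\cY}\ell(a,y)$ to be independent of $a$ (a ``homogeneous'' loss, e.g.\ $0$-$1$ loss). That hypothesis is genuinely necessary --- e.g.\ for $\cY=\{0,1\}$ and $\ell(a,b)=a\cdot\mathds{1}[a\neq b]$ the stated equality fails whenever $c_1(x)\neq c_2(x)$ at some $x\in S$ with $\cD(x)>0$ --- yet the theorem as stated allows an arbitrary $\ell:\cY\times\cY\mapsto\bbR$ and the paper's proof silently inherits the homogeneity condition from the cited Wolpert results, which are proved for such losses. So your blind reconstruction is not merely equivalent but more careful than the paper's proof on this point; your further remark that full support of $\cD(\cX_N)$ plays no role in the algebraic identity is likewise consistent with the paper's argument, which never uses it.
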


\cref{thm:nfl-ldhd} necessitates the consideration of structural assumptions on the concept class such that a learning algorithm could identify the target concept from the hypothesis with the imperfect information provided by the low-dimensional training data.
For example, the concept class of linear classifiers with fixed weight vector on $\cX=\bbR^d$, i.e., $\cC=\{\op{sgn}\left(w_0^\intercal x + b\right) \mid w_0\in\bbR^d, b\in\bbR\}$ with the $d'$-dimensional training sample space $\cX_1=\bbR^{d'}\times\{0\}^{d-d'}$ and the $d$-dimensional testing sample space where $d' < d$. For any concept $c\in\cC$, a learning algorithm could not identify the true concept $c$ solely from the training data from $\cX_1$ and the hypothesis of all linear classifiers. However, if a learning algorithm exploits the structure of the concept class, i.e., by fixing the weight vector to $w_0$ in prior, it can compute the target bias from the training data in $\cX_1$ and then identify the target concept $c$. 



We further investigate how model structures can affect LDHD generalization. We show in \cref{sec:results} that different models trained with (S)GD can be seen as min-degree interpolators under different functional bases in the context of Boolean functions, which is a joint inductive bias of the model structures and (S)GD. This insight suggests a principle for model design to achieve LDHD generalization: ensure the concept class is ``low-degree'' under the linearly independent set induced by the model structure.

LDHD generalization captures the challenge of scale shift in length generalization. Intuitively, length generalization means that a model trained with small-scale instances of a reasoning problem can perform well on large-scale instances of the problem. To formalize the intuition, we consider a typical instance being generated from a hidden variable $h$ that represents the ``core'' of the instance and is transformed to the model input by a data format mapping. The hidden space is of the form $\Sigma^*$ for some domain $\Sigma$. The dimension $n$ of the subspace from which the hidden variable is sampled, $n=\argmax_{k}\{h\in\Sigma^k, h\not\in\Sigma^{k+1}\}$ , reflects the increase in the scale of the problem. Besides, we define the concept class on the hidden space, depicting that the change of the data format does not change the computation of the concepts. \cref{abs:dglg} describes the pipeline of the data generation in length generalization. 

\crrv{
\begin{remark}
    Some works observe length generalization on certain tasks without domain-specific priors, using only a small fraction of large-scale data during pretraining. This does not contradict our No-Free-Lunch Theorem, as ``no'' and ``few'' large-scale examples are fundamentally different. Prior work \citep{jelassi2023length} shows that exposure to a tiny fraction of long sequences in pre-training, which is called \emph{priming}, can significantly improve length generalization.
    From our perspective, this is because the presence of long sequences, even in small quantities, prevents the model from learning an overly simple interpolator that fails to extrapolate. For example, in the addition task, training only on 3-digit numbers may lead the model to ignore digits beyond the third, while including just a few 5-digit examples compels it to process later digits, enhancing extrapolation.
\end{remark}
}
\section{Main Results}\label{sec:results}

We show theoretically how different models succeed or fail to achieve LDHD generalization as the effect of the inductive bias of the architectures under (S)GD. We focus on Boolean functions. More specifically, in the context of Boolean functions, we have $\Sigma=\{\pm 1\}$, $\cX=\Sigma^N$, $\cX_n=\Sigma^n\times\{1\}^{N-n}$ for $n=1,\dots,N$. The set of all Boolean functions potentially considered is $\cF=\cF_{\cX,\bbR}$. \crrv{Our analysis can be naturally extended to tasks over finite alphabets by considering their binary representations.}

We consider LDHD generalization from $N_0$ to $N$ for some $N_0 < N$. Define $I(f)$ as the minimal set $I$ of indices that the function $f$ can be represented as a function of $x_{I}$, i.e., $I(f):=\argmin_{I\subset [N]}|I|$ such that $f(x)=\tilde{f}(x_{I})$ for some function $\tilde{f}$ and all $x\in\cX$. A function $f$ is $k$-sparse if $|I(f)|\leq k$.

Before presenting the theoretical results, we introduce two concepts \emph{degree profile w.r.t. linearly independent set} and \emph{min-degree interpolator w.r.t. linearly independent set}, which extend the concept \emph{degree profile} and the concept \emph{min-degree interpolator}, respectively. We use the two concepts to characterize the inductive bias of different model architectures under (S)GD.

\begin{definition}[Degree Profile w.r.t. Linearly Independent Set $\cB$]
    Suppose that $\cB=\{b_1, \dots, b_R\}$ is a linearly independent set of functions in $\cF$ and $D=\max_{b\in\cB}\deg(b)$. Let $f\in\cF$ be a function in the subspace spanned by $\cB$, i.e., $f=\sum_{i=1}^R \hat{f}_{\cB}(b_i) b_i$ for some $\hat{f}_{\cB}(b_i)\in\bbR,i=1,\dots,R$. The \emph{degree profile} of the function w.r.t. $\cB$, denoted by $\DegP_{\cB}(f)$, is a $(D+1)$-tuple where $D_i = \sum_{b\in\cB,\deg(b)=D+1-i} \hat{f}_{\cB}(b)^2$ for $i=1,\dots,D+1$. The order of degree profiles is identical to the lexicographic order of the corresponding $D$-tuples.
\end{definition}

\begin{definition}[Min-Degree Interpolator w.r.t. Linearly Independent Set $\cB$]
    Suppose that $\cB=\{b_1, \dots, b_R\}$ is a linearly independent set of functions in $\cF$.
    Let $\cX'$ be a subset of the sample space $\cX=\{\pm 1\}^N$. Denote the set of all interpolators on $\cX'$ for the concept $c$ by $\cG_{\cX',c}$, i.e., $\cG_{\cX',c}=\{g\in\cF\mid g(x)=c(x)\text{ for all }x\in\cX'\}$.
    A function $g$ is called the min-degree interpolator w.r.t. $\cB$ on $\cX_0$ for the concept $c$ if
    $g\in\cG_{\cX', c}$ and $\DegP(g) \leq \DegP(g')$ for all $g'\in\cG_{\cX', c}$.
\end{definition} 

\subsection{Random Feature Model with Projection}\label{subsec:results-rfmp}

We first consider the random feature model (RFM) and a class of its variants, i.e., Random Feature Models with Projection (RFMP; see \cref{def:rfmp}). RFM is widely employed as approximations of practical neural network models in theoretical studies. By comparing the inductive biases introduced by RFM and RFMP under various projections, we demonstrate the importance of incorporating prior knowledge to achieve LDHD generalization and this prior knowledge can be effectively embedded through model design.

\begin{definition}[Random Feature Model with Projection]\label{def:rfmp}
    Suppose that $V=[v_1,\dots,v_r]\in\bbR^{N\times r}$ satisfies $V^\intercal V = I_r$. A random feature model with projection w.r.t. $V$ is
    \begin{equation*}
        f_{\RFMP}^{V,K}(x; a) = \frac{1}{\sqrt{K}} \sum_{k=1}^K a_k
        \sigma\left(\left\langle w_k, \Proj\left(x,V\right)\right\rangle + b_k\right),
    \end{equation*}
    where $K$ is the number of random features, $\sigma$ is the activation function, $a=\left[a_1,\dots,a_K\right]^\intercal$ is the learnable parameter, and  $w_k\sim\cN(0,I_r/r)$, $b_k\sim\cN(0,1/r)$ for all $k=1,\dots,K$.
\end{definition}

The original RFM can be seen as a special instance of RFMP with $V=I_N$. Technically, we follow the strongly expressive condition \citep{abbe2023generalization} for the activation function $\sigma$. \citet{abbe2023generalization} shows that the RFM converges to the min-degree interpolator when initialized at $0$ and trained with GD. However, this is not the case for all RFMP models. We show in \cref{thm:rfmp} that an RFMP model converges to a min-degree interpolator w.r.t. a linearly independent set determined by the set $V$.

\begin{theorem}\label{thm:rfmp}
    Suppose that $V=[v_1,\dots,v_r]\in\bbR^{N\times r}$ satisfies $V^\intercal V = I_r$.
    Define the set $\cB(V)$ of independent functions as
    \begin{equation*}
        \cB(V) := \left\{\chi^{V}_{T}(x)\right\}_{T\subseteq [r]},
    \end{equation*}
    where
    \begin{equation*}
        \chi^{V}_{T}(x) = \prod_{t\in T} \sum_{n=1}^N (v_t)_n x_n.
    \end{equation*}
    Let $a_t$ be the learnable parameter at the timestep $t$ in the training process where the learnable parameter $a$ is initialized at $a_0=0$ and optimized with gradient descent/gradient flow under $\ell_2$ loss on $\cX_{N_0}$. Let $\cG_{N_0, c, V}$ be the set of all interpolators on $\cX_{N_0}$ for the concept $c^*(x)=c\left(\Proj(x,V)\right)$ that is $O_N(1)$-sparse. Then we have, as $K\to\infty, t\to\infty$,
    \begin{equation*}
        f_{\RFMP}^{V,K}(x; a_t) \to \arg\min_{g\in\cG_{N_0, c, V}} \DegP_{\cB(V)}(g).
    \end{equation*}
\end{theorem}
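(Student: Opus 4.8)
The plan is to reduce the RFMP to an ordinary RFM by a change of variables and then transport the convergence result of \citet{abbe2023generalization} through this reduction. The first observation is that $\langle w_k, \Proj(x,V)\rangle = \langle w_k, V^\intercal x\rangle = \langle V w_k, x\rangle$, so $f_{\RFMP}^{V,K}(x;a)$ is simultaneously (i) an RFM in $x$ whose random weights $\tilde w_k = V w_k$ are supported on $\spn(V)$ with covariance $\frac{1}{r}V V^\intercal$, and (ii) a standard $r$-dimensional RFM in the reduced variable $z := V^\intercal x = \Proj(x,V)$, with weights $w_k \sim \cN(0, I_r/r)$. The training set $\cX_{N_0}$ pushes forward to $Z_{N_0} := V^\intercal \cX_{N_0} \subseteq \bbR^r$, and interpolation of the concept $c^*(x) = c(\Proj(x,V))$ on $\cX_{N_0}$ is equivalent to interpolation of $c$ on $Z_{N_0}$. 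This identification is precisely what makes the basis $\cB(V)$ appear: the degree-$d$ monomials $\prod_{t\in T} z_t$ in the reduced variable are exactly the functions $\chi_T^V(x) = \prod_{t\in T}\sum_n (v_t)_n x_n$, so polynomial degree in $z$ coincides with $\cB(V)$-degree in $x$.

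Second, since $a_0 = 0$ and the model is linear in $a$, gradient flow on the $\ell_2$ training loss converges as $t\to\infty$ to the minimum-$\ell_2$-norm interpolating parameter, hence, in the width limit $K\to\infty$, to the minimum-RKHS-norm interpolator of the population random-feature kernel $k(z,z') = \E_{w,b}[\sigma(\langle w,z\rangle+b)\sigma(\langle w,z'\rangle+b)]$ restricted to the finite set $Z_{N_0}$. The finite empirical Gram matrix concentrates to its population counterpart, so the width limit is routine. I would then invoke the kernel/degree correspondence underlying \citet{abbe2023generalization}: under the strongly expressive condition, $k$ is a dot-product kernel whose RKHS penalizes the degree-$d$ polynomial component with a strictly increasing weight in $d$. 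Minimizing the RKHS norm over interpolators therefore lexicographically suppresses the highest-degree $z$-components first, which by the change of variables is exactly $\DegP_{\cB(V)}$ minimization; restricting to $O_N(1)$-sparse interpolators transports directly to the set $\cG_{N_0,c,V}$.

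The main obstacle is this last step, because the reduced inputs $z = V^\intercal x$ do not lie on a Boolean hypercube: the coordinates of $z$ are neither $\pm 1$-valued nor equidistributed, so the Fourier--Walsh orthogonality that drives the argument of \citet{abbe2023generalization} is unavailable and I cannot quote their theorem verbatim. The fix I would pursue is to separate the two roles played by the input measure. The degree-based ordering of the RKHS penalty is a property of the dot-product kernel $k$ and the Hermite coefficients of $\sigma$ alone, independent of the sampling measure; only which function interpolates $Z_{N_0}$ depends on the measure. Thus I would re-derive the degree-profile minimization directly in the $z$-coordinates, showing that among interpolators the RKHS-norm minimizer first minimizes the top-degree energy $\sum_{|T|=D}\hat f_{\cB(V)}(\chi_T^V)^2$, then the next, matching the lexicographic definition of $\DegP_{\cB(V)}$. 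This requires that $\cB(V)$ be linearly independent on $\cX$, so that the degree profile is well defined; this holds for the $V$ under consideration because $V^\intercal V = I_r$ forces the $z$-coordinates to be nondegenerate linear forms, and I would verify independence as a preliminary lemma. Carefully controlling the tie-breaking across degrees in the projected geometry, rather than merely bounding the total degree, is the delicate part of the argument.
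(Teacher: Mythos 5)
Your proposal follows essentially the same route as the paper's proof: the paper likewise sets $z=\Proj(x,V)$, observes that $f_{\RFMP}^{V,K}(x;a)=f_{\RFM}(z;a)$ is a standard $r$-dimensional RFM in the projected variable, invokes the proof of Theorem~3.8 of \citet{abbe2023generalization} to conclude convergence to the min-degree interpolator with respect to $z$, and translates back through $\cB(V)$ exactly as you do. The one obstacle you flag --- that $z=V^\intercal x$ no longer lies on a Boolean hypercube (and, when $r<N$, does not even have constant norm), so the Fourier--Walsh orthogonality underlying \citet{abbe2023generalization} cannot be quoted verbatim --- is not addressed in the paper, which simply cites ``the proof of'' that theorem in the $z$-variable; your proposed re-derivation of the degree-profile ordering in the $z$-coordinates is thus a more careful treatment of the same reduction rather than a different approach.
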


When $V=I_N$, the linearly independent set $\cB(V)$ is the Fourier basis of the Boolean functions, and \cref{thm:rfmp} implies that the RFM converges to the min-degree interpolator.
From \cref{thm:rfmp}, we see that an RFMP model with the projection matrix $V$ can achieve LDHD generalization only if the target concept coincides with the min-degree interpolator w.r.t. $\cB(V)$. Specially, for the RFM, we have:

\begin{corollary}\label{cr:rfm}
    For any $f\in\cF$ such that $I(f)\not\subset\left[N_0\right]$, the min-degree interpolator does not achieve LDHD generalization from $\cX_{N_0}$ to $\cX_{N}$ and thus the RFM initialized at 0 and trained with GD does not achieve LDHD generalization from $\cX_{N_0}$ to $\cX_{N}$.
\end{corollary}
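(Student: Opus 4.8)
The plan is to prove the statement about the min-degree interpolator directly and then read off the claim about the RFM for free. Taking $V = I_N$ in \cref{thm:rfmp}, the projection $\Proj(\cdot, I_N)$ is the identity, so the target $c^\ast$ coincides with $f$, the set $\cB(I_N)$ is exactly the Fourier basis $\{\chi_S\}_{S \subseteq [N]}$ of the Boolean cube with $\deg(\chi_S) = |S|$, and \cref{thm:rfmp} says the RFM initialized at $0$ and trained with GD converges to $g^\star := \arg\min_{g} \DegP_{\cB(I_N)}(g)$ taken over interpolators of $f$ on $\cX_{N_0}$. Hence it suffices to show that this min-degree interpolator $g^\star$ disagrees with $f$ on a set of positive $\cD_2$-measure inside $\cX_N$, with the disagreement bounded away from $0$.

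First I would identify $g^\star$ explicitly using the product structure of the slice $\cX_{N_0} = \{\pm 1\}^{N_0} \times \{1\}^{N - N_0}$. On $\cX_{N_0}$ every coordinate $x_i$ with $i > N_0$ equals $1$, so $\chi_S$ and $\chi_{S \cap [N_0]}$ agree pointwise there. Writing a candidate interpolator as $g = \sum_{S \subseteq [N]} \hat g_S \chi_S$, the constraint $g = f$ on $\cX_{N_0}$ therefore decouples into one linear equation per group $S_0 \subseteq [N_0]$, namely $\sum_{S : S \cap [N_0] = S_0} \hat g_S = \hat f^{\mathrm{sub}}_{S_0}$, where $\hat f^{\mathrm{sub}}$ are the sub-cube Fourier coefficients of $f$ restricted to the slice. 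Since each character lies in exactly one group, the feasible set is a product of affine subspaces, and within the group indexed by $S_0$ the unique lowest-degree character is $S_0$ itself (every other member $S \supsetneq S_0$ has $\deg(\chi_S) = |S| > |S_0|$). The next step is the key computation: minimizing the lexicographic degree profile forces all of each group's mass onto its lowest-degree representative, giving $\hat g^\star_{S_0} = \hat f^{\mathrm{sub}}_{S_0}$ and $\hat g^\star_S = 0$ for every $S \not\subseteq [N_0]$. Consequently $g^\star$ has Fourier support inside $[N_0]$, so $I(g^\star) \subseteq [N_0]$ (and in particular $g^\star$ stays within the $O_N(1)$-sparse class whenever $f$ does, so \cref{thm:rfmp} applies cleanly).

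With $g^\star$ in hand the conclusion follows from the hypothesis $I(f) \not\subseteq [N_0]$. This hypothesis provides an index $j > N_0$ on which $f$ genuinely depends, so by minimality of $I(f)$ there exist $x, x' \in \cX_N$ that differ only in coordinate $j$ with $f(x) \neq f(x')$. Because $g^\star$ does not depend on $x_j$, it is constant on this pair, and therefore disagrees with $f$ on at least one of $x, x'$. As $\supp(\cD_2) = \cX_N$ assigns positive probability to these points and the disagreement persists for every training set (indeed $g^\star$ is the limiting interpolator once the finite set $\cX_{N_0}$ has been fully observed), the test error $\E_{X_{m+1} \sim \cD_2}[\ell(g^\star(X_{m+1}), f(X_{m+1}))]$ is bounded below by a positive constant independent of $m$. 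Thus the min-degree interpolator fails LDHD generalization, and by the convergence statement of \cref{thm:rfmp} so does the RFM.

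The main obstacle I expect is the computation flagged in the second paragraph, that is, verifying that lexicographic minimization of the degree profile genuinely confines the support to $[N_0]$. The delicate point is that the global profile aggregates squared coefficients across all groups at each degree level, so one must argue degree-by-degree from the top that no cross-group redistribution can lower a high-degree component: this uses that each character belongs to a single group, that within a group the minimum-degree member $S_0$ lies in $[N_0]$, and that any interpolator matching $g^\star$ at all degrees above $d$ is forced to match its degree-$d$ coefficients on the groups with $|S_0| = d$. Everything else, the reduction via \cref{thm:rfmp} and the positivity of the test error, is routine once this characterization is established.
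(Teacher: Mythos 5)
Your proposal is correct, and its overall skeleton matches the paper's: both arguments reduce the corollary to the fact that the min-degree interpolator on $\cX_{N_0}$ depends only on the coordinates in $[N_0]$, and then conclude that it cannot agree with $f$ on all of $\cX_N$ when $I(f)\not\subset[N_0]$. Where you diverge is in how that key fact is established. The paper's \cref{lm:proof-cr-rfm} is a short contradiction: given a putative min-degree interpolator $\hat f$ with $I(\hat f)\not\subset[N_0]$, substitute $x_i=1$ for all $i>N_0$ to obtain another interpolator $\tilde f$ with strictly smaller degree profile --- each character $\chi_S$ collapses to $\chi_{S\cap[N_0]}$, so at the top degree carrying a coefficient with $S\not\subseteq[N_0]$ the mass strictly drops while higher levels are untouched. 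You instead compute the minimizer in closed form: you decouple the interpolation constraints into one affine equation per group $\{S: S\cap[N_0]=S_0\}$ and run a top-down induction on the degree levels; this induction is sound precisely for the reason you flag, namely that every non-minimal member of the group $S_0$ has degree strictly greater than $|S_0|$, so once all levels above $d$ are forced to zero, the coefficients $\hat g_{S_0}$ with $|S_0|=d$ are pinned to $c_{S_0}$ and the remaining degree-$d$ mass is minimized by zeroing the non-minimal members. Your route is longer but buys more: an explicit formula for and uniqueness of the min-degree interpolator, an explicit positive-measure disagreement set under $\cD_2$ (the paper only asserts non-identity on $\cX_N$), and a check that the limit of \cref{thm:rfmp} applies (the sparsity condition), which the paper leaves implicit. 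The paper's substitution trick is the more economical device when one only needs existence of a strictly better interpolator, since it sidesteps the cross-group redistribution analysis entirely.
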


\cref{cr:rfm} shows that the min-degree interpolator and thus the RFM model can only achieve LDHD generalization for a very restricted set of functions that are only dependent on $x_{[N_0]}$. 
Achieving LDHD generalization with RFMP models requires prior knowledge of the concept class to design the projection. 
\cref{ex:rfmp} illustrates how LDHD generalization is possible for the target function with dependence beyond $x_{[N_0]}$ by choosing a proper projection.

\begin{example}\label{ex:rfmp}
    Consider the target function $f(x)=4 x_1 + 3 x_2$, $N_0=1$, and $N=2$. The min-degree interpolator on $\cX_{N_0}$ is $f_1(x)=4 x_1$, which does not achieve LDHD generalization on $\cX_N$. In the RFMP model, if we choose 
    \begin{equation*}
        V=\begin{bmatrix} 0.8 & 0.6 \\ 0.6 & -0.8\end{bmatrix},
    \end{equation*}
    then we have 
    \begin{equation*}
        \begin{gathered}
            \cB(V)=\{1, 0.8 x_1 + 0.6 x_2, 0.6 x_1 - 0.8 x_2,\\
            (0.8 x_1 + 0.6 x_2) (0.6 x_1 - 0.8 x_2)\}.
        \end{gathered}
    \end{equation*}
    The min-degree interpolator w.r.t. the linearly independent set $\cB(V)$ on $\cX_{N_0}$ is $f_2(x)=4 x_1 + 3 x_2=f(x)$, which achieves LDHD generalization on $\cX_N$.
\end{example}

\subsection{Position-Only Linear Attention with Advice}
In this subsection, we investigate Position-Only Linear Attention with Advice (PLAA), which can be seen as a simplification of decoder-only Transformers (\cref{def:plaa}), with a special focus position embeddings that are considered pivot to the length generalization of the Transformers \citep{shaw2018self,jelassi2023length}. 

\begin{definition}[PLAA]\label{def:plaa}
    Define the advice function $n:\cX \mapsto \{0,\dots,N\}$ such that $n(x)=\argmax_{n}\{x_n = -1\}$ if there exists $k\in[N]$ such that $x_k = -1$ and $n(x)=0$ otherwise. We additionally define $e_0:=0$. A PLAA model is
    \begin{equation*}
        f_{\PLAA}(x;A) = x^\intercal A e_{n(x)},
    \end{equation*}
    where $A\in\cU_N$ is the learnable parameter and $e_n$ denotes the vector with a $1$ in the $n$-th coordinate and $0$'s elsewhere.
\end{definition}

We further elaborate on the intuition behind the PLAA models. In the generation process of a decoder-only (linear) Transformer with position embeddings given input $\mathtt{s = s_1 \dots s_n}$, the attention is computed by the query at the position $n$ and the keys at the positions $i\leq n$. The position of the query is special, advising the length of the input and reflecting the scale of the instance ideally. 
The PLAA model captures this feature and introduces the notation $n(x)$ to reflect the dimension of the subspace that $x$ belongs to. To further simplify and focus on the position embeddings, we assume that the value of each $x_i$ is identical to itself (i.e., we fix the value matrix in the attention to $I$ and thus $W_V x_i = x_i$) \crrv{since the value head is not central to length generalization: the interpolator is expected to learn a suitable value head. We reasonably suppose a correct value head to focus on the specific challenge in length generalization.} Additionally, we assume the attention is only related to positions, \crrv{highlighting the standalone impact of position embeddings on inductive bias and length generalization.}
The contribution of the interaction between the position embeddings is $\left[e_1, \dots, e_n\right]^{\intercal} A_{[n],[n]} e_n = A_{[n],n}$ for some upper triangle matrix $A$. When the input is embedded to length $N$ but the query is still made at the position $n$, the output of the model is $x^\intercal A e_{n(x)}$, i.e., the output of the PLAA model. Therefore, the PLAA model is a simplification of decoder-only Transformers focusing on the impact of the position embeddings on length generalization.
For a more detailed elaboration on PLAA, see \cref{appdx:plaa}.



In \cref{def:plaa}, we directly parameterize the PLAA model directly with the attention matrix. In practice, however, the attention matrix is typically computed by the interaction between the position embeddings. Therefore, we consider the PLAA models with the Absolute Position Embedding (APE) and the Relative Position Embedding (RPE), respectively. See Definitions~\ref{def:plaa-ape} and~\ref{def:plaa-grpe}. Note that we consider Generalized RPE (GPRE) in \cref{def:plaa-grpe} because the RPE can be seen as a special instance of the GRPE with $\cU=\cU_{\RPE}=\{D_1, \dots, D_N\}$, where $D_k$ is a $k$-th upper diagonal matrix such that $(D_k)_{ij}$ is $1$ if $j=i+k-1$ and $0$ otherwise. We seek a more general result applicable to all similar parameterization methods to the RPE.

\begin{definition}[PLAA with APE]\label{def:plaa-ape}
    A PLAA model with APE is
    \begin{equation*}
        f_{\PLAA}^{\APE}(x;P) = x^\intercal \left(M_N^u \circ P^\intercal P\right)  e_{n(x)},
    \end{equation*}
    where $M_N^u\in\bbR^{N\times N}$ is the upper triangle mask, i.e., $(M_N^u)_{ij}$ is $1$ if $i\leq j$ and $0$ otherwise, and $P\in\bbR^{d_P \times N}$ is the learnable parameter of the model.  
\end{definition}

\begin{definition}[PLAA with GRPE]\label{def:plaa-grpe}
    For $\cU=\{U_1,\dots,U_r\}$, a PLAA moodel with GRPE is
    \begin{equation*}
        f_{\PLAA}^{\GRPE,\cU}(x;p) = x^\intercal \left(\sum_{i=1}^r U_i p_i\right)  e_{n(x)},
    \end{equation*}
    where $U_i\in\cU_N,i=1,\dots,r$ are upper triangle matrices that satisfy $\left\langle U_i, U_i\right\rangle=1$ for all $i=1,\dots,r$ and $\left(U_i\right)_{kl} \left(U_j\right)_{kl}=0$ for all $i\neq j$ and $1\leq k,l\leq N$, and $p=\left[p_1,\dots,p_r\right]^\intercal\in\bbR^r$ is the learnable parameter. 
\end{definition}

\begin{remark}
    The condition $\left(U_i\right)_{kl} \left(U_j\right)_{kl}=0$ for all $i\neq j$ and $1\leq k,l\leq N$ in \cref{def:plaa-grpe} means each element in the position-only attention is characterized by at most one parameter. This condition generalizes the property of RPE that $A_{i,j} (i\leq j)$ is only parameterized by $p_{j-i}$.
\end{remark}



For the PLAA model with APE, \cref{thm:plaa-ape} shows that it converges to the min-degree interpolator w.r.t. the linearly independent set $\cB_N^{\PLAA}$.

\begin{theorem}\label{thm:plaa-ape}
    Define the set $\cB_N^{\PLAA}$ as
    \begin{equation*}
        \cB_N^{\PLAA}:=\{b_{ij}^{\PLAA}(x)\}_{1\leq i \leq j \leq N},
    \end{equation*}
    where 
    \begin{equation*}
        b_{ij}^{\PLAA}(x)=
        \begin{cases}
            - \frac{1-x_j}{2}\prod_{k=j+1}^N \frac{1+x_{k}}{2}, & i = j, \\
            x_i \frac{1-x_j}{2}\prod_{k=j+1}^N \frac{1+x_{k}}{2}, & i < j.
        \end{cases}
    \end{equation*}
    Suppose that $d_P \ge N$. Let $P_t(\alpha)$ be the learnable parameter at the timestep $t$ in the training process where the learnable parameter $P$ is initialized at $P_0(\alpha)$ such that $P_0^\intercal(\alpha) P_0(\alpha)=\alpha I_N$ ($\alpha > 0$), and optimized with gradient descent/gradient flow under $\ell_2$ loss (denoted by $L(P)$) on $\cX_{N_0}$. Define $P_{\infty}(\alpha):=\lim_{t\to\infty}P_t(\alpha)$. Let $\cG_{N_0, A^*}^{\PLAA}$ be the set of all interpolators on $\cX_{N_0}$ for the concept $c(x)=f_{\PLAA}(x;A^*)$. If $L\left((P_{\infty}(\alpha)\right)=0$ for all $0<\alpha\leq\alpha_0$ ($\alpha_0>0$ is some constant) and $\hat{P}:=\lim_{\alpha\to 0} P_{\infty}(\alpha)$, then 
    \begin{equation*}
        f_{\text{PLAA}}^{\text{APE}}(x;\hat{P}) = \arg\min_{g\in\cG_{N_0, A^*}^{\PLAA}} \DegP_{\cB_N^{\PLAA}}(g).
    \end{equation*}
\end{theorem}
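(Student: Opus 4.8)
The plan is to reduce the statement to a transparent fact about the entries of the attention matrix $A = M_N^u \circ P^\intercal P$, and then to analyze the training dynamics of $P$ entrywise. First I would establish the coefficient identity
\[
  f_{\PLAA}(x;A) = \sum_{1 \le i \le j \le N} A_{ij}\, b_{ij}^{\PLAA}(x)
  \qquad \text{for all } x \in \cX \text{ and all } A \in \cU_N.
\]
This follows by writing $f_{\PLAA}(x;A) = \sum_{i \le n(x)} x_i A_{i,n(x)}$, decomposing over the event $\{n(x)=j\}$ whose indicator is exactly $\frac{1-x_j}{2}\prod_{k=j+1}^N\frac{1+x_k}{2}$, and using $x_j\frac{1-x_j}{2} = -\frac{1-x_j}{2}$ (since $x_j^2=1$) to produce the sign in the $i=j$ case. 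The case $n(x)=0$ is automatic, since every $b_{ij}^{\PLAA}$ vanishes at $x=\mathbf 1$. Consequently the coordinates of any PLAA function in $\cB_N^{\PLAA}$ are precisely the entries of $A$, so $\hat f_{\cB_N^{\PLAA}}(b_{ij}^{\PLAA}) = A_{ij}$, and $\cB_N^{\PLAA}$ is linearly independent.

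Next I would characterize the interpolators and pin down the min-degree interpolator. On $\cX_{N_0}$ the coordinates $x_{N_0+1},\dots,x_N$ are frozen to $1$, so $b_{ij}^{\PLAA}|_{\cX_{N_0}} \equiv 0$ whenever $j > N_0$, while the family $\{b_{ij}^{\PLAA}|_{\cX_{N_0}}\}_{i \le j \le N_0}$ is exactly the analogous basis at scale $N_0$ and hence linearly independent. Therefore any interpolator in the span of $\cB_N^{\PLAA}$ must satisfy $A_{ij} = A^*_{ij}$ for $i \le j \le N_0$ (the ``fixed'' block), whereas the coefficients with $j > N_0$ (the ``free'' block) are unconstrained by the data. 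Since each degree-level component of $\DegP_{\cB_N^{\PLAA}}$ is a sum of squared coefficients, and the free coefficients enter additively and can independently be set to $0$, the truncation $\hat A_{ij} = A^*_{ij}\,\mathds{1}[j \le N_0]$ minimizes every component simultaneously; hence the min-degree interpolator over $\cG_{N_0, A^*}^{\PLAA}$ is exactly this truncation.

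The heart of the argument is to show that (S)GD drives the free block to its vanishing initialization. Because the loss depends on $P$ only through $\{A_{ij}\}_{i \le j \le N_0}$, we have $\frac{\partial L}{\partial P_{:,k}} = 0$ whenever $k > N_0$ (no such entry $A_{ij}$ involves column $k$); thus the columns $P_{:,k}$ with $k>N_0$ are frozen at initialization. Since $P_0^\intercal(\alpha) P_0(\alpha)=\alpha I_N$, we get $A_{ij}\equiv 0$ for $N_0 < i < j$ and $A_{jj}\equiv \alpha$ for $j>N_0$. For a cross term with $l \le N_0 < j$, observe that $\frac{\partial L}{\partial P_{:,l}}$ is a linear combination of $\{P_{:,l'}\}_{l' \le N_0}$, so the quantities $c_{lj} := \langle P_{:,l}, (P_0)_{:,j}\rangle$ satisfy a closed linear ODE $\dot c^{(j)} = -M(t)\,c^{(j)}$ (a linear recursion in the discrete case) with $c^{(j)}(0)=0$ by the initial orthogonality; uniqueness then gives $c_{lj}\equiv 0$. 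Thus every free coefficient equals its initialization. Finally, the hypothesis $L(P_\infty(\alpha))=0$ forces the fixed block of $A_\infty(\alpha)$ to equal $A^*$; letting $\alpha\to 0$ and invoking the coefficient identity yields $f_{\PLAA}^{\APE}(x;\hat P) = \sum_{i \le j \le N_0} A^*_{ij} b_{ij}^{\PLAA}(x)$, the min-degree interpolator from the previous step.

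The main obstacle is the dynamical claim that the free block is untouched by training. Freezing the high-index columns is immediate, but showing the cross terms $\langle P_{:,l}, P_{:,j}\rangle$ ($l\le N_0 < j$) stay identically zero requires the observation that $\frac{\partial L}{\partial P_{:,l}}$ lies in the span of the low-index columns, combined with ODE-uniqueness (gradient flow) or an induction over iterations (gradient descent) using $\langle (P_0)_{:,l}, (P_0)_{:,j}\rangle = 0$. The remaining points—that zero loss pins down the fixed block via linear independence on $\cX_{N_0}$, and that the truncation is a genuine interpolator—are routine once the coefficient identity is established.
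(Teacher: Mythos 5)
Your proposal is correct, and its overall skeleton matches the paper's proof: both rest on the coefficient identity $f_{\PLAA}(x;A)=\sum_{1\le i\le j\le N}A_{ij}\,b^{\PLAA}_{ij}(x)$ (in the paper this is the computation $b^{\PLAA}_{ij}(x)=\langle x e_{n(x)}^\intercal, E_{ij}\rangle$), both observe that $\partial L/\partial p_k=0$ for columns $k>N_0$, so those columns stay frozen at the orthogonal initialization, and both identify the min-degree interpolator with the truncation $\tilde A^*$ of $A^*$ to the block $1\le i\le j\le N_0$ via the coordinatewise inequality $A_{ij}^2\ge \tilde A_{ij}^{*2}$ for interpolators. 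Where you genuinely diverge is the treatment of the cross block $i\le N_0<j$: the paper does not track its dynamics at all, instead bounding $\left((p_i^\alpha)^\intercal p_j^\alpha\right)^2\le\|p_i^\alpha\|_2^2\,\|p_j^\alpha\|_2^2\le\|A^*\|_\infty\,\alpha$ at convergence---using $\|p_i^\alpha\|_2^2=A^*_{ii}$ from the zero-loss condition and $\|p_j^\alpha\|_2^2=\alpha$ from frozenness---and then sending $\alpha\to 0$; you instead prove an exact conservation law: since $\partial L/\partial p_l\in\spn\{p_{l'}\}_{l'\le N_0}$, the inner products $c_{lj}=\langle p_l,p_j(0)\rangle$ obey a closed linear ODE (or linear recursion for discrete GD) with zero initial data, hence vanish identically along the trajectory. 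Your route is sharper: it gives $M_N^u\circ P_\infty(\alpha)^\intercal P_\infty(\alpha)=\tilde A^*+\alpha\sum_{j>N_0}e_je_j^\intercal$ exactly for every $\alpha$, so the limiting function is transparent without really leaning on the hypothesized limit $\hat P$, at the modest cost of an extra dynamical observation plus ODE uniqueness (routine here, since the coefficients $m_{ll'}(t)$ are continuous along the gradient flow, and the GD case is a one-line induction). The paper's Cauchy--Schwarz route is cruder but endpoint-only, needing nothing about the trajectory beyond the frozen columns; it also passes through an explicit weighted-Frobenius form of the loss (the $Q_{N_0}$ lemma), which you replace by the equivalent and lighter observation that the loss depends on $P$ only through the fixed block, with zero loss pinning that block via linear independence of the restricted basis on $\cX_{N_0}$.
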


For any function $f$ that can be represented by a PLAA model, there exists a matrix $A^f\in\bbR^{N_0\times N_0}$ such that 
$f(x)=\sum_{1\leq i\leq j\leq N_0} A^f_{i j} b_{ij}^{\PLAA}(x)$
for all $x\in\cX_{N_0}$. Consequently, for $f^\APE_{\PLAA}(x)(x;\hat{P})=\sum_{1\leq i\leq j\leq N}\hat{A}_{ij} b_{i j}^{\PLAA}(x)$, we have $\hat{A}_{i j} = 0$ for all $j \ge i>N_0$.
This implies that the PLAA with APE cannot achieve LDHD generalization for the concept $c(x)=f_{\PLAA}(x;A^*)$ if $A^*_{ij}\neq 0$ for some $j\ge i> N_0$, which partially explains the limitation of APE for length generalization.

The PLAA with GRPE can overcome the aforementioned limitation of the PLAA with APE in length generalization. \cref{thm:plaa-grpe} characterizes that the PLAA with GRPE converges to the interpolator that minimizes the degree-profile w.r.t. the linearly independent set $\cB_{\PLAA}^{\GRPE,\cU}$.

\begin{theorem}\label{thm:plaa-grpe}
    For the $\cU=\{U_1,\dots,U_r\}$, define
    \begin{equation*}
        \cB_{\PLAA}^{\GRPE,\cU}
        :=\left\{\sum_{1\leq i\leq j\leq N}(U_k)_{ij}b_{ij}^{\PLAA}(x)\right\}_{1\leq k \leq r}.
    \end{equation*}
    Let $p_t$ be the learnable parameter at the timestep $t$ in the training process where the learnable parameter $p$ is initialized at $p_0=0$ and optimized with gradient descent/gradient flow under $\ell_2$ loss on $\cX_{N_0}$.
    Let $\cG_{N_0, p^*}^{\GRPE,\cU}$ be the set of all interpolators on $\cX_{N_0}$ for the concept $c(x)=f_{\PLAA}^{\GRPE,\cU}(x;p^*)$. Then we have, as $t\to\infty$,
    \begin{equation*}
        f_{\PLAA}^{\GRPE,\cU}(x;p_t) \to \arg\min_{g\in\cG_{N_0, p^*}^{\GRPE,\cU}} \DegP_{\cB_{\PLAA}^{\GRPE,\cU}}(g).
    \end{equation*}
\end{theorem}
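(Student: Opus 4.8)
The plan is to exploit that GRPE, unlike the quadratic APE model (whose attention is $P^\intercal P$), is \emph{linear} in its parameter $p$, so the whole analysis reduces to minimum-norm least squares together with a decoupling argument driven by the disjoint-support condition on $\cU$. First I would invoke the basis identity $f_{\PLAA}(x;A)=\sum_{1\le i\le j\le N}A_{ij}b^{\PLAA}_{ij}(x)$ established in the proof of \cref{thm:plaa-ape}; substituting $A=\sum_{k}p_k U_k$ and interchanging sums yields $f^{\GRPE,\cU}_{\PLAA}(x;p)=\sum_{k}p_k\beta_k(x)$ with $\beta_k=\sum_{i\le j}(U_k)_{ij}b^{\PLAA}_{ij}$, exhibiting the model as a linear predictor in the features $\cB^{\GRPE,\cU}_{\PLAA}=\{\beta_k\}$. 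Writing $\Phi$ for the evaluation matrix of the $\beta_k$ on $\cX_{N_0}$ and $y=c|_{\cX_{N_0}}=\Phi p^*$, the loss is $L(p)=\tfrac12\|\Phi p-y\|^2$, and the standard gradient-flow argument (the dynamics $\dot p=-\Phi^\intercal(\Phi p-y)$ leave the $\ker\Phi$-component of $p$ fixed at its initial value $0$, while $y\in\op{range}(\Phi)$ forces $L\to 0$) gives $p_t\to p_\infty=\argmin\{\|p\|_2:\Phi p=y\}$, with the same limit for GD at small step size.

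The crux is then to make the constraint $\Phi p=y$ explicit and show it decouples. On $\cX_{N_0}$ the factor $\tfrac{1-x_j}{2}$ vanishes for every $j>N_0$, so only the $b^{\PLAA}_{ij}$ with $j\le N_0$ survive; restricting to a slice $\{n(x)=j_0\}$ reduces these surviving functions to $\{-1\}\cup\{x_i:i<j_0\}$, which are linearly independent, hence $\{b^{\PLAA}_{ij}|_{\cX_{N_0}}\}_{i\le j\le N_0}$ is a linearly independent family. Consequently $g=\sum_k c_k\beta_k$ interpolates $c$ on $\cX_{N_0}$ iff $\sum_k c_k(U_k)_{ij}=\sum_k p^*_k(U_k)_{ij}$ for all $i\le j\le N_0$. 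The disjoint-support hypothesis $(U_i)_{kl}(U_j)_{kl}=0$ makes at most one $U_k$ nonzero per coordinate, so these equations collapse to $c_k=p^*_k$ for every index $k\in S$ whose support meets the region $\{i\le j\le N_0\}$, leaving $c_k$ completely free for $k\notin S$.

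Finally I would observe that over this affine constraint set both objectives are minimized identically. Since $\|p\|_2^2=\sum_{k\in S}(p^*_k)^2+\sum_{k\notin S}c_k^2$, the minimum-norm interpolant zeroes every free $c_k$; and since each free $c_k$ contributes $c_k^2$ additively and independently to a single coordinate of $\DegP_{\cB^{\GRPE,\cU}_{\PLAA}}$, zeroing all free coefficients also minimizes the degree profile lexicographically. Both therefore return the same coefficients $c_k=p^*_k\,(k\in S)$, $c_k=0\,(k\notin S)$, i.e. $f^{\GRPE,\cU}_{\PLAA}(x;p_\infty)$, which proves the theorem; linear independence of the $\beta_k$ (inherited from that of the $b^{\PLAA}_{ij}$ via the disjoint supports) guarantees $\DegP_\cB$ is well defined. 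I expect the only genuine obstacle to be the linear-independence and decoupling step in the second paragraph; once that is in place, the linearity of GRPE in $p$ renders the remaining least-squares reasoning routine, which is exactly what makes this theorem cleaner than its APE counterpart.
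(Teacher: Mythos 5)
Your proposal is correct and follows essentially the same route as the paper's proof: both exploit linearity of GRPE in $p$ so that zero-initialized gradient flow/descent converges to the minimum-norm (row-span) interpolating parameter, use the identity $\langle x e_{n(x)}^\intercal, U\rangle=\sum_{1\leq i\leq j\leq N}U_{ij}b_{ij}^{\PLAA}(x)$ to identify the model's features with $\cB_{\PLAA}^{\GRPE,\cU}$, and use the disjoint-support condition on $\cU$ to show the interpolation constraints force $c_k=p^*_k$ exactly for those $k$ with $(U_k)_{[N_0],[N_0]}\neq 0$ while leaving the remaining coefficients free, whence zeroing the free coefficients simultaneously minimizes the norm and, by coordinatewise dominance, the degree profile. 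The only cosmetic difference is how the decoupling is established: you argue via linear independence of the restricted $b_{ij}^{\PLAA}$ on the slices $\{n(x)=j\}$, whereas the paper proves the same uniqueness through a rank argument showing $B_m^\intercal B_m$ is diagonal (orthogonality of the features over $\cX_{N_0}$, itself a consequence of the disjoint supports) — two equivalent formulations of the same step.
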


With the inductive bias of the PLAA with GRPE, \cref{cr:plaa-grpe} states that LDHD generalization can be achieved if and only if the target concept can be represented by the elements in $\cB_{\PLAA}^{\GRPE,\cU}$ that have a dependence on $\cX_{N_0}$.
\begin{corollary}\label{cr:plaa-grpe}
    Consider the concept $c(x)=f_{\PLAA}^{\GRPE,\cU}(x;p^*)$:
    \begin{equation*}
        \sum_{k=1}^r c_k \sum_{1\leq i\leq j\leq N}(U_k)_{ij}b_{ij}^{\PLAA}(x).
    \end{equation*}
    Under the conditions of \cref{thm:plaa-grpe}, the PLAA with GRPE achieves LDHD generalization from $\cX_{N_0}$ to $\cX_N$ if and only if 
    \begin{equation*}
        \left\{k\mid (U_k)_{[N_0], [N_0]}=0\right\} 
        \subseteq 
        \left\{k\mid c_k = 0\right\}.
    \end{equation*}
\end{corollary}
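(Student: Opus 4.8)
The plan is to use \cref{thm:plaa-grpe} to turn the statement into a purely algebraic fact about coefficients. By that theorem the learned model converges to
$\hat{g} := \arg\min_{g\in\cG_{N_0,p^*}^{\GRPE,\cU}}\DegP_{\cB_{\PLAA}^{\GRPE,\cU}}(g)$, and since $\supp(\cD_2)=\cX_N$, LDHD generalization holds (with zero error) exactly when $\hat{g}$ agrees with $c$ on all of $\cX_N$. Writing $\cB_{\PLAA}^{\GRPE,\cU}=\{B_k\}_{k=1}^r$ with $B_k:=\sum_{1\le i\le j\le N}(U_k)_{ij}b_{ij}^{\PLAA}$, the set is linearly independent and $c=\sum_k c_k B_k$, so expressing a general interpolator as $g=\sum_k \beta_k B_k$ I would reduce everything to identifying which coefficients $\beta_k$ are pinned to $c_k$ by the interpolation constraint on $\cX_{N_0}$ and which remain free.

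The first key step is to restrict each basis function to $\cX_{N_0}=\{\pm1\}^{N_0}\times\{1\}^{N-N_0}$. For $x\in\cX_{N_0}$ one has $x_{N_0+1}=\dots=x_N=1$, hence $\tfrac{1+x_k}{2}=1$ for $k>N_0$ and $\tfrac{1-x_j}{2}=0$ for $j>N_0$. Reading off the definition of $b_{ij}^{\PLAA}$ I would conclude that $b_{ij}^{\PLAA}|_{\cX_{N_0}}=0$ whenever $j>N_0$, while for $j\le N_0$ the trailing factors $\prod_{k=N_0+1}^N\tfrac{1+x_k}{2}=1$ drop out and $b_{ij}^{\PLAA}|_{\cX_{N_0}}$ coincides with $b_{ij}^{\PLAA,N_0}$, the analogous basis function for the $N_0$-dimensional problem. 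Thus $B_k|_{\cX_{N_0}}=\sum_{1\le i\le j\le N_0}(U_k)_{ij}\,b_{ij}^{\PLAA,N_0}$ depends only on the block $(U_k)_{[N_0],[N_0]}$, and in particular $B_k|_{\cX_{N_0}}=0$ iff $(U_k)_{[N_0],[N_0]}=0$. Splitting indices as $S_0:=\{k:(U_k)_{[N_0],[N_0]}=0\}$ and its complement $S_1$, the disjoint-support condition $(U_i)_{kl}(U_j)_{kl}=0$ from \cref{def:plaa-grpe} passes verbatim to the $[N_0]\times[N_0]$ blocks; therefore the nonzero blocks $\{(U_k)_{[N_0],[N_0]}\}_{k\in S_1}$ have pairwise disjoint supports, are linearly independent, and (since $\{b_{ij}^{\PLAA,N_0}\}_{1\le i\le j\le N_0}$ is a basis) the functions $\{B_k|_{\cX_{N_0}}\}_{k\in S_1}$ are linearly independent as well.

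With this structure, the interpolation constraint $\sum_k\beta_k B_k|_{\cX_{N_0}}=\sum_k c_k B_k|_{\cX_{N_0}}$ loses its $S_0$ terms on $\cX_{N_0}$ and collapses to $\sum_{k\in S_1}\beta_k B_k|_{\cX_{N_0}}=\sum_{k\in S_1}c_k B_k|_{\cX_{N_0}}$; linear independence over $S_1$ forces $\beta_k=c_k$ for every $k\in S_1$ and leaves each $\beta_k$ with $k\in S_0$ unconstrained. I would then minimize the degree profile: a free coefficient $\beta_k$ contributes only the nonnegative term $\beta_k^2$ to the single degree bucket $\deg(B_k)$ of $\DegP_{\cB_{\PLAA}^{\GRPE,\cU}}$ and never enters with a negative sign, so the lexicographic minimum is attained at $\beta_k=0$ for all $k\in S_0$. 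Hence $\hat{g}=\sum_{k\in S_1}c_k B_k$ and $c-\hat{g}=\sum_{k\in S_0}c_k B_k$. By linear independence of the full set $\cB_{\PLAA}^{\GRPE,\cU}$ on $\cX_N$, this difference vanishes iff $c_k=0$ for all $k\in S_0$, i.e.\ iff $S_0\subseteq\{k:c_k=0\}$, which is exactly the stated criterion.

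The main obstacle I anticipate is the restriction computation coupled with the linear-independence bookkeeping in the second step: one must check carefully that passing to the $[N_0]\times[N_0]$ block kills precisely the indices in $S_0$ and simultaneously preserves the disjoint-support property, so that the surviving basis elements stay independent and the interpolation constraint genuinely fixes $\beta_k=c_k$ on $S_1$. Once this reduction is clean, the degree-profile minimization is immediate, since the free coefficients enter additively and nonnegatively, and the final ``iff'' is a one-line consequence of linear independence.
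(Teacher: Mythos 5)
Your proposal is correct and follows essentially the same route as the paper's proof: both hinge on the facts that $B_k|_{\cX_{N_0}}=0$ exactly when $(U_k)_{[N_0],[N_0]}=0$, that the disjoint-support condition makes the surviving blocks linearly independent and pins the coefficients $\beta_k=c_k$ outside your $S_0$ (the paper's $\cK_1$), and that zeroing the free coefficients minimizes the degree profile. The only difference is organizational—you compute the minimizer $\hat{g}=\sum_{k\in S_1}c_k B_k$ explicitly and compare it to $c$ in one pass, whereas the paper argues sufficiency and necessity separately by contradiction—and your version has the minor advantage of making the uniqueness of the minimizer explicit, which the paper's necessity step uses implicitly.
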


\begin{remark}
    While the projection in the RFM and the position embeddings in the PLAA may introduce an inductive bias that benefits LDHD generalization, they can reduce the expressiveness of the models. The point is to align the models with the concept class, incorporating a strong inductive bias while maintaining sufficient expressiveness.
\end{remark}

\begin{figure*}
    \centering
    \begin{subfigure}[t]{0.23\linewidth}
        \centering
        \includegraphics[width=\linewidth]{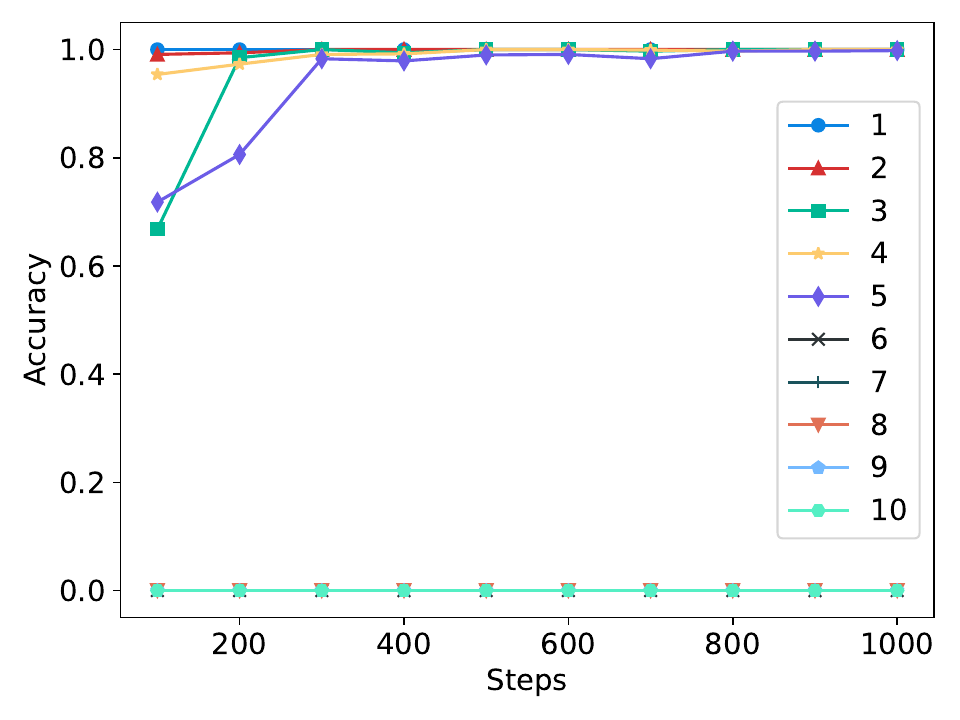}
        \subcaption{}
        \label{subfig:copy-rpe}
    \end{subfigure}
    \hfill
    \begin{subfigure}[t]{0.23\linewidth}
        \centering
        \includegraphics[width=\linewidth]{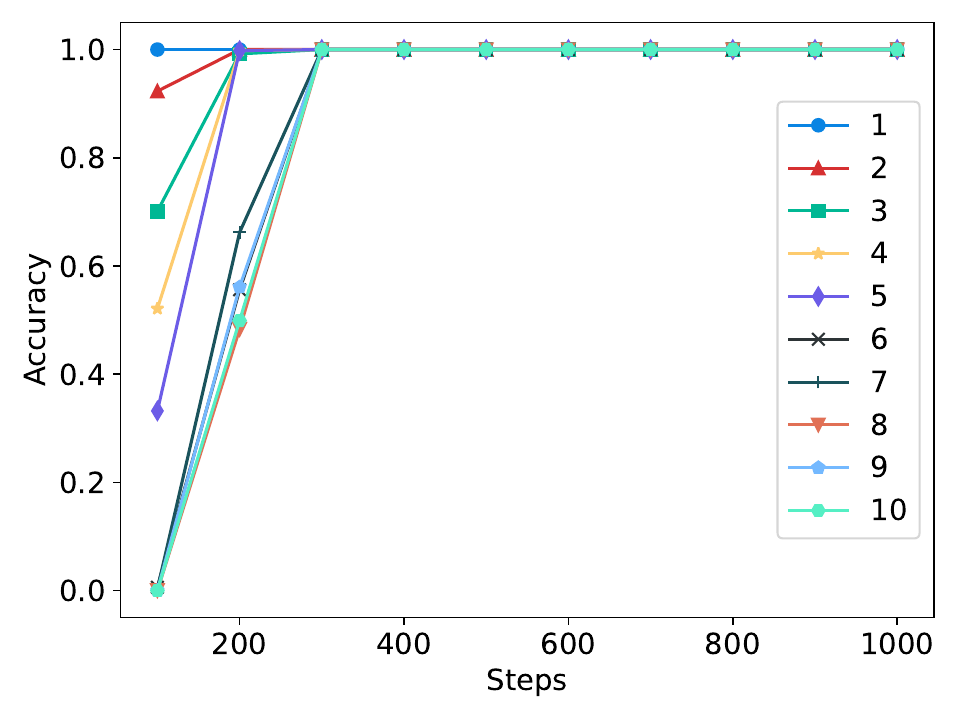}
        \subcaption{}
        \label{subfig:copy-rpe-square}
    \end{subfigure}
    \hfill
    \begin{subfigure}[t]{0.26\linewidth}
        \centering
        \includegraphics[width=\linewidth]{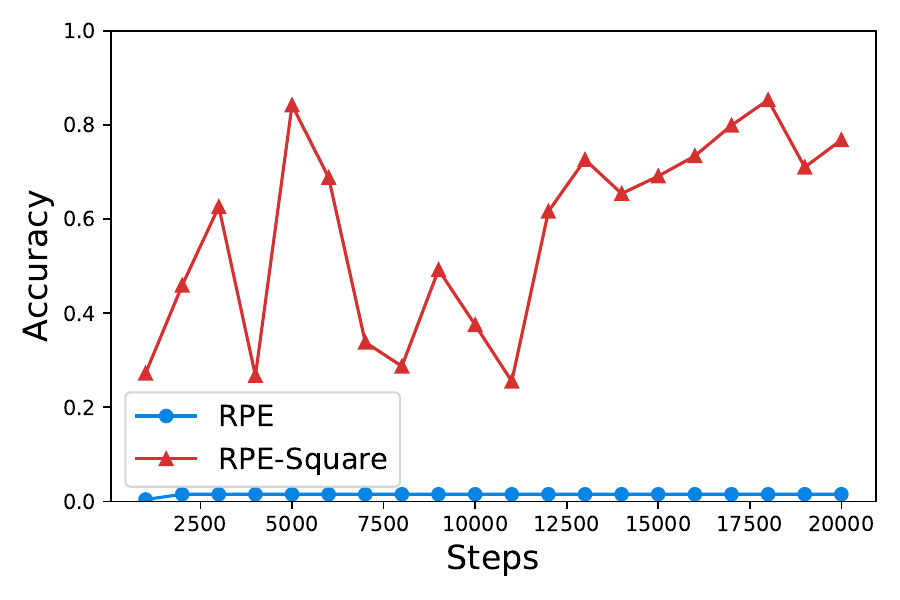}
        \subcaption{}
        \label{subfig:rpe-rpe-square}
    \end{subfigure}
    \hfill
    \begin{subfigure}[t]{0.26\linewidth}
        \centering
        \includegraphics[width=\linewidth]{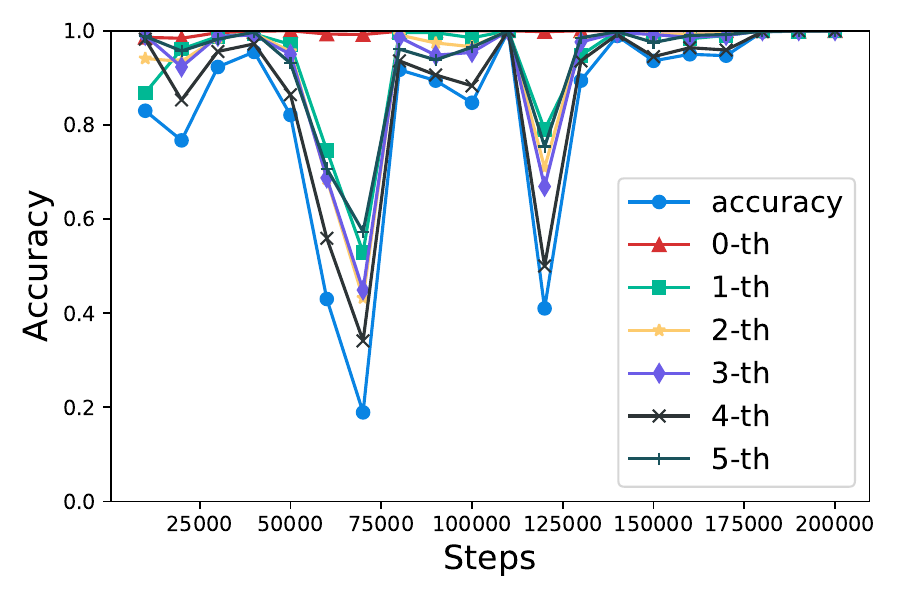}
        \subcaption{}
        \label{subfig:rpe-square}
    \end{subfigure}
    \caption{Length generalization of Transformer with RPE and RPE-Square in the unaligned copy and the URF addition tasks. \textbf{Unaligned Copy}: Transformers with RPE (a) and RPE-Square (b) are trained on lengths 1–5 for 1000 steps and tested on lengths 1–10. While both models generalize in-distribution, only RPE-Square achieves out-of-distribution generalization for lengths 6–10. \textbf{URF Addition}: Models are trained on URF 4-addition and tested on URF 5-addition. (c) Both the models are trained for $20000$ steps. The comparison result shows that the RPE fails while the RPE-Square succeeds in achieving length generalization. (d) RPE-Square trained for 200,000 steps achieves nearly perfect accuracy, with digitwise accuracy shown for each $z_k$.}
    \label{fig:rpe-rpe-square}
    \vspace{-0.1in}
\end{figure*}




\section{Further Implications}\label{sec:further}

We discuss further implications of the LDHD generalization perspective for practical length generalization techniques.

\subsection{Chain-of-Thought for Length Generalization}\label{subsec:further-cot}
While the Chain-of-Thought (CoT) can lead to more variety in the length of testing samples, it is widely used as an effective technique to improve the length generalization in various reasoning tasks. This seems contradictory if considered in the original space of the input sequence. However, from the LDHD generalization perspective, we can see that CoT intrinsically changes the underlying hidden space by extending each dimension with a ``middle'' variable and does not lead to the dimensional increase in the hidden space. For example, consider the $n$-addition without CoT and the $n$-addition with CoT. In the case without CoT, the instance $x_{n-1}\dots x_0 + y_{n-1}\dots y_0=z_{n}\dots z_0$ corresponds to the latent state $h_n=\left[\left(x_0, y_0\right), \dots, (x_{n-1},y_{n-1})\right]\in\Sigma^n$ for $\Sigma=\{0,\dots,9\}^2$. In the case with CoT, one step of predicting $z_t$ corresponds to the latent state 
\begin{equation*}
    \begin{gathered}
        h_n=\left[\left(x_0, y_0, z_0\right), \dots, \left(x_{t-1}, y_{t-1}, z_{t-1}\right),\right. \\
        \left.\left(x_t, y_t, *\right)\dots, (x_{n-1},y_{n-1}, *)\right]\in\bar{\Sigma}^n
    \end{gathered}
\end{equation*}
for $\bar{\Sigma}=\Sigma\times\{*,0,\dots,9\}$, where $*$ is a special element indicating undetermined values. CoT does not cause the LDHD generalization challenge but extends the domain $\Sigma$, leading to a more easily learnable target concept.

\subsection{Position Embeddings for Length Generalization}\label{subsec:further-pe}

Position embeddings are considered closely related to the length generalization in Transformers. Our analysis suggests a principle for position embedding design: consider the inherent LDHD generalization and the data format nuisance separately. To further elaborate, consider the length generalization of the URF addition with CoT. While RPE can capture the recursive structure of the addition problem and could lead to LDHD generalization in the latent space \citep{zhou2023algorithms, jelassi2023length}, it fails to work for the URF addition with CoT, due to the nuisance of the URF. 

Following the principle, we design a novel position embedding called RPE-Square to handle the nuisance of the unaligned data format. On the one hand, we keep the RPE structure for the inherent LDHD generalization. On the other hand, we deal with the unalignment by considering the distances to several special tokens (e.g., $\mathtt{[BOS]}$, $\mathtt{+}$, and  $\mathtt{=}$). These considerations lead to the RPE-Square, in which we compute the relative values between the distances to the special tokens. More concretely, the $\text{RPE-Square}_{i,j}$ for the query at $j$ and the key at $i$ is
\begin{equation*}
    \begin{gathered}
        \sum_{1\leq k\leq j, 1\leq l\leq i} 
        \frac{\exp\left((W_Q x_j)^\intercal (W_K x_l)\right)}{\sum_{1\leq l'\leq j}\exp\left((W_Q x_j)^\intercal (W_K x_{l'})\right)} \\
        \times \frac{\exp\left((W_Q x_i)^\intercal (W_K x_k)\right)}{\sum_{1\leq k'\leq i}\exp\left((W_Q x_i)^\intercal (W_K x_{k'})\right)} R_{(j-l)-(i-k)},
    \end{gathered}
\end{equation*}
where $W_Q$ and $W_K$ are the weight matrices for the query and the key, respectively. We replace $\RPE_{j-i}$ with $\text{RPE-Square}_{i,j}$ in the Transformer.

RPE-Square incorporates prior knowledge of LDHD generalization and unaligned data formats by combining RPE with a mechanism to handle unaligned formats. The position embedding for a query $j$ and a key $i$ is determined by the \emph{relative distance of relative distances}--the difference between the relative distance of $j$ to
the position of some token $x_l$, and $i$ to the position of some token $x_k$, parameterized by $R_{(j-l)-(i-k)}$. This design, which
inspires the name RPE-Square, uses a weighted average over all $1\leq l\leq j, 1\leq k\leq i$, where the weights are derived from the product of attention scores between $x_j$ and $x_l$, and $x_i$ and $x_k$. This approach enables automatic learning of special tokens and is particularly suited for tasks involving unaligned data formats, such as URF addition. Further illustration is provided in \cref{appdx-subsec:rpe-square}.


We compare the length generalization performance of RPE and RPE-Square in two tasks: \emph{Unaligned Copy} and \emph{URF Addition}. In the unaligned copy, the input is a string whose length is not aligned to a fixed length, and the target is one copy of the input string. An unaligned copy instance of scale $n$ is like ``$\mathtt{[BOS]\,x_0\,\dots\,x_{n-1}\,=\,x_0\,\dots\,x_{n-1}\,[EOS]}$''. The URF addition is illustrated in \cref{eg:addition-language}. To examine length generalization, the models are trained only on small-scale instances but evaluated on instances of larger scales. 
More details of the experiments are in \cref{appdx-subsec:exp-details}. The experiment results are presented in \cref{fig:rpe-rpe-square}. In the experiments, RPE-Square, the position embedding derived according to the LDHD generalization perspective, can achieve length generalization in both tasks, while RPE fails. The experiment shows that $\text{RPE-Square}$ effectively improves over RPE in handling the unalignment in the data format.

\vspace{-0.05in}
\section{Conclusion and Discussion}\label{sec:conclusion}


We propose the LDHD generalization perspective for length generalization, which disentangles the problem into two aspects: LDHD generalization in the latent space and handling data format nuisances. From this perspective, we introduce the No-Free-Lunch Theorem of LDHD generalization, highlighting the necessity of inductive bias for achieving length generalization. Using Boolean functions, we investigate the inductive biases of different model architectures trained with (S)GD and how these inductive biases contribute to LDHD generalization. Our perspective on LDHD generalization further elucidates the role of CoT and leads to the principle of position embedding design in length generalization.

For future work, while our theory is established with simplified models, it is crucial to investigate inductive bias in more practical and complex models. Additionally, developing a paradigm for position embedding design based on the proposed principles is a valuable avenue.
\section*{Impact Statement}
This paper presents work whose goal is to advance the field of Machine Learning. There are many potential societal consequences of our work, none which we feel must be specifically highlighted here. 
\section*{Acknowledgements}

Z. Lin and Y. Liang were supported by National Key R\&D Program of China (2022ZD0160300). Z. Lin was also supported by the NSF China (No. 62276004). Y. Liang was additionally supported by CCF Baidu Open Fund.

\bibliography{main}
\bibliographystyle{icml2025}

\newpage
\appendix
\onecolumn
\section{Background on Boolean Analysis}\label{appdx:bool}

In this section, we include a brief background on Boolean analysis essential to this work. We refer to \citet{o2014analysis} for further details and comprehensive coverage of Boolean analysis.

\textbf{Fourier Expansion.}  A Boolean function $f:\{-1,1\}^n\mapsto\bbR$ can always be represented by
\begin{equation}\label{eq:fourier}
    f(x)=\sum_{T\subset [n]} \hat{f}(T) \chi_T(x),
\end{equation}
where $\chi_T(x)=\pi_{i\in[T]}x_i$. The polynomial in \eqref{eq:fourier} is called the Fourier expansion of the Boolean function $f$. The number $\hat{f}(T)$ is the Fourier coefficient of $f$ on $T$. The set $\{\chi_T(x)\}_{T\subset[n]}$ forms a basis, named Fourier basis, for the product space w.r.t the inner product defined by
\begin{equation*}
    \langle f, g\rangle=\bbE_{x\sim U\left(\{-1,1\}^n\right)}\left[f(x) g(x)\right].
\end{equation*}

\textbf{Degree and Degree Profile.} The degree of a Boolean function $f$ is the degree of its Fourier expansion, which is a polynomial. The degree profile of $f$, denoted by $\DegP(f)$, is a $(n+1)$-tuple where $\DegP_i(f)=\sum_{T\subset[n],|T|=n+1-i}\hat{f}^2(T)$. The order between two degree profiles is lexicographic. The degree profile can be roughly seen as the distribution of the degrees of the monomials in the polynomial. Intuitively, the degree profile reflects the ``complexity'' of the Boolean function. A lower-degree-profile Boolean function uses fewer variables or combines them more simply than a higher-degree-profile one.
\section{Detailed Explanation of PLAA}\label{appdx:plaa}

In this section, we illustrate how PLAA and its variants abstract the effect of position embedding on the LDHD generalization of decoder-only Transformers.

\subsection{Construction of PLAA}\label{appdx-subsec:plaa}
A typical linear attention at query $n$ can be expressed as 
\begin{equation}\label{eq:la}
    f_{\text{LA}}\left(x;W_Q, W_K, W_V, B\right)
    = \sum_{i\leq n} \left[\left(W_Q x_n\right)^\intercal \left(W_K x_i\right) + B_{i,n}\right] W_V x_i,
\end{equation}
where $W_Q,W_K,W_V$ are the query, key, value matrices, respectively, and $B\in\bbR^{N\times N}$ is the position bias. To focus on the impact of the position embeddings, we fix $W_V=I$ and consider position-only attention score. Then we have
\begin{equation*}
    f_{\text{PLA}}\left(x;B\right)
    = \sum_{i\leq n} B_{i,n} x_i = \sum_{i\leq n} e_i^\intercal B e_n x_i.
\end{equation*}
Rewriting $B$ as $A$ and restricting $A$ to an upper triangular matrix, we have
\begin{equation*}
    f_{\text{PLA}}\left(x;A\right)=\sum_{i=1}^n e_i^\intercal A e_n x_i=x^\intercal A e_n, \quad\text{ where }  A\in\cU_N.
\end{equation*}

In the computation of attention, the query position is asymmetric to other positions and can reflect the scale of the instance. To formalize the intuition, we introduce the notation $n(x):=\argmax_{i\in[N]}\{x_i=-1\}$, which represents the last dimension where $x_i=-1$. In the LDHD generalization framework, $n(x)$ can be interpreted as the lowest dimension of the subspaces containing $x$, reflecting the dimension of $x$. We take $n(x)$ as the query position of $x$. Specifically, when $x$ is all-ones, it is treated as an "empty" sequence. In this case, we fix its output to align with the definition of PLAA. By the above derivation, we obtain \cref{def:plaa}.

\subsection{Construction of PLAA with APE}
The expression of the linear attention with APE is slightly different from \eqref{eq:la}. In practice, APE is commonly added to the token embeddings. Therefore, we omit the position bias and add APE to each $x_i$ when computing the attention score (we slightly abuse the notation of $x_i$ to denote the token embedding), i.e.,
\begin{equation*}
    f_{\text{LA}}^{\APE}\left(x;W_Q, W_K, W_V, \left\{p_i\right\}_{i\in[N]}\right)
    = \sum_{i\leq n} \left[W_Q \left(x_n+p_n\right)\right]^\intercal \left[W_K \left(x_i+p_i\right)\right] W_V x_i,
\end{equation*}
where $p_i$ is the position embedding for the position $i$.

Similar to the derivation in \cref{appdx-subsec:plaa}, we have
\begin{equation*}
    f_{\PLAA}^{\APE}\left(x;W_Q, W_K\right)
    = \sum_{i\leq n(x)} p_{n(x)}^\intercal W_Q^\intercal W_K p_i x_i
    = \sum_{i\leq n(x)} e_{n(x)}^\intercal P^\intercal W_Q^\intercal W_K P e_i x_i,
\end{equation*}
where $P$ is the learnable position embedding matrix such that $p_i = P e_i$ for all $i\in [N]$. Without loss of generality, we absorb the learnable parameters $W_Q, W_K$ into $P$, obtaining
\begin{equation*}
    f_{\PLAA}^{\APE}\left(x; P\right)
    = \sum_{i\leq n} x_i^\intercal e_i^\intercal P^\intercal P e_{n(x)}=x^\intercal \left(M_N^u \circ P^\intercal P\right) e_{n(x)}.
\end{equation*}

\subsection{Construction of PLAA with GRPE}

RPE can be seen as a reparameterization of the position bias matrix such that the entries on the same diagonals share learnable parameters, i.e.,
\begin{equation*}
    B=\sum_{i=1}^N D_i p_i,
\end{equation*}
where $D_k$ is the $k$-th upper diagonal matrix and $p_i$ is the learnable parameter for all $1\leq k \leq N$.

We generalize RPE so that our model can cover more general position biases. A natural extension is to choose general upper triangular matrices, denoted by $\cU_{\GRPE}=\{U_1,\dots,U_r\}$, instead of $\cU_{\RPE}=\{D_1,\dots,D_N\}$. To normalize, we suppose $\langle U_i, U_i\rangle=1$ for all $i=1,\dots, r$. (This condition does not hold for $\cU_{\RPE}$. We can slightly modify the matrices by replacing $D_k$ with $\frac{1}{N+1-k} D_k$ to satisfy the condition.) We further require $(U_i)_{k l} (U_j)_{kl}=0$ for all $i\neq j$ and $1\leq k, l\leq N$, i.e., each $B_{i,j}$ is characterized by at most one learnable parameter. Using the reparameterization
\begin{equation*}
    B=\sum_{i=1}^r U_i p_i
\end{equation*}
in the derivation of PLAA, we obtain PLAA with GRPE in \cref{def:plaa-grpe}.

\section{Proofs}\label{appdx:proof}

\subsection{Proof for \cref{thm:nfl-ldhd}}

Let $\cD_1:=\cD\left(\cX_N\mid x\in\cX_{N_0}\right)$ be the conditional distribution given that $x\in\cX_{N_0}$ and 
$\cD_2:=\cD\left(\cX_N\mid x\not\in\cX_{N_0}\right)$ be the conditional distribution given that $x\not\in\cX_{N_0}$
\begin{equation*}
    \begin{aligned}
        &\sum_{f\in\cF/\left(c_1|_{\cX_{N_0}}\right)} \E_{x\sim \cD(\cX_N)}\left[\ell\left(c_1(x), f(x)\right)\right] \\
        =&
        P_{\cD(\cX_{N})}(x\in\cX_{N_0})
        \sum_{f\in\cF/\left(c_1|_{\cX_{N_0}}\right)} \E_{x\sim \cD_1}\left[\ell\left(c_1(x), f(x)\right)\right] \\
        &+ 
        P_{\cD(\cX_{N})}(x\not\in\cX_{N_0})
        \sum_{f\in\cF/\left(c_1|_{\cX_{N_0}}\right)} \E_{x\sim \cD_2}\left[\ell\left(c_1(x), f(x)\right)\right]\\
        =&
        P_{\cD(\cX_{N})}(x\in\cX_{N_0}) V_1(c_1)
        + 
        P_{\cD(\cX_{N})}(x\not\in\cX_{N_0}) V_2(c_1),
    \end{aligned}
\end{equation*}
where
\begin{equation*}
    \begin{aligned}
        V_1(c) := \sum_{f\in\cF/\left(c|_{\cX_{N_0}}\right)} \E_{x\sim \cD_1}\left[\ell\left(c(x), f(x)\right)\right],\\
        V_2(c) := \sum_{f\in\cF/\left(c|_{\cX_{N_0}}\right)} \E_{x\sim \cD_2}\left[\ell\left(c(x), f(x)\right)\right].
    \end{aligned}
\end{equation*}

Similarly, we have
\begin{equation*}
    \sum_{f\in\cF/\left(c_2|_{\cX_{N_0}}\right)} \E_{x\sim \cD(\cX_N)}\left[\ell\left(c_2(x), f(x)\right)\right] 
    =
    P_{\cD(\cX_{N})}(x\in\cX_{N_0}) V_1(c_2)
    + 
    P_{\cD(\cX_{N})}(x\not\in\cX_{N_0}) V_2(c_2).
\end{equation*}
To prove the theorem, it remains to show that $V_1(c_1)=V_1(c_2)$
and $V_2(c_1)=V_2(c_2)$.

By the definition of $\cF/\left(c|_{\cX_{N_0}}\right)$, we have
\begin{equation*}
    \begin{aligned} 
        V_1(c_1) 
        &= \sum_{f\in\cF/\left(c_1|_{\cX_{N_0}}\right)} \E_{x\sim \cD_1}\left[\ell\left(c_1(x), f(x)\right)\right]\\
        &= \sum_{f\in\cF/\left(c_1|_{\cX_{N_0}}\right)} \E_{x\sim \cD_1}\left[\ell\left(c_1(x), c_1(x)\right)\right]\\
        &\overset{(a)}{=} \sum_{f\in\cF/\left(c_2|_{\cX_{N_0}}\right)} \E_{x\sim \cD_1}\left[\ell\left(c_2(x), c_2(x)\right)\right]\\
        &= \sum_{f\in\cF/\left(c_2|_{\cX_{N_0}}\right)} \E_{x\sim \cD_1}\left[\ell\left(c_2(x), f(x)\right)\right]
        = V_1(c_2),
    \end{aligned}
\end{equation*}
where the equality (a) is due to that $c_1(x)=c_2(x)$ for all $x\in\cX_{N_0}$.

By the no-free-lunch theorem \citep{wolpert1997no, wolpert2002supervised} for $\cX_N\setminus\cX_{N_0}$, $\cY$, and $\cF_{\cX_{N}\setminus\cX_{N_0},\cY}$, we have
\begin{equation*}
    \sum_{f\in\cF_{\cX_{N}\setminus\cX_{N_0},\cY}} \E_{x\sim \cD_2}\left[\ell\left(c_1(x), f(x)\right)\right]
    = \sum_{f\in\cF_{\cX_{N}\setminus\cX_{N_0},\cY}} \E_{x\sim \cD_2}\left[\ell\left(c_2(x), f(x)\right)\right].
\end{equation*}
Then we have
\begin{equation*}
    \begin{aligned}
        V_2(c_1) &= \sum_{f\in\cF/\left(c_1|_{\cX_{N_0}}\right)} \E_{x\sim \cD_2}\left[\ell\left(c_1(x), f(x)\right)\right]\\
             &= \sum_{f\in\cF_{\cX_{N}\setminus\cX_{N_0},\cY}} \E_{x\sim \cD_2}\left[\ell\left(c_1(x), f(x)\right)\right]\\
             &= \sum_{f\in\cF_{\cX_{N}\setminus\cX_{N_0},\cY}} \E_{x\sim \cD_2}\left[\ell\left(c_2(x), f(x)\right)\right]\\
             &= \sum_{f\in\cF/\left(c_2|_{\cX_{N_0}}\right)} \E_{x\sim \cD_2}\left[\ell\left(c_2(x), f(x)\right)\right]
             = V_2(c_2).
    \end{aligned}
\end{equation*}

\subsection{Proof for \cref{thm:rfmp}}

Let $z=z(x)=\Proj(x,V)$ and $\cZ_{N_0}=\left\{\Proj(x,V)\mid x\in\cX_{N_0}\right\}$.
Let $\cG'_{N_0,c}$ be the set of all RFM interpolators on $\cZ_{N_0}$ for the concept $c(z)$, i.e.,
\begin{equation*}
    \cG'_{N_0,c}=\{f_{\RFM}(z;a)\mid f_{\RFM}(z;a)=c(z)\text{ for all }z\in\cZ_{N_0}\},
\end{equation*}
where 
\begin{equation*}
    f_{\RFM}(z;a)=\frac{1}{\sqrt{K}}\sum_{k=1}^K a_k \sigma(\langle w_k, z\rangle + b_k).
\end{equation*}

According to the proof of Theorem~3.8 in \citet{abbe2023generalization}, the RFM model $f_{\RFM}(z;a)$ converges to the min-degree interpolator (w.r.t. the variable $z$) in $\cG'_{N_0,c}$, i.e., 
\begin{equation*}
    f_{\RFM}(z;a_t)\to\argmin_{g'\in\cG'_{N_0,c}} \DegP_{B'}(g'),\quad\text{as } K\to\infty, t\to\infty,
\end{equation*}
where $B'=\{\chi_{T}(z)\}_{T\subseteq [r]}$.

By the definition of $z=z(x)=\Proj(x,V)$, we have
\begin{equation*}
    f_{\RFMP}^{V,K}(x; a_t) = f_{\RFM}(z;a_t)
    \to
    \argmin_{g'\in\cG'_{N_0,c}} \DegP_{B'}(g')
    =
    \arg\min_{g\in\cG_{N_0, c, V}} \DegP_{\cB(V)}(g),
\end{equation*}
as $K\to\infty, t\to\infty$.

\subsection{Proof for \cref{cr:rfm}}

\begin{lemma}\label{lm:proof-cr-rfm}
    For any $f\in\cF$, the min-degree interpolator $f^*$ on $\cX_{N_0}$ satisfies $I(f^*)\subset [N_0]$.
\end{lemma}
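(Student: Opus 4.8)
The plan is to argue by contradiction using a degree-lowering perturbation that stays inside the affine space of interpolators on $\cX_{N_0}$. Since this lemma sits in the RFM setting ($V=I_N$), I work in the standard Fourier basis, writing $f^*=\sum_{T\subseteq[N]}\hat{f}^*(T)\chi_T$ and recalling that $\DegP(f^*)=(W_N,W_{N-1},\dots,W_0)$ with $W_d=\sum_{|T|=d}\hat{f}^*(T)^2$, compared lexicographically from the top degree $N$ downward. Write $H:=\{N_0+1,\dots,N\}$ for the ``high'' coordinates, which are pinned to $+1$ on $\cX_{N_0}$.

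The single observation driving everything is that for any $i\in H$ and any $T\ni i$, the monomials $\chi_T$ and $\chi_{T\setminus\{i\}}$ coincide on $\cX_{N_0}$, because $x_i\equiv 1$ there. Hence $v:=\chi_T-\chi_{T\setminus\{i\}}$ vanishes identically on $\cX_{N_0}$, so $f^*+t\,v$ remains an interpolator of the concept $f$ on $\cX_{N_0}$ for every $t\in\bbR$. This telescoping family is the perturbation I will exploit.

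The steps are as follows. First I suppose for contradiction that $I(f^*)\not\subseteq[N_0]$, which is equivalent to having some $\hat{f}^*(T)\neq 0$ with $T\cap H\neq\emptyset$. Among all such high-involving nonzero terms I choose one $T_0$ of maximal degree $d^*:=|T_0|$ and fix $i\in T_0\cap H$. Then I consider $f^*-\hat{f}^*(T_0)\,(\chi_{T_0}-\chi_{T_0\setminus\{i\}})$, which is still an interpolator by the observation above and which alters only the coefficients on $T_0$ (degree $d^*$, now set to $0$) and on $T_0\setminus\{i\}$ (degree $d^*-1$). Consequently $W_d$ is unchanged for every $d>d^*$, while $W_{d^*}$ strictly decreases by $\hat{f}^*(T_0)^2>0$ (no other degree-$d^*$ coefficient is touched). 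Reading the degree profiles top-down, the first degree at which they differ is $d^*$, where the perturbed function is strictly smaller; thus its degree profile is lexicographically strictly below $\DegP(f^*)$, contradicting the minimality of $f^*$. Therefore $I(f^*)\subseteq[N_0]$.

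The only delicate point is the choice of perturbation direction, which must simultaneously (a) lie in the interpolation-preserving subspace and (b) disturb the degree profile only at degrees $\le d^*$, so the top-down lexicographic comparison is decided in our favor. The difference $\chi_{T_0}-\chi_{T_0\setminus\{i\}}$ achieves both: (a) because $x_i\equiv 1$ on the slice, and (b) because it touches exactly two adjacent degrees while the maximality of $d^*$ guarantees that zeroing out $\hat{f}^*(T_0)$ genuinely lowers $W_{d^*}$ without disturbing any higher $W_d$. I note that no uniqueness of the minimizer is required: the argument rules out \emph{any} minimizer with support outside $[N_0]$, and one sees afterwards that the unique interpolator supported on $[N_0]$ is then forced to equal $f^*$.
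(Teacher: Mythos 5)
Your proof is correct, and at heart it is the same contradiction argument as the paper's: both exploit the fact that $x_i\equiv 1$ on $\cX_{N_0}$ for $i>N_0$ to construct a competing interpolator with strictly smaller degree profile. The difference lies in the construction. The paper makes the move globally: it fixes every $x_i$, $i\notin[N_0]$, to $1$ in the minimizer $\hat f$, i.e., replaces every $\chi_T$ by $\chi_{T\cap[N_0]}$ at once, and then asserts $\DegP(\tilde f)<\DegP(\hat f)$ in a single line. You instead make one surgical perturbation, transferring the single nonzero coefficient on $\chi_{T_0}$ (with $T_0\not\subseteq[N_0]$) onto $\chi_{T_0\setminus\{i\}}$. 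Your local version buys a cleaner verification of the lexicographic decrease: since the perturbation touches only degrees $|T_0|$ and $|T_0|-1$, the first differing entry of the profile is trivially at degree $|T_0|$, where the weight drops by exactly $\hat f^*(T_0)^2>0$. The paper's global substitution, by contrast, can create cancellations across many degrees simultaneously, and its claimed strict inequality implicitly requires exactly the top-differing-degree bookkeeping you spell out (the profile is unchanged above the maximal degree $d_{\max}$ of a nonzero term with $T\not\subseteq[N_0]$, and strictly drops at $d_{\max}$); in that sense your write-up is the more airtight of the two, while the paper's is shorter and eliminates all dependence on the high coordinates in one shot. One small remark: the maximality of $d^*$ in your choice of $T_0$ is not actually needed --- any nonzero $\hat f^*(T_0)$ with $T_0\not\subseteq[N_0]$ works, since your perturbation never touches degrees above $|T_0|$ regardless; the maximality assumption is harmless but superfluous.
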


\begin{proof}[Proof for \cref{lm:proof-cr-rfm}]
    Assume that there exist some $f\in\cF$ such that the min-degreee interpolator $\hat{f}$ of $f$ on $\cX_{N_0}$ does not satisfy $I(\hat{f})\subset [N_0]$.

    Let $\tilde{f}(x)$ be the function constructed by fixing all $x_i,i\not\in[N_0]$ to 1 in $\hat{f}(x)$. By the construction, we have $\tilde{f}(x)=\hat{f}(x)$ for all $x\in\cX_{N_0}$ and thus $\tilde{f}(x)$ is also an interpolator of $f$ on $\cX_{N_0}$.

    Since $I(\hat{f}) \not\subset [N_0]$, we have $\DegP(\tilde{f}) < \DegP(\hat{f})$. This contradicts the assumption that $\hat{f}$ is the min-degree interpolator.
\end{proof}

By \cref{lm:proof-cr-rfm}, for any $f\in\cF$ such that $I(f)\not\subset[N_0]$, the min-degreee interpolator $\hat{f}$ is not identical to $f$ on $\cX_{N}$ and thus does not achieves LDHD generalization from $\cX_{N_0}$ to $\cX_{N}$.

\subsection{Proof for \cref{thm:plaa-ape}}
Let $U(\cX_{N_0})$ be the uniform distribution over $\cX_{N_0}$.
The loss function is
\begin{equation*}
    \begin{aligned}
        L(P) &= \E_{x\sim U\left(\cX_{N_0}\right)}\left[\frac{1}{2}\left(f_{\PLAA}^{\APE}(x;P) - f_{\PLAA}(x;A^*)\right)^2\right]\\
             &= \E_{x\sim U\left(\cX_{N_0}\right)}\left[\frac{1}{2}\left(x^\intercal \left(M_N^u \circ P^\intercal P\right) e_{n(x)}  - x^\intercal A^* e_{n(x)}\right)^2\right]\\
             &= \E_{x\sim U\left(\cX_{N_0}\right)}\left[\frac{1}{2}\left(\left\langle x e_{n(x)}^\intercal, M_N^u \circ P^\intercal P\right\rangle - \left\langle x e_{n(x)}^\intercal, A^* \right\rangle\right)^2\right] \\
             &\overset{(a)}{=} \E_{x\sim U\left(\cX_{N_0}\right)}\left[\frac{1}{2}\left(\left\langle x e_{n(x)}^\intercal, M_N^u \circ P^\intercal P\right\rangle - \left\langle x e_{n(x)}^\intercal, M_N^u A^* \right\rangle\right)^2\right] \\
             &= \E_{x\sim U\left(\cX_{N_0}\right)}\left[\frac{1}{2}\left(\left\langle M_N^u \circ x e_{n(x)}^\intercal, P^\intercal P - A^*\right\rangle\right)^2\right],
    \end{aligned}
\end{equation*}
where (a) follows from the fact that $A^*$ is upper triangular.

\begin{lemma}\label{lm:expectation-X_N_0}
    For any matrix $Z\in\bbR^{N\times N}$, it holds that
    \begin{equation*}
        \E_{x\sim U\left(\cX_{N_0}\right)}\left[\frac{1}{2}\left(\left\langle M_N^u \circ x e_{n(x)}^\intercal, Z\right\rangle\right)^2\right] = \frac{1}{2}\left\|Q_{N_0} \circ Z\right\|_F^2,
    \end{equation*}
    where
    \begin{equation*}
        Q_{N_0} = \begin{bmatrix}
            2^{-\frac{N_0}{2}} & \cdots                 &       \cdots         &    2^{-\frac{1}{2}} & 0 & \cdots & 0 \\
            0                  & 2^{-\frac{N_0 - 1}{2}} &       \cdots         &    2^{-\frac{1}{2}} & 0 & \cdots & 0 \\
            \vdots             & \ddots                 &       \ddots         &    2^{-\frac{1}{2}} & 0 & \cdots & 0 \\
            0                  & \cdots                 &       0        &    2^{-\frac{1}{2}} & 0 & \cdots & 0 \\
            0                  & \cdots                 &       \cdots         &    0                & 0 & \cdots & 0 \\
            \vdots             & \vdots                 &       \vdots         &    \vdots           & 0 & \cdots & 0 \\
            0                  & \cdots                 &       0              &    0           & 0 & \cdots & 0 \\
        \end{bmatrix}\in\bbR^{N\times N},
    \end{equation*}
    that is,
    \begin{equation*}
            \left(Q_{N_0}\right)_{i j} =
            \begin{cases}
                2^{\frac{N_0 - j + 1}{2}} ,&  1\leq i\leq j \leq N_0,\\
                0                         ,& \text{otherwise}.
            \end{cases}
    \end{equation*}
\end{lemma}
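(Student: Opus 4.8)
The plan is to reduce the masked inner product to a single sum over one column, then condition on the query position $n(x)$ and combine the resulting conditional second moments through the law of total expectation.

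First I would unpack the integrand. For $x\in\cX_{N_0}$ with $n(x)\ge 1$, the rank-one matrix $xe_{n(x)}^\intercal$ has $x$ in its $n(x)$-th column and zeros elsewhere, so the Hadamard mask $M_N^u$ keeps only the entries $(i,n(x))$ with $i\le n(x)$; when $n(x)=0$ we have $e_0=0$ and the matrix vanishes. Hence
\[
\langle M_N^u\circ xe_{n(x)}^\intercal, Z\rangle=\sum_{i=1}^{n(x)}x_i Z_{i,n(x)},
\]
with the empty sum (for $n(x)=0$) read as $0$. This collapses the problem to understanding the distribution of the column index $n(x)$ and of the signs $x_1,\dots,x_{n(x)}$.

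Next I would condition on $n(x)=j$. Under $x\sim U(\cX_{N_0})$ the coordinates $x_1,\dots,x_{N_0}$ are i.i.d. uniform on $\{\pm1\}$ and $x_{N_0+1}=\dots=x_N=1$, so $n(x)\le N_0$. For $1\le j\le N_0$ the event $\{n(x)=j\}$ is exactly $\{x_j=-1,\ x_{j+1}=\dots=x_{N_0}=1\}$, which has probability $2^{-(N_0-j+1)}$; conditionally, $x_1,\dots,x_{j-1}$ stay i.i.d. uniform $\pm1$ while $x_j=-1$ is fixed. (As a sanity check, these probabilities together with $P(n(x)=0)=2^{-N_0}$ sum to one.) Writing $\sum_{i=1}^{j}x_iZ_{ij}=\sum_{i=1}^{j-1}x_iZ_{ij}-Z_{jj}$ and using that the free signs are mean-zero, independent, and square to one, the linear and cross terms drop out of the expectation, giving
\[
\E\!\left[\Big(\sum_{i=1}^{j}x_iZ_{ij}\Big)^2\ \Big|\ n(x)=j\right]=\sum_{i=1}^{j-1}Z_{ij}^2+Z_{jj}^2=\sum_{i=1}^{j}Z_{ij}^2.
\]

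Finally I would assemble the pieces. The $j=0$ contribution vanishes because the inner product is zero there, so the law of total expectation yields $\E[(\cdots)^2]=\sum_{j=1}^{N_0}2^{-(N_0-j+1)}\sum_{i=1}^{j}Z_{ij}^2$. On the other side, reading the squared entries off the displayed $Q_{N_0}$ gives $(Q_{N_0})_{ij}^2=2^{-(N_0-j+1)}$ for $1\le i\le j\le N_0$ and $0$ otherwise, so $\|Q_{N_0}\circ Z\|_F^2=\sum_{1\le i\le j\le N_0}2^{-(N_0-j+1)}Z_{ij}^2$, which is the same double sum; multiplying by $\tfrac12$ closes the identity. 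The only real obstacle is bookkeeping rather than depth: one must pin down $P(n(x)=j)=2^{-(N_0-j+1)}$ exactly, treat the all-ones case $n(x)=0$ separately so the $e_0=0$ convention is respected, and keep the exponent sign consistent with the matrix display (its squared entries are the decaying weights $2^{-(N_0-j+1)}$, matching the conditioning probabilities).
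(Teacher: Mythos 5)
Your proof is correct and takes essentially the same route as the paper's: both partition the expectation by the value of $n(x)$ (the paper counts $\left|\{x\mid n(x)=k\}\right|=2^{k-1}$ against the uniform weight $2^{-N_0}$, which is exactly your conditional probability $2^{-(N_0-k+1)}$ via the law of total expectation), reduce the masked inner product to the single column sum $\sum_{i=1}^{k}x_i Z_{ik}$ with $x_k=-1$ fixed, and eliminate linear and cross terms using the independent mean-zero signs. You were also right to read the weights off the matrix display rather than the statement's case formula, whose exponent $2^{\frac{N_0-j+1}{2}}$ carries a sign typo and should be $2^{-\frac{N_0-j+1}{2}}$, as the paper's own final computation confirms.
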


\begin{proof}[Proof for \cref{lm:expectation-X_N_0}]
    Note that
    \begin{equation*}
        \begin{aligned}
             \E_{x\sim U\left(\cX_{N_0}\right)}\left[\frac{1}{2}\left(\left\langle M_N^u \circ x e_{n(x)}^\intercal, Z\right\rangle\right)^2\right] 
             =&
             \frac{1}{2^{N_0}}\sum_{k=1}^{N_0}\sum_{x:n(x)=k}
             \frac{1}{2}\left(\left\langle M_N^u \circ x e_{k}^\intercal, Z\right\rangle\right)^2 \\
             =& 
             \frac{1}{2^{N_0}}\sum_{k=1}^{N_0}\sum_{x:n(x)=k}
             \frac{1}{2}\left(\sum_{i=1}^{k-1} x_i Z_{i k} - Z_{k k}\right)^2\\
             =&
             \frac{1}{2^{N_0}}\sum_{k=1}^{N_0}\sum_{x:n(x)=k}
             \frac{1}{2}\sum_{i=1}^{k} Z_{i k}^2 \\
             \overset{(a)}{=}&
             \frac{1}{2^{N_0}}\sum_{k=1}^{N_0}
             \frac{2^{k-1}}{2}\sum_{i=1}^{k} Z_{i k}^2\\
             =&
             \frac{1}{2} \sum_{k=1}^{N_0}\sum_{i=1}^k \left(2^{-\frac{N_0-k+1}{2}} Z_{i k}\right)^2
             =
             \frac{1}{2}\left\|Q_{N_0} \circ Z\right\|_F^2,
        \end{aligned}
    \end{equation*}
    where (a) follows from the fact that $\left|\{x\mid n(x)=k\}\right|=2^{k-1}$.
\end{proof}

According to \cref{lm:expectation-X_N_0}, we have
\begin{equation}\label{eq:plaa-ape-loss}
    L(P) = \frac{1}{2}\left\|Q_{N_0}\circ \left(P^\intercal P - A^*\right)\right\|_F^2.
\end{equation}

For any $0<\alpha\leq\alpha_0$, we have $L\left(P_{\infty}(\alpha)\right)=0$ and thus
\begin{equation}\label{eq:equal-N0}
    \left(M_N^u \circ P_{\infty}(\alpha)^\intercal P_{\infty}(\alpha)\right)_{[N_0],[N_0]} = A^*_{[N_0],[N_0]}.
\end{equation}

Let $p_i\in\bbR^{d_P}$ denote the $i$-th column of $P$.  According to \eqref{eq:plaa-ape-loss}, we have
\begin{equation*}
    \frac{\partial L}{\partial p_i} = 0,
\end{equation*}
for all $i=N_0+1,\dots,N$. Therefore, $\left(p_{i}\right)_{t}=\left(p_{i}\right)_{0}$ for all $t$ and $i=N_0+1,\dots,N$.

Define
\begin{equation*}
    \tilde{A}^* := \begin{cases}
        \left(A^*\right)_{i j}, & 1\leq i \leq j \leq N_0,\\
        0, & \text{otherwise}.
    \end{cases}
\end{equation*}

Let $p^\alpha_i$ be the $i$-th column of $P_{\infty}(\alpha)$. 
We have
\begin{equation*}
    \begin{aligned}
        \left\|M_N^u \circ P_{\infty}^\intercal (\alpha) P_{\infty} (\alpha) - \tilde{A}^*\right\|_F^2
        =& \sum_{1\leq i\leq j\leq N} \left(\left(p_i^\alpha\right)^\intercal p_j^\alpha - \tilde{A}^*_{ij}\right)^2 \\
        \overset{(a)}{=}& \sum_{j=N_0+1}^N \sum_{i=1}^j \left(\left(p_i^\alpha\right)^\intercal p_j^\alpha\right)^2 \\
        \leq& \sum_{j=N_0+1}^N \sum_{i=1}^j \left\|p_i^\alpha\right\|_2^2 \left\|p_j^\alpha\right\|_2^2\\
        =& \sum_{j=N_0+1}^N \sum_{i=1}^{N_0} \left\|p_i^\alpha\right\|_2^2 \left\|p_j^\alpha\right\|_2^2
        + \sum_{j=N_0+1}^N \sum_{i=N_0+1}^j \left\|p_i^\alpha\right\|_2^2 \left\|p_j^\alpha\right\|_2^2.
    \end{aligned}
\end{equation*}
where (a) follows from \eqref{eq:equal-N0} and the definition of $\tilde{A}^*$.

Note that $\left\|p_i^\alpha\right\|_2^2 =A_{ii}^*\leq \left\|A^*\right\|_{\infty}$ for all $1\leq i\leq N_0$ and $\left\|p_i^\alpha\right\|_2^2 =\alpha$ for all $N_0+1\leq i \leq N$. Then we have
\begin{equation*}
     0\leq \left\|M_N^u \circ P_{\infty}^\intercal (\alpha) P_{\infty} (\alpha) - \tilde{A}^*\right\|_F^2
     \leq 
     N (N-N_0) \alpha\left\|A^*\right\|_{\infty} + \frac{(N-N_0+1)(N-N_0)}{2} \alpha^2.
\end{equation*}

As $\alpha\to 0$, we have
\begin{equation*}
    0\leq \left\|M_N^u \circ \hat{P}^\intercal \hat{P} - \tilde{A}^*\right\|_F^2\leq 0,
\end{equation*}
which means $M_N^u \circ \hat{P}^\intercal \hat{P} = \tilde{A}^*$.

It remains to show that $f_{\PLAA}^{\APE}(x;\hat{P})$ is the min-degree interpolator w.r.t. $\cB_N^{\PLAA}$ on $\cX_{N_0}$. Notice that
\begin{equation*}
    b_{i j}^{\PLAA} (x) = \left\langle x e_{n(x)}^\intercal, E_{i j}\right\rangle.
\end{equation*}
Then we have
\begin{equation*}
    f_{\PLAA}^{\APE}(x;\hat{P}) 
    = \sum_{1\leq i\leq j\leq N} \tilde{A}^*_{i j} b_{i j}^{\PLAA} (x)
    = \sum_{1\leq i\leq j\leq N_0} A^*_{i j} b_{i j}^{\PLAA} (x).
\end{equation*}

For any $g=\sum_{1\leq i\leq j\leq N} A_{ij} b_{i j}^{\PLAA} (x)\in\cG_{N_0,A^*}^{\PLAA}$, we have $A_{ij}=A^*_{ij}$ for all $1\leq i\leq j\leq N_0$. Since $A_{i j}^2\ge\tilde{A}_{i j}^2$ for all $1\leq i\leq j\leq N$, we have
\begin{equation*}
    \DegP_{\cB_N^{\PLAA}}\left(f_{\PLAA}^{\APE}(x;\hat{P})\right)
    \leq 
    \DegP_{\cB_N^{\PLAA}}\left(g\right).
\end{equation*}
Hence, $f_{\PLAA}^{\APE}(x;\hat{P})$ is the min-degree interpolator w.r.t. $\cB_N^{\PLAA}$ on $\cX_{N_0}$.

\subsection{Proof for \cref{thm:plaa-grpe}}


Note that 
\begin{equation*}
    f_{\PLAA}^{\GRPE,\cU}(x;p)
    =x^\intercal \left(\sum_{k=1}^r p_k U_k\right) e_{n(x)}
    =\left\langle x e_{n(x)}^\intercal, \sum_{k=1}^r p_k U_k \right\rangle.
\end{equation*}

Let $D_{N_0}$ be the size of the training set in $\cX_{N_0}$. Then the loss can be represented as
\begin{equation*}
        L(p) 
        = \frac{1}{2 D_{N_0}}\sum_{x\in\cX_{N_0}} 
        \left(\left\langle x e_{n(x)}^\intercal, \sum_{k=1}^r p_k U_k \right\rangle - c^*(x) \right)^2.
\end{equation*}

Without loss of generality, we use the notation of gradient flow in this proof.
Then we have 
\begin{equation*}
    \dot{A} = -\frac{1}{D_{N_0}}\sum_{x\in\cX_{N_0}} 
    \left( \left\langle x e_{n(x)}^\intercal, A\right\rangle - c^*(x)\right)
    \sum_{k=1}^r\left\langle x e_{n(x)}^\intercal, U_k\right\rangle U_k.
\end{equation*}

Since $p(0)=0$, we have $A(0)=0$ and thus $A(t)\in\spn\left\{\sum_{k=1}^r\left\langle x e_{n(x)}^\intercal, U_k\right\rangle U_k\right\}_{x\in\cX_{N_0}}$.
Then the convergence point $\hat{A}$ can be represented as $\hat{A}=\sum_{x\in\cX_{N_0}}\hat{a}(x)\sum_{k=1}^r\left\langle x e_{n(x)}^\intercal, U_k\right\rangle U_k$. The convergence function $\hat{f}$ is 
\begin{equation*}
    \begin{aligned}
        \hat{f}(x) 
        &= \sum_{x'\in\cX_{N_0}}\hat{a}(x')\sum_{k=1}^r\left\langle x' e_{n(x')}^\intercal, U_k\right\rangle \left\langle x e_{n(x)}^\intercal, U_k\right\rangle \\
        &= \sum_{k=1}^r \sum_{x'\in\cX_{N_0}}\hat{a}(x')\left\langle x' e_{n(x')}^\intercal, U_k\right\rangle \left\langle x e_{n(x)}^\intercal, U_k\right\rangle \\
    \end{aligned}
\end{equation*}

\begin{lemma}\label{lm:bk}
    For any upper triangle matrix $U\in\bbR^{N\times N}$, we have
    \begin{equation*}
        \langle x e_{n(x)}^\intercal, U \rangle=
        \sum_{1\leq i\leq j\leq N} U_{i j} b_{i j}^{\PLAA}(x).
    \end{equation*}
\end{lemma}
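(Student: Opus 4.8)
The plan is to reduce the identity to a coefficient comparison by recognizing that each $b_{ij}^{\PLAA}$ is secretly an indicator of the query position $n(x)$, after which everything becomes bookkeeping.

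First I would expand the left-hand side. Since $x e_{n(x)}^\intercal$ has $(k,l)$-entry $x_k\,(e_{n(x)})_l$, the Frobenius inner product with $U$ extracts the $n(x)$-th column of $U$, giving $\langle x e_{n(x)}^\intercal, U\rangle = \sum_{k=1}^N x_k U_{k,n(x)}$. Because $U$ is upper triangular, $U_{k,n(x)} = 0$ whenever $k > n(x)$, so this truncates to $\sum_{k \le n(x)} x_k U_{k,n(x)}$. The all-ones input, for which $n(x)=0$ and $e_0 = 0$, makes both sides vanish, and I would dispose of this degenerate case at the very end.

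The crux---and really the only idea in the proof---is the observation that $\frac{1-x_j}{2}\prod_{k=j+1}^N \frac{1+x_k}{2} = \mathbf{1}[n(x)=j]$. This holds because $\frac{1-x_j}{2}$ is the indicator of $x_j = -1$, while the product over $k>j$ is the indicator of $x_k = 1$ for all $k>j$; their conjunction says precisely that $j$ is the largest coordinate at which $x$ equals $-1$, i.e.\ $n(x)=j$. Plugging this in rewrites the basis functions cleanly as $b_{jj}^{\PLAA}(x) = -\mathbf{1}[n(x)=j]$ and $b_{ij}^{\PLAA}(x) = x_i\,\mathbf{1}[n(x)=j]$ for $i<j$.

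With the indicator form in hand, I would substitute into the right-hand side and group the terms by $j$: the $j$-block contributes $\mathbf{1}[n(x)=j]\bigl(-U_{jj} + \sum_{i<j}U_{ij}x_i\bigr)$. The final step uses that on the event $n(x)=j$ we have $x_j=-1$, so $-U_{jj}=x_jU_{jj}$ and the bracket becomes $\sum_{i\le j}x_iU_{ij}$; since the indicators are mutually exclusive and exactly one fires (at $j=n(x)$) whenever $x$ is not all-ones, the double sum collapses to $\sum_{k\le n(x)}x_kU_{k,n(x)}$, matching the expansion of the left-hand side. I do not anticipate any real obstacle beyond spotting the indicator identity; the only point that needs care is correctly folding the diagonal term $i=j$ into the sum via $x_{n(x)}=-1$, which is exactly what reconciles the sign $-U_{jj}$ in $b_{jj}^{\PLAA}$ with the plain column sum on the left.
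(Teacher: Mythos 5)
Your proof is correct and rests on the same key observation as the paper's: that $\frac{1-x_j}{2}\prod_{k=j+1}^N \frac{1+x_k}{2}$ is the indicator $\mathbf{1}[n(x)=j]$, with the diagonal sign reconciled via $x_j=-1$ on that event. The paper organizes the bookkeeping slightly differently --- proving $\langle x e_{n(x)}^\intercal, E_{ij}\rangle = b_{ij}^{\PLAA}(x)$ for each matrix unit and extending by linearity, rather than grouping the right-hand side by columns --- but this is the same argument, and your explicit handling of the degenerate all-ones case ($n(x)=0$, $e_0=0$) is if anything a touch more careful.
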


\begin{proof}[Proof for \cref{lm:bk}]
    We first prove that $\langle x e_{n(x)}^\intercal, E_{mn} \rangle= b_{i j}^{\PLAA}(x)$ for any $1\leq i\leq j\leq N$.
    Notice that 
    \begin{equation*}
        \langle x e_{n(x)}^\intercal, E_{i j} \rangle
        =
        \begin{cases}
            x_i & x_j = -1 \land x_{j+1} =1 \land \dots\land x_N = 1, \\
            0 & \text{otherwise}.
        \end{cases}
    \end{equation*}
    Therefore, we have
    \begin{equation*}
        \begin{aligned}
            \langle x e_{n(x)}^\intercal, E_{i j} \rangle
            &=
            x_i \cdot \frac{1 - x_j}{2} \cdot \frac{1 + x_{j+1}}{2} \cdot \dots \cdot \frac{1 + x_N}{2} \\
            &= 
            \begin{cases}
                - \frac{1-x_j}{2}\prod_{k=j+1}^N \frac{1+x_{k}}{2}, & i = j, \\
                x_i \frac{1-x_j}{2}\prod_{k=j+1}^N \frac{1+x_{k}}{2}, & i < j.
            \end{cases}\\
            &=
            b_{i j}^{\PLAA}(x).
        \end{aligned}
    \end{equation*}

    For any upper triangle matrix $U=\sum_{1\leq i\leq j \leq N} U_{ij} E_{ij}$, we have
    \begin{equation*}
        \langle x e_{n(x)}^\intercal, U \rangle
        =
        \langle x e_{n(x)}^\intercal, \sum_{1\leq i\leq j \leq N} U_{ij} E_{ij} \rangle
        =
        \sum_{1\leq i\leq j \leq N} U_{ij} \langle x e_{n(x)}^\intercal, E_{ij} \rangle
        =
        \sum_{1\leq i\leq j \leq N} U_{ij} b_{i j}^{\PLAA}(x).
    \end{equation*}
\end{proof}

Define $b_k(x):=\langle x e_{n(x)}^\intercal, U_k\rangle$. By \cref{lm:bk}, we have $b_k(x)=\sum_{1\leq i\leq j \leq N} (U_k)_{ij} b_{i j}^{\PLAA}(x)$.
Then the convergence function $\hat{f}$ can be represented as
\begin{equation*}
    \hat{f}(x) = \sum_{k=1}^r \sum_{x'\in\cX_{N_0}} \hat{a}(x') b_k (x') b_k (x)=\sum_{k=1}^r \hat{p}_k b_k(x),
\end{equation*}
where $\hat{p}_k = \sum_{x'\in\cX_{N_0}} \hat{a}(x') b_k (x')$.

For any interpolator $g(x)=\sum_{k=1}^r p_k b_k(x)$ on $\cX_{N_0}$:
\begin{itemize}[leftmargin=*]
    \item If $b_k(x)=0$ for all $x\in\cX_{N_0}$, we have $\hat{p}_k=0$ and $p_k^2 \ge 0 = \hat{p}_k^2$;
    \item If $b_k(x)\neq 0$ for some $x\in\cX_{N_0}$, we have $\hat{p}_k = p_k$ and thus $\hat{p}_k^2 = p_k^2$. To show this, without loss of generality, we suppose that for $1\leq k\leq m$, there exists some $x\in\cX_{N_0}$ such that $b_k(x)\neq 0$, and for $m+1\leq k\leq r$, $b_k(x)= 0$ for all $x\in\cX_{N_0}$. It suffices to show that the following equation has a unique solution:
    \begin{equation*}
        \begin{bmatrix}
            b_1(x_1) & \dots & b_m(x_1) \\
            \vdots    & \vdots & \vdots \\
            b_1(x_M) & \dots & b_m(x_M) 
        \end{bmatrix}
        \begin{bmatrix}
            p_1 \\
            \vdots \\
            p_m
        \end{bmatrix}
        = 
        \begin{bmatrix}
            c^*(x_1) \\
            \vdots \\
            c^*(x_M)
        \end{bmatrix},
    \end{equation*}
    where $M=|\cX_{N_0}|$ and $\cX_{N_0}=\{x_1,\dots,x_M\}$. 
    Since $p_1=\hat{p}_1,\dots, p_m = \hat{p}_m$ is a solution, it remains to prove the uniqueness.
    Let 
    \begin{equation*}
        B_m = \begin{bmatrix}
            b_1(x_1) & \dots & b_m(x_1) \\
            \vdots    & \vdots & \vdots \\
            b_1(x_M) & \dots & b_m(x_M) 
        \end{bmatrix}.
    \end{equation*}
    It suffices to show $\op{rank}(B_m)=m$. Note that
    $\left(B_m^\intercal B_m\right)_{ii}=\sum_{x\in\cX_{N_0}} b_i(x)^2 > 0$ and
    $\left(B_m^\intercal B_m\right)_{i j}=\sum_{x\in\cX_{N_0}} b_i(x) b_j(x) = 0$. We have $\op{rank}(B_m^\intercal B_m)=m$ and thus $\op{rank}(B_m)=m$.
\end{itemize}
Hence, for any interpolator $g$ on $\cX_{N_0}$, we have 
\begin{equation*}
    \DegP_{\cB_{\PLAA}^{\GRPE,\cU}}(\hat{f})
    \leq
    \DegP_{\cB_{\PLAA}^{\GRPE,\cU}}(g),
\end{equation*}
or equivalently,
\begin{equation*}
    f_{\PLAA}^{\GRPE,\cU}(x;p_t) \to \arg\min_{g\in\cG_{N_0, p^*}^{\GRPE,\cU}} \DegP_{\cB_{\PLAA}^{\GRPE,\cU}}(g),\quad\text{as } t\to\infty.
\end{equation*}

\subsection{Proof for \cref{cr:plaa-grpe}}
By \cref{thm:plaa-grpe}, the PLAA with GRPE achieves LHDH generalization from $\cX_{N_0}$ to $\cX_N$ if and only if the target concept $c(x)$ is the min-degree interpolator w.r.t. the linearly independent set $\cB_{\PLAA}^{\GRPE,\cU}$. Therefore, it is equivalent to prove the target concept $c(x)$ is the min-degree interpolator w.r.t. the linearly independent set $\cB_{\PLAA}^{\GRPE,\cU}$ if and only if 
\begin{equation*}
    \left\{k\mid (U_k)_{[N_0], [N_0]}=0\right\} 
    \subseteq 
    \left\{k\mid c_k = 0\right\}.
\end{equation*}
We denote $\left\{k\mid (U_k)_{[N_0], [N_0]}=0\right\}$ by $\cK_1$ and 
$\left\{k\mid c_k = 0\right\}$ by $\cK_2$ in this proof.

We first show the sufficiency. Assume that $c(x)$ is not the min-degree interpolator w.r.t. the linearly independent set $\cB_{\PLAA}^{\GRPE,\cU}$ on $\cX_{N_0}$. In other words, there exists an interpolator $\tilde{c}(x)=\sum_{k=1}^r \tilde{c}_k \sum_{1\leq i\leq j\leq N}(U_k)_{ij}b_{ij}^{\PLAA}(x)$ 
on $\cX_{N_0}$ such that
\begin{equation*}
    \DegP_{\cB_{\PLAA}^{\GRPE,\cU}}(\tilde{c})<\DegP_{\cB_{\PLAA}^{\GRPE,\cU}}(c).
\end{equation*}
Since both $c(x)$ and $\tilde{c}(x)$ are interpolators on $\cX_{N_0}$, we have
\begin{equation*}
    \sum_{k=1}^r c_k (U_k)_{[N_0],[N_0]}
    =
    \sum_{k=1}^r \tilde{c}_k (U_k)_{[N_0],[N_0]},
\end{equation*}
or equivalently, 
\begin{equation*}
    \sum_{k\not\in\cK_1} c_k (U_k)_{[N_0],[N_0]}
    =
    \sum_{k\not\in\cK_1} \tilde{c}_k (U_k)_{[N_0],[N_0]}.
\end{equation*}

Since $(U_i)_{kl}(U_j)_{kl}=0$ for all $i\neq j$ and $1\leq k, l\leq N$, we have $\left\{(U_k)_{[N_0],[N_0]}\right\}_{k\not\in\cK}$ are linearly independent. This implies $c_k=\tilde{c}_k$ for all $k\not\in\cK_1$.

Since $\cK_1\subseteq\cK_2$, we have $c_k=0$ and thus $c_k^2 \leq \tilde{c}_k^2$ for all $k\in\cK_1$. Hence, we have
\begin{equation*}
    \DegP_{\cB_{\PLAA}^{\GRPE,\cU}}(c)\leq \DegP_{\cB_{\PLAA}^{\GRPE,\cU}}(\tilde{c}),
\end{equation*}
which contradicts the assumption.

We then prove the necessity. Assume that $\cK_1\not\subseteq\cK_2$. Then there exists some $k_0\in\cK_1$ but $k_0\not\in\cK_2$.
Define $\tilde{\tilde{c}}(x)=\sum_{k=1}^r \tilde{\tilde{c}}_k \sum_{1\leq i\leq j\leq N}(U_k)_{ij}b_{ij}^{\PLAA}(x)$ where
\begin{equation*}
    \tilde{\tilde{c}}_k = \begin{cases}
        0, & k=k_0,\\
        c_k, & k\neq k_0.\\
    \end{cases}
\end{equation*}
By the definition of $\cK_1$, $\tilde{\tilde{c}}(x)=c(x)$ for all $x\in\cX_{N_0}$ and thus $\tilde{\tilde{c}}(x)$ is also an interpolator on $\cX_0$. Note that $c_{k_0}\neq 0$. By the definition of $\tilde{\tilde{c}}(x)$, we have
\begin{equation*}
    \DegP_{\cB_{\PLAA}^{\GRPE,\cU}}(\tilde{\tilde{c}})\leq \DegP_{\cB_{\PLAA}^{\GRPE,\cU}}(c),
\end{equation*}
which contradicts that the target concept $c(x)$ is the min-degree interpolator w.r.t. the linearly independent set $\cB_{\PLAA}^{\GRPE,\cU}$ on $\cX_{N_0}$.

        

\section{Experiments}\label{appdx:experiments}

\subsection{Detailed Explanation of RPE-Square}\label{appdx-subsec:rpe-square}

The design of RPE-Square follows the principle: when devising position embeddings for length generalization, one needs to consider both the LDHD generalization in the latent space and the nuisance of the data format in the input sequence space. This principle is derived from the LDHD generalization perspective for length generalization, illustrating the insights of LDHD generalization for practical models.

To further elaborate, we show how RPE-Square is constructed in the guidance of the proposed principle to achieve length generalization of the URF addition (with CoT).

\begin{itemize}[leftmargin=*]
    \item \textbf{LDHD generalization in the latent space}. For the addition, the LDHD generalization in the latent space can be handled by RPE.
    \item \textbf{The data format nuisance of URF}. To handle the data format, we need to consider the mapping between a latent variable and its URF string, i.e., how the latent variable $h_n=\left[(x_0,y_0,z_0),\dots,(x_{t-1},y_{t-1},z_{t-1}), (x_t, y_t,*),\dots,(x_{n-1},y_{n-1},*)\right]$ can be recovered from the corresponding URF string ``$\mathtt{[BOS] x_0 \dots x_{n_x - 1} + y_0 \dots y_{n_y - 1}} = z_0 \dots z_{t-1}$'' (to predict $\mathtt{z_t}$). The URF mapping can be described as follows: concatenate ``$\mathtt{[BOS]}$'', the first elements of the dimensions (up to the ``highest'' nonzero element), a ``$\mathtt{+}$'', the second elements of the dimensions (up to the ``highest'' nonzero element), a ``$\mathtt{=}$'', and the third elements of the dimensions (up to the ``highest'' non-``*'' element). 
    
    According to the URF mapping, we notice that the ``position'' of an element in the latent variable can be identified by its relative distances to ``$\mathtt{[BOS]}$'', ``$\mathtt{+}$'', and ``$\mathtt{=}$''. (Here, the relative distance from the position $i$ to the position $j$ is $i-j$.). Denote the tuple of the relative distances from some token $s$ to ``$\mathtt{[BOS]}$'', ``$\mathtt{+}$'', ``$\mathtt{=}$'' by $(n_1(s), n_2(s), n_3(s))$. Then we have
    \begin{equation*}
            s = \begin{cases}
                x_{n_1(s)-1}, & n_1(s) > 0, n_2(s) < 0, n_3(s) < 0,\\
                y_{n_2(s)-1}, & n_1(s) > 0, n_2(s) > 0, n_3(s) < 0,\\
                z_{n_3(s)-1}, & n_1(s) > 0, n_2(s) > 0, n_3(s) > 0.
            \end{cases}
    \end{equation*}

    Note that $z_k$ can be determined by $x_{k-1}$, $x_k$, $y_{k-1}$, $y_k$, and $z_k$ (we ignore the boundary cases in this discussion for simplicity). To predict the next token of some $s$ where $n_1(s) > 0, n_2(s) > 0, n_3(s) > 0$, we need to identify the elements $s_1, s_2, s_3, s_4, s_5$ such that 
    \begin{equation*}
        \left\{\begin{matrix}
            n_1(s_1) = n_3(s) - 1,& n_2(s_1) < 0,& n_3(s_1) < 0, \\
            n_1(s_2) = n_3(s),& n_2(s_2) < 0,& n_3(s_2) < 0, \\
            n_2(s_3) = n_3(s) - 1,& n_1(s_3) > 0,& n_3(s_3) < 0, \\
            n_2(s_4) = n_3(s),& n_1(s_4) > 0,& n_3(s_4) < 0, \\
            n_3(s_5) = n_3(s) - 1,& n_1(s_5) > 0,& n_2(s_5) > 0.
        \end{matrix}\right.
    \end{equation*}

    Therefore, to address the URF data format nuisance, the position embedding can consider the relative distances to some tokens.
\end{itemize}


\subsection{Experiment Details}\label{appdx-subsec:exp-details}

\subsubsection{Unaligned Copy}


An instance of the unaligned copy task is like
\begin{equation*}
    \mathtt{b\,x_0\,\dots\,x_{n-1}\,=\,x_0\,\dots\,x_{n-1}\,e}.
\end{equation*}
The model is given the input $\mathtt{b x_0 \dots x_{n-1} =}$ and expected to output the copy of $\mathtt{x_0 \dots x_{n-1}}$. We use ``$\mathtt{b}$'' and ``$\mathtt{e}$'' instead of ``$\mathtt{[BOS]}$'' and ``$\mathtt{[EOS]}$'' for a simpler implementation with the GPT-2 tokenizer.

We sample 2000 $n$-length instances for each $n=1,\dots,5$ as the training data. In the evaluation, we examine the learned models on instances of length $1-10$. We train GPT-2 with key-only RPE and RPE-Square, respectively. The model is trained by AdamW with the cosine scheduler, where the initial learning rate is $0.0005$, the weight decay is $1.0$, the warmup ratio is $0.05$, the gradient accumulation step is $2$, and the per-device training batch size is $256$. We set the training steps to $10000$ but early stop at step $1000$ as the model with RPE-Square has achieved nearly perfect length generalization while the model with RPE shows almost no length generalization then. 

From the perspective of LDHD generalization, the latent variable corresponding to the input $\mathtt{b x_0 \dots x_{n-1} = x_0 \dots x_k}$ is 
\begin{equation*}
    \left[(x_0, x_0),\dots, (x_k, x_k), (x_{k+1}, *),\dots, (x_N, x_N)\right].
\end{equation*}
The LDHD generalization in the latent space can be effectively addressed by RPE.
The data format mapping can be handled by considering the relative distance to the tokens ``$\mathtt{b}$'' and ``$\mathtt{=}$''. Therefore, RPE-Square is expected to work for the length generalization of the unaligned copy. RPE does not properly deal with the unaligned data format and thus could fail to achieve length generalization in this scenario.



\subsubsection{URF Addition}
For the URF $n$-addition training data, we first sample the lengths of two addends uniformly from $\{1,\dots, n\}\times\{1,\dots,n\}$. For two addends of lengths $(n_1, n_2)$, we then samples from $[10^{n_1-1}, 10^{n_1}-1]\times [10^{n_2-1}, 10^{n_2}-1]$ (If $n_i=0$, then the corresponding sample interval is $[0,9]$). This is to guarantee the addends are length-uniform. For the addition 
$x_{n_1-1}\dots x_0 + y_{n_2-1}\dots y_0 = z_{n_3} z_{n_3 - 1} \dots z_0$, the training instance is
\begin{equation*}
    \mathtt{b\,x_0\,\dots\,x_{n_1-1}\,+\,y_0\,\dots\,y_{n_2-1}\,=\,z_0\,\dots\,z_{n_3-1}\,z_{n_3}\,e}. 
\end{equation*}
Here, we add spaces between the characters to ensure each is tokenized separately. In our experiments, we train with $10000$ URF $4$-addition samples.

We choose GPT-2 with key-only position embeddings as our model. For the RPE and RPE-Square settings, we augment the GPT-2 model with RPE and RPE-Square, respectively. The implementation is adapted from HuggingFace \citep{wolf2020transformers}.

We train the models by AdamW, with the initial learning rate $0.0005$, the weight decay $1.0$, and the cosine scheduler. The warmup ratio is $0.05$.
The gradient accumulation step is $2$.
The per-device training batch size is $128$. We train the models for $20000$ steps and $200000$ steps. The experiments are run on a server with Ubuntu. The models are trained on two NVIDIA GeForce RTX 3090 GPUs.
\section{Additional Experiments}\label{appdx:addtional-experiments}

\subsection{Additional Evaluations of RPE-Square}

We present more evaluation results of RPE-Square by comparing it against more different position embeddings in additional tasks. Concretely, we consider three extra tasks: ParityCoT, Multiplication (1 * N), and Division (1 * N). Across all tasks (including Addition and Copy), we compare RPE-Square with five position embeddings: RPE \citep{shaw2018self}, RoPE \citep{su2024roformer}, NoPE \citep{kazemnejad2024impact}, ALiBi \citep{press2021train}, and Abacus \citep{mcleish2024transformers}. We keep the hyperparameter setting for the addition position embeddings in Addition and Copy: the model is trained by AdamW with the cosine scheduler, the initial learning rate is $0.0005$, the weight decay is $1.0$, the warmup ratio is $0.05$, the gradient accumulation step is $2$, and the per-device training batch size is $256$. In the three extra tasks, we choose the initial learning rate as $0.0005$ and keep the other hyperparameters the same.

Figures~\ref{fig:ae-addition}-\ref{fig:ae-division1n} presents the experimental results. Both RPE-Square and Abacus achieve non-trivial length generalization performance in the five tasks, and the overall evaluation results of RPE-Square are better than those of Abacus. The other four position embeddings do not achieve good length generalization performance. The reason, from our LDHD perspective, is that RPE-Square and Abacus succeed in handling the unaligned data format, while the others do not.

\begin{figure*}[h]
    \centering
    \includegraphics[width=0.5\linewidth]{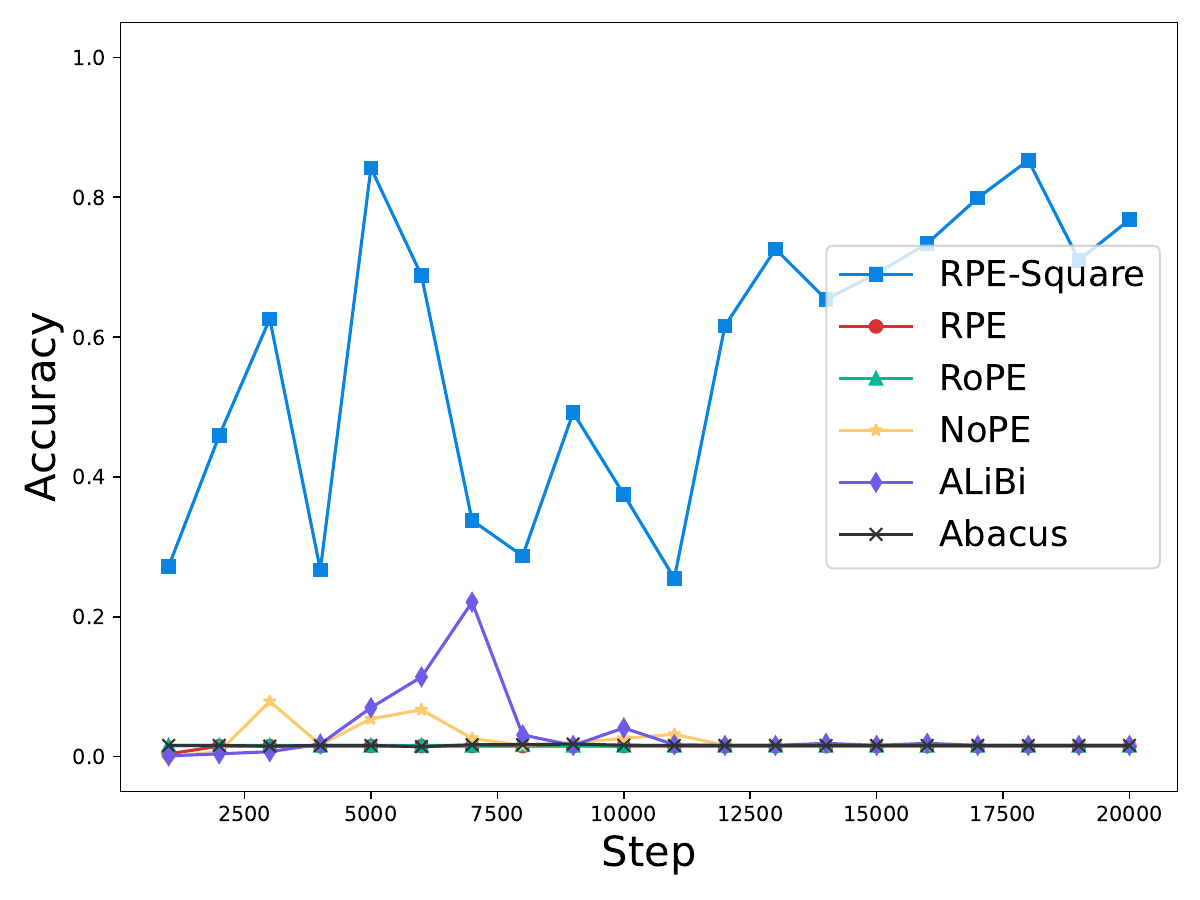}
    \caption{Addition. The models are trained on URF 4-addition and tested on URF 5-addition}
    \label{fig:ae-addition}
\end{figure*}

\begin{figure*}[h]
    \centering
    \begin{subfigure}[t]{0.32\linewidth}
        \centering
        \includegraphics[width=\linewidth]{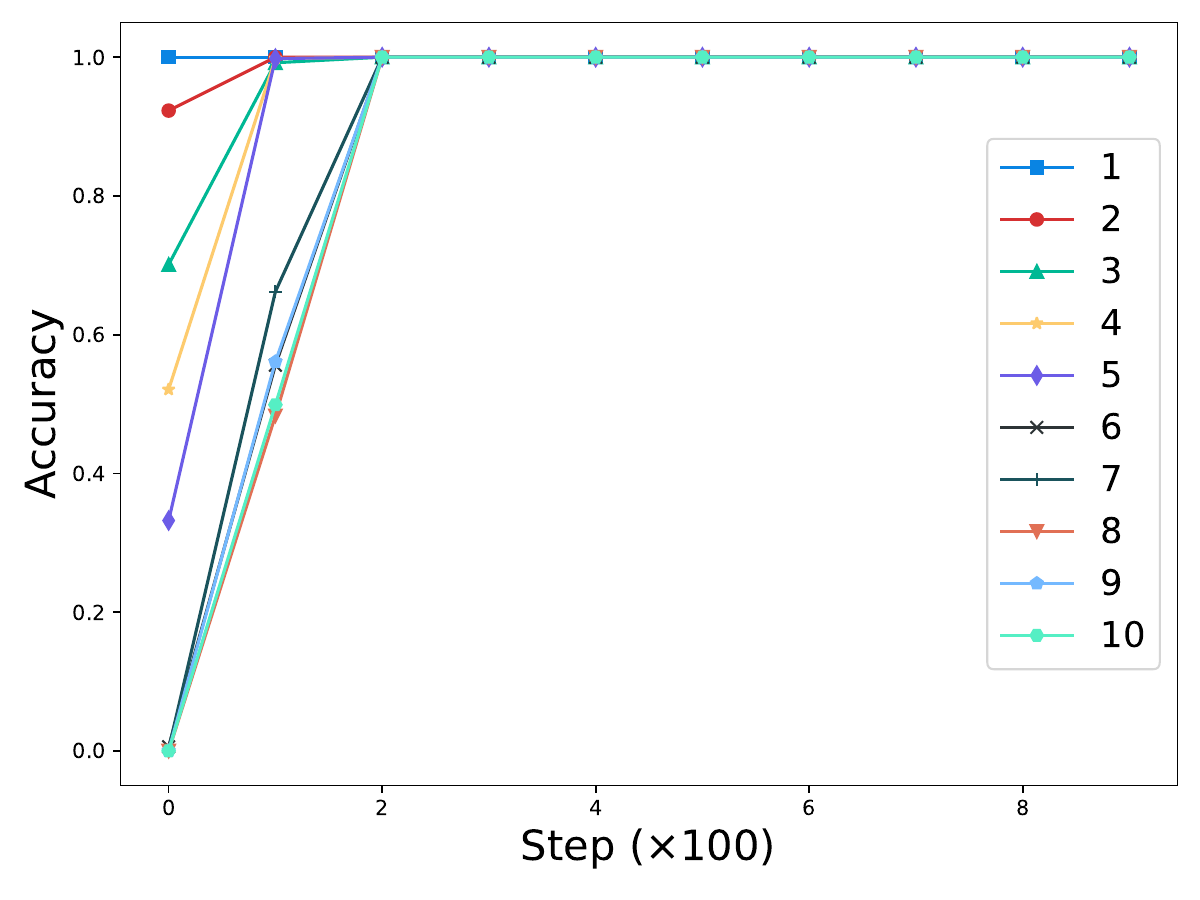}
        \subcaption{RPE-Square}
    \end{subfigure}
    \hfill
    \begin{subfigure}[t]{0.32\linewidth}
        \centering
        \includegraphics[width=\linewidth]{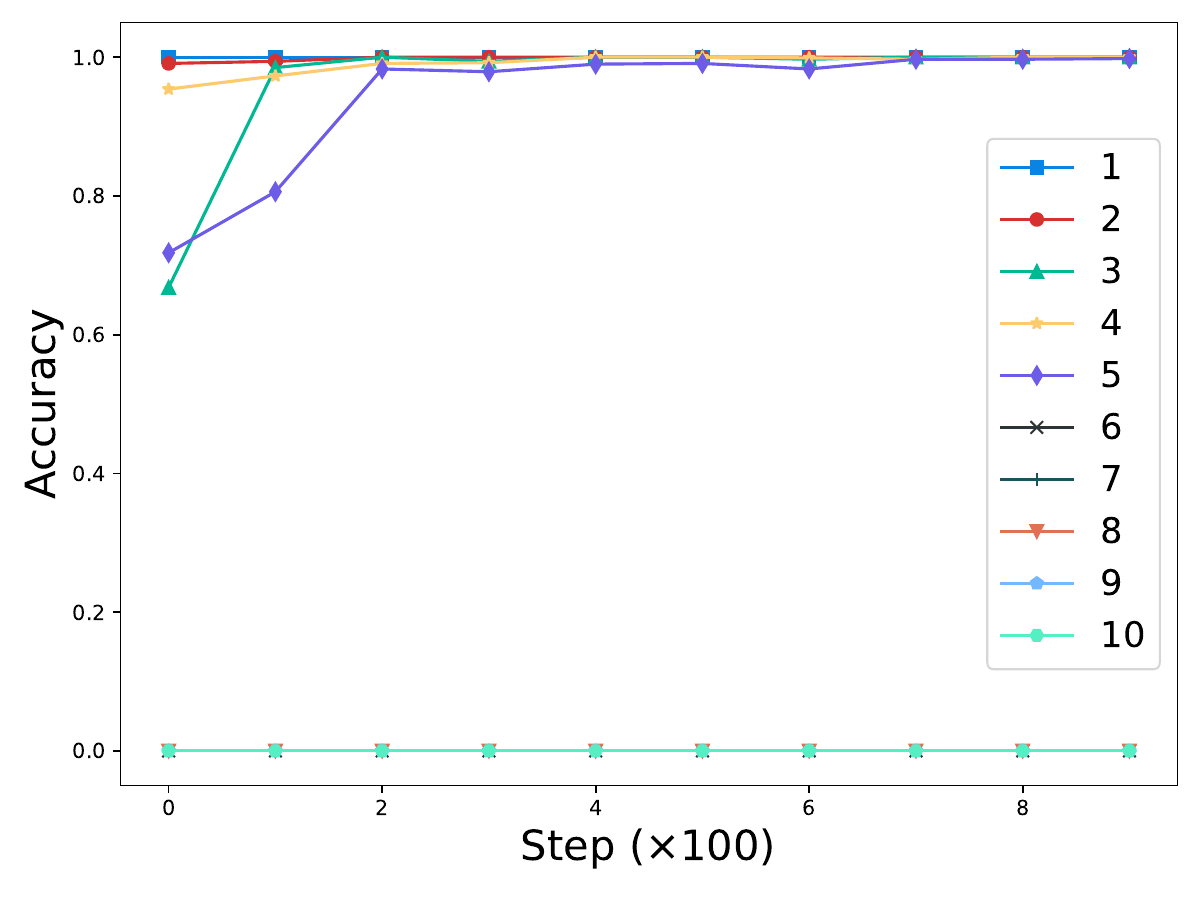}
        \subcaption{RPE}
    \end{subfigure}
    \hfill
    \begin{subfigure}[t]{0.32\linewidth}
        \centering
        \includegraphics[width=\linewidth]{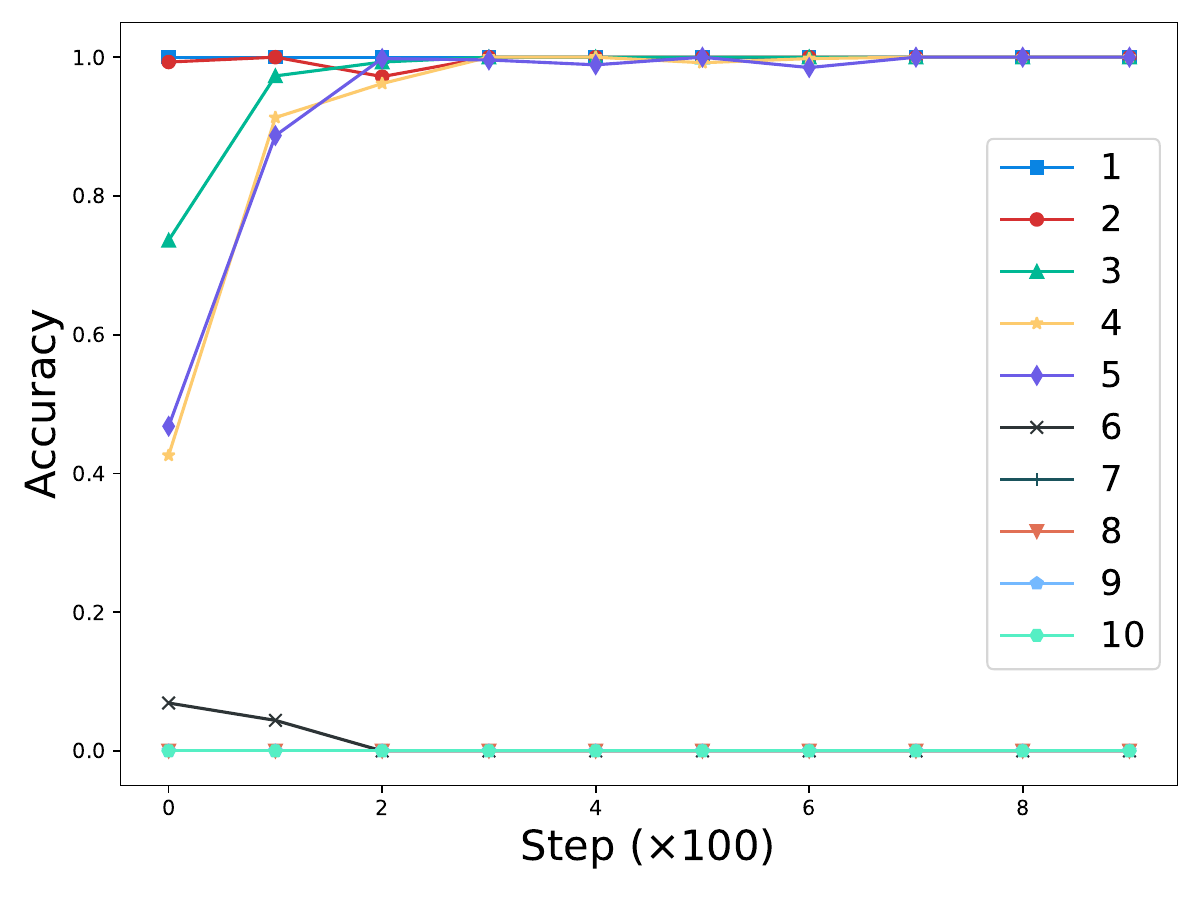}
        \subcaption{RoPE}
    \end{subfigure}
    \begin{subfigure}[t]{0.32\linewidth}
        \centering
        \includegraphics[width=\linewidth]{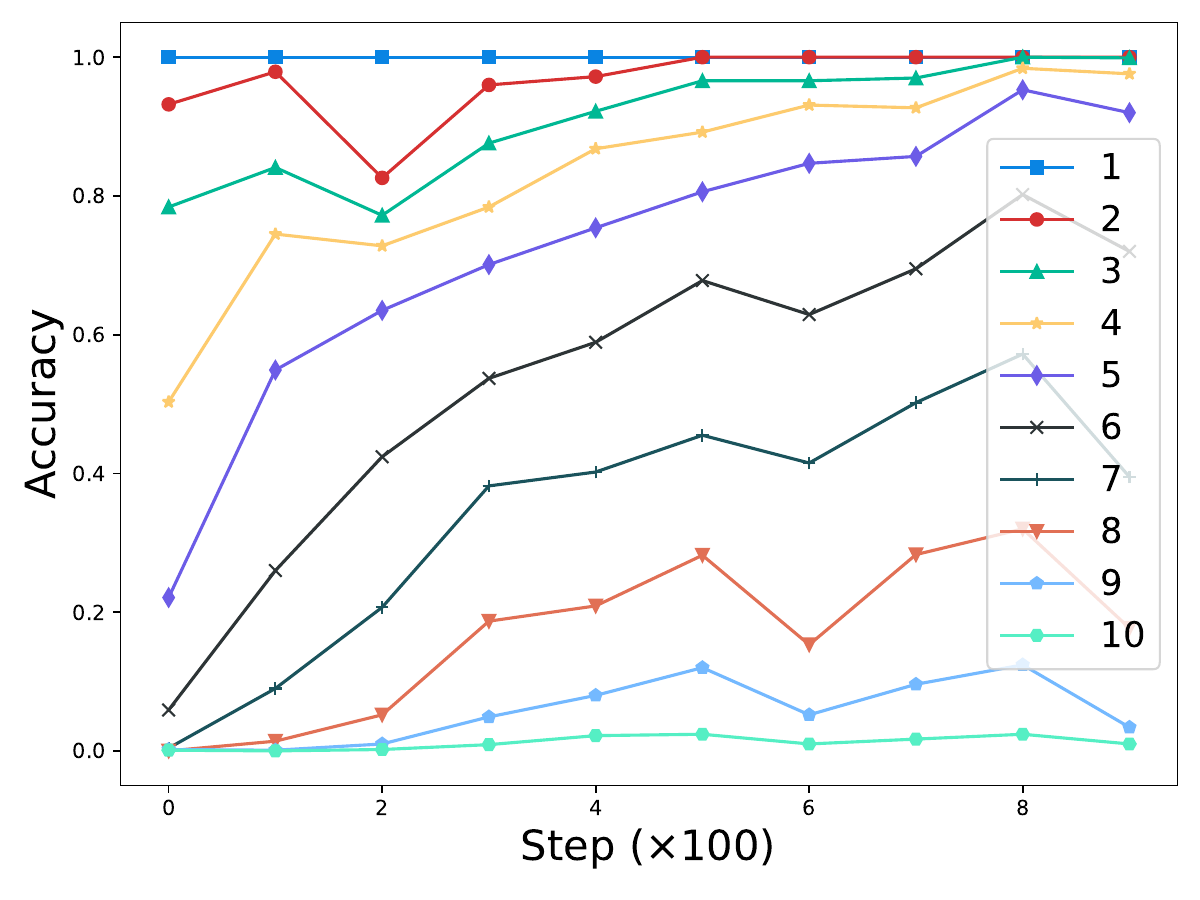}
        \subcaption{NoPE}
    \end{subfigure}
    \begin{subfigure}[t]{0.32\linewidth}
        \centering
        \includegraphics[width=\linewidth]{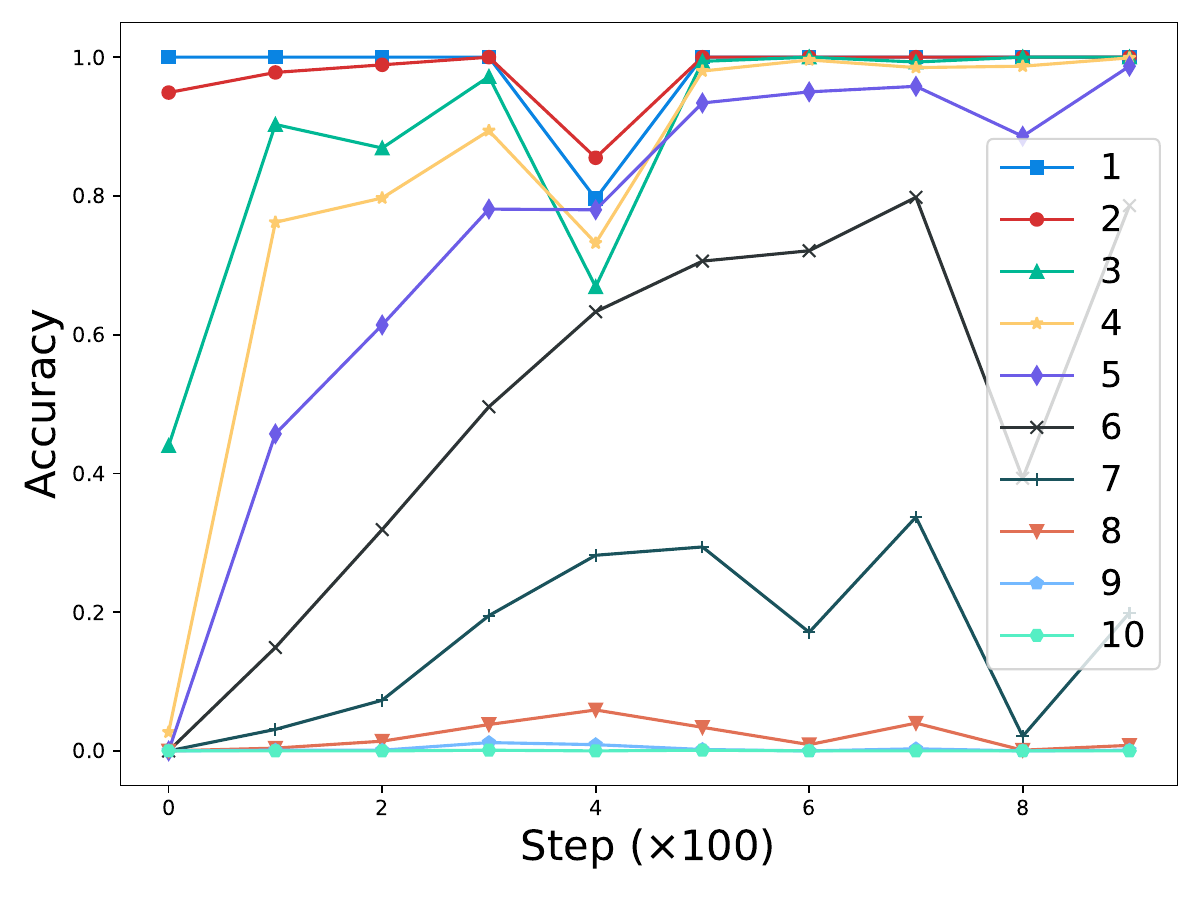}
        \subcaption{ALiBi}
    \end{subfigure}
    \begin{subfigure}[t]{0.32\linewidth}
        \centering
        \includegraphics[width=\linewidth]{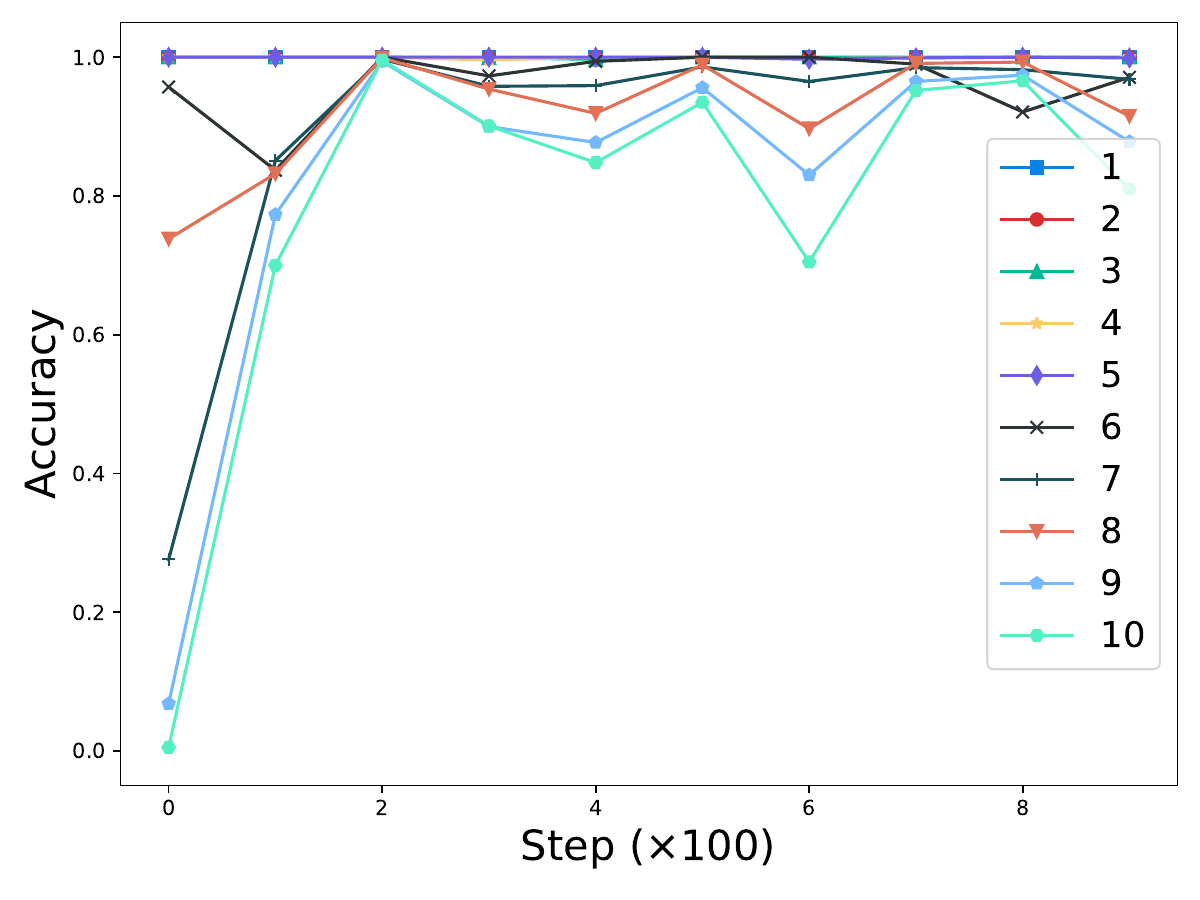}
        \subcaption{Abacus}
    \end{subfigure}
    \caption{Copy. The models are trained on scales 1-5 and tested on scales 1-10.}
    \label{fig:ae-copy}
\end{figure*}

\begin{figure*}[h]
    \centering
    \begin{subfigure}[t]{0.32\linewidth}
        \centering
        \includegraphics[width=\linewidth]{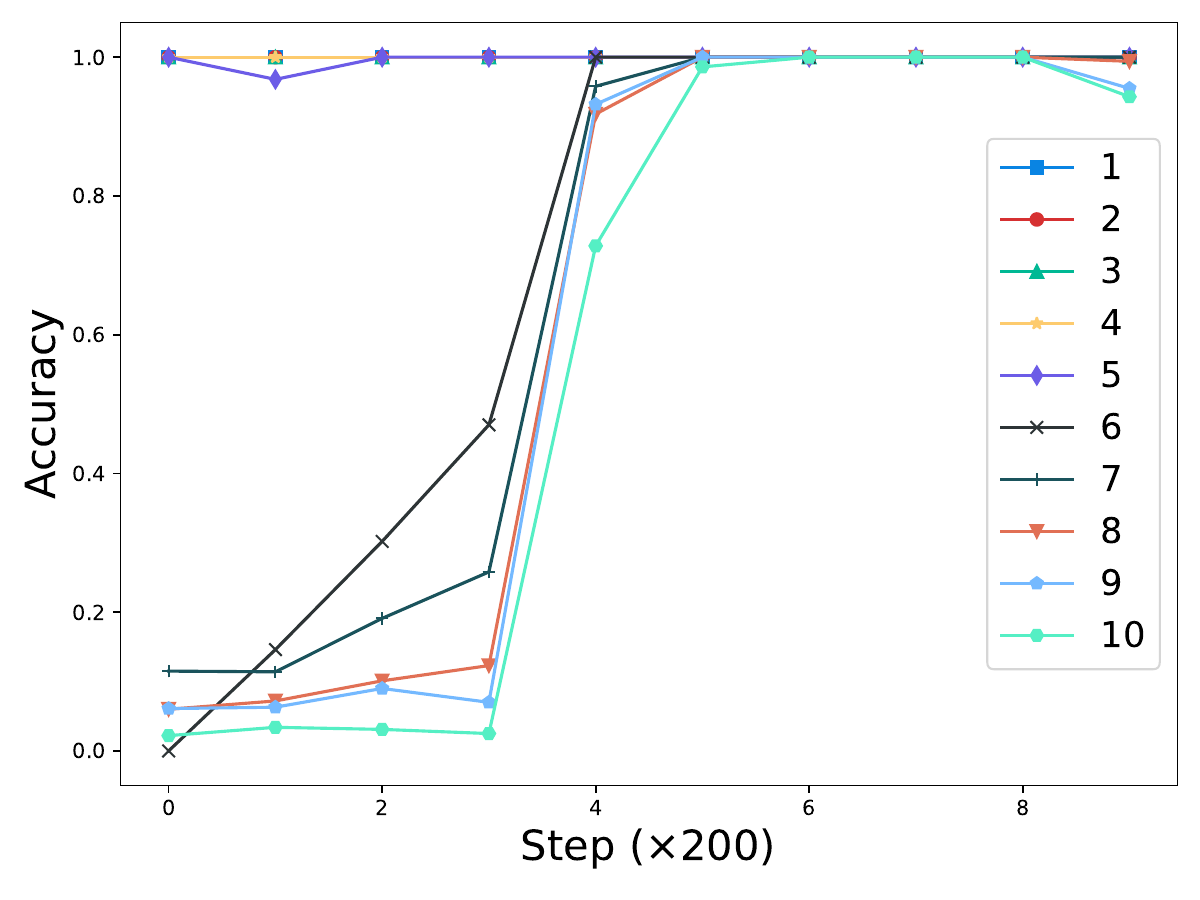}
        \subcaption{RPE-Square}
    \end{subfigure}
    \hfill
    \begin{subfigure}[t]{0.32\linewidth}
        \centering
        \includegraphics[width=\linewidth]{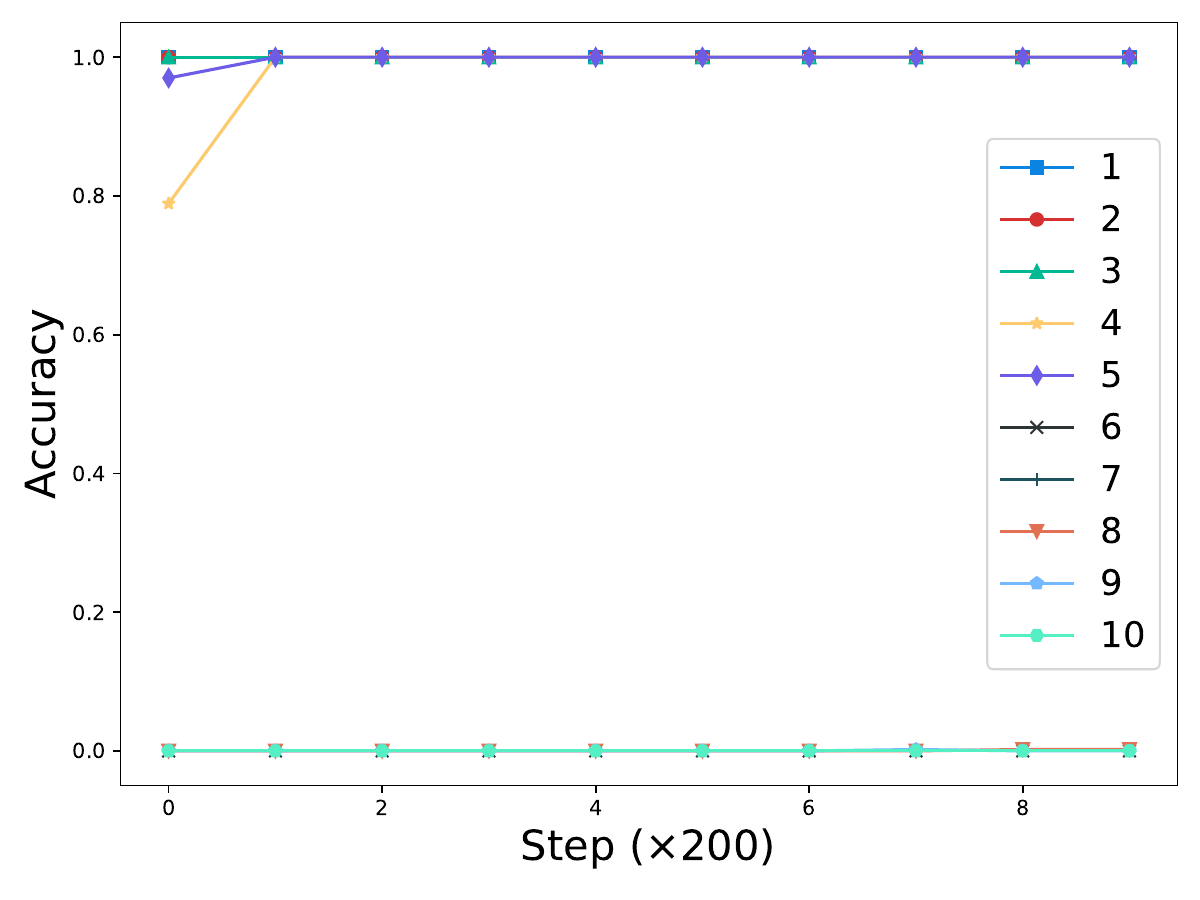}
        \subcaption{RPE}
    \end{subfigure}
    \hfill
    \begin{subfigure}[t]{0.32\linewidth}
        \centering
        \includegraphics[width=\linewidth]{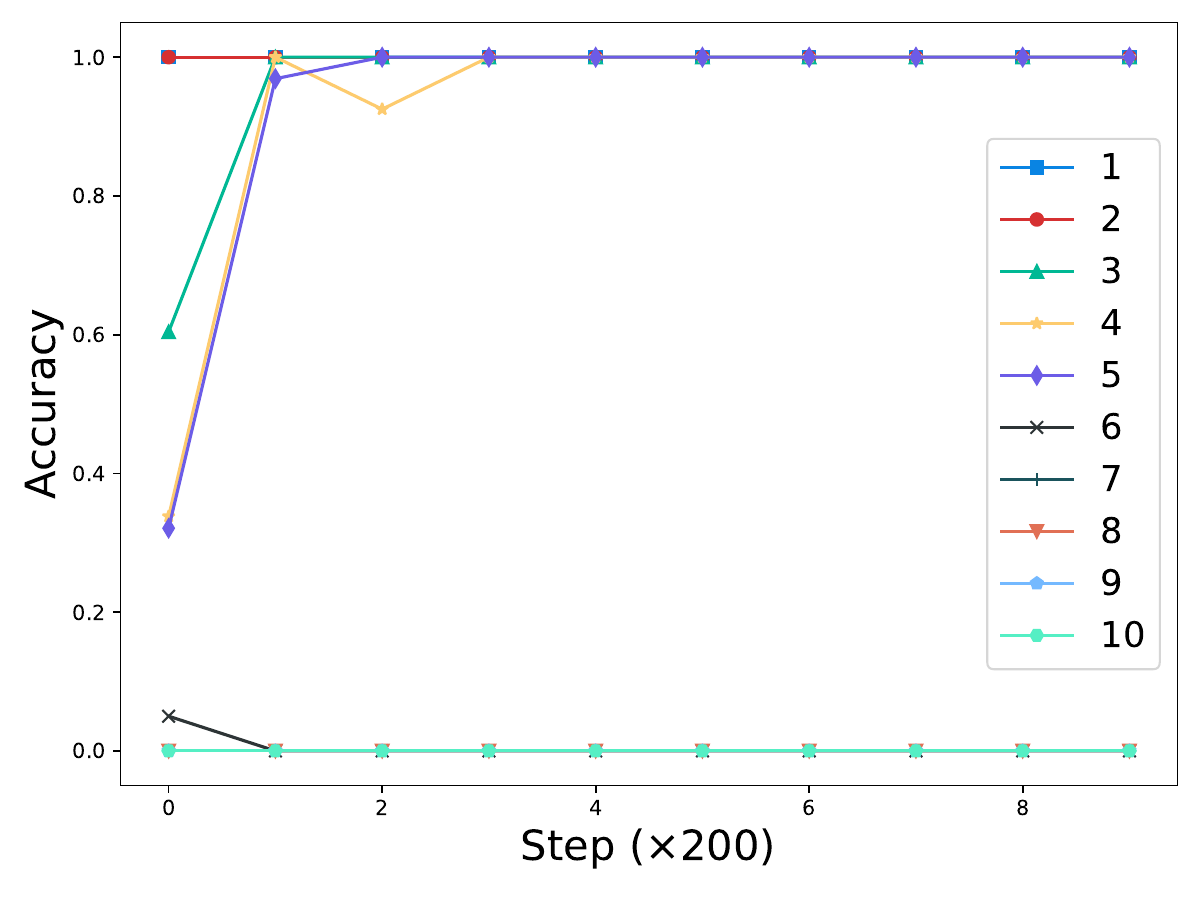}
        \subcaption{RoPE}
    \end{subfigure}
    \begin{subfigure}[t]{0.32\linewidth}
        \centering
        \includegraphics[width=\linewidth]{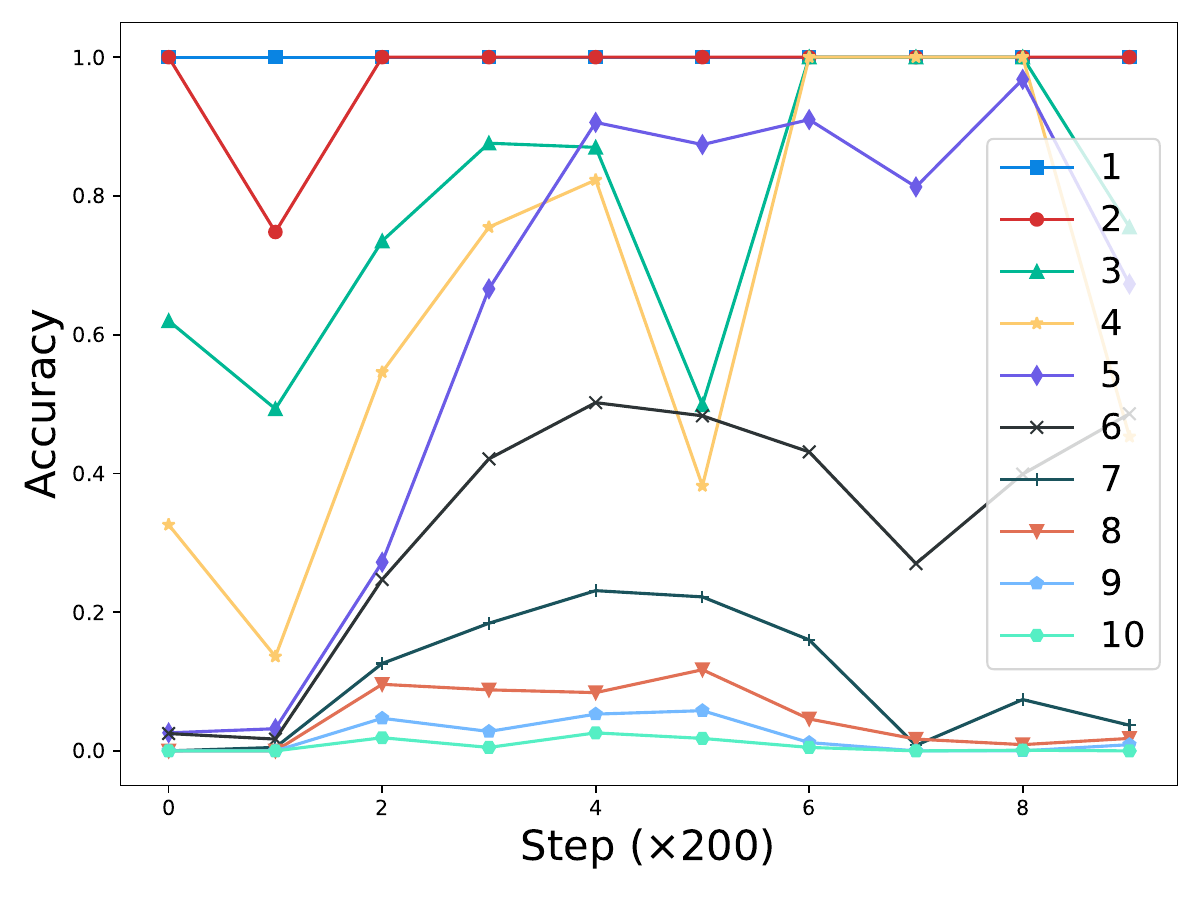}
        \subcaption{NoPE}
    \end{subfigure}
    \begin{subfigure}[t]{0.32\linewidth}
        \centering
        \includegraphics[width=\linewidth]{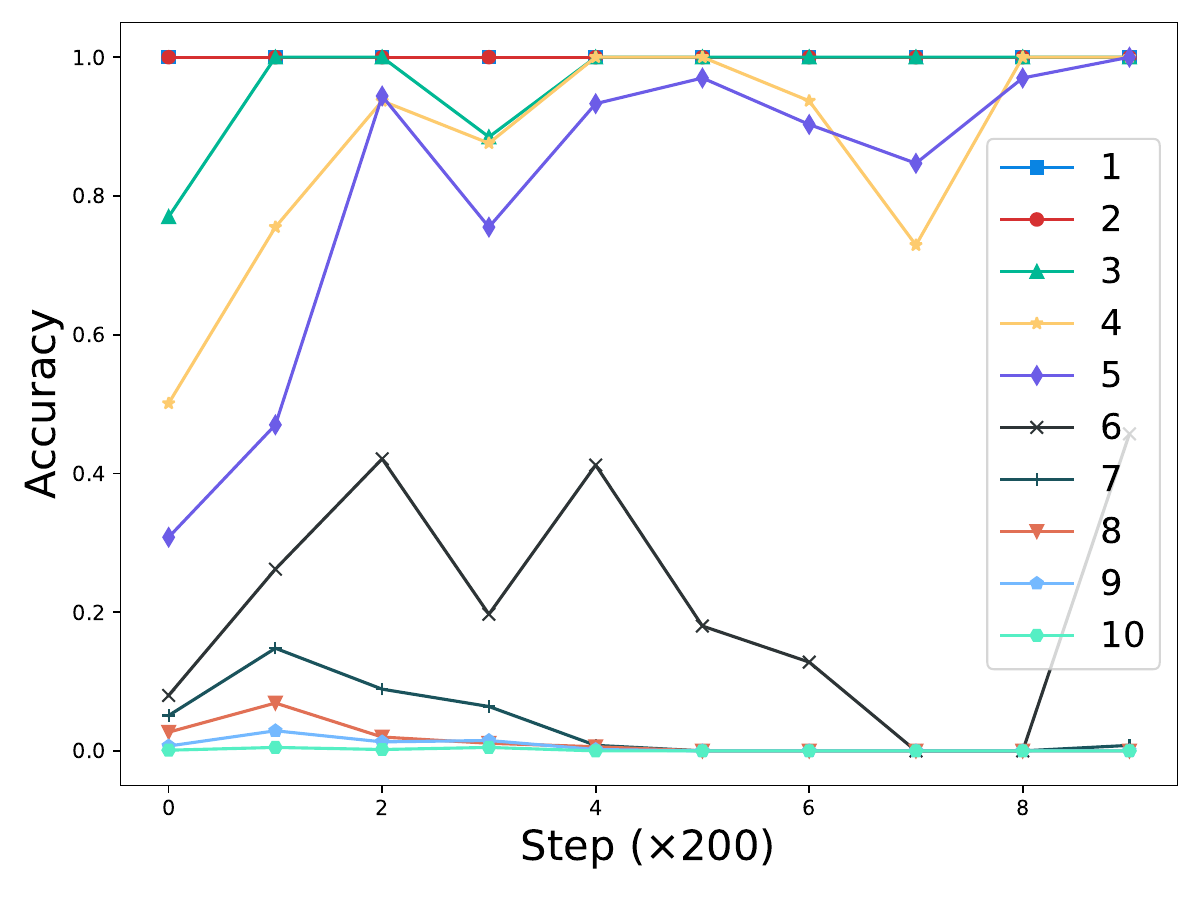}
        \subcaption{ALiBi}
    \end{subfigure}
    \begin{subfigure}[t]{0.32\linewidth}
        \centering
        \includegraphics[width=\linewidth]{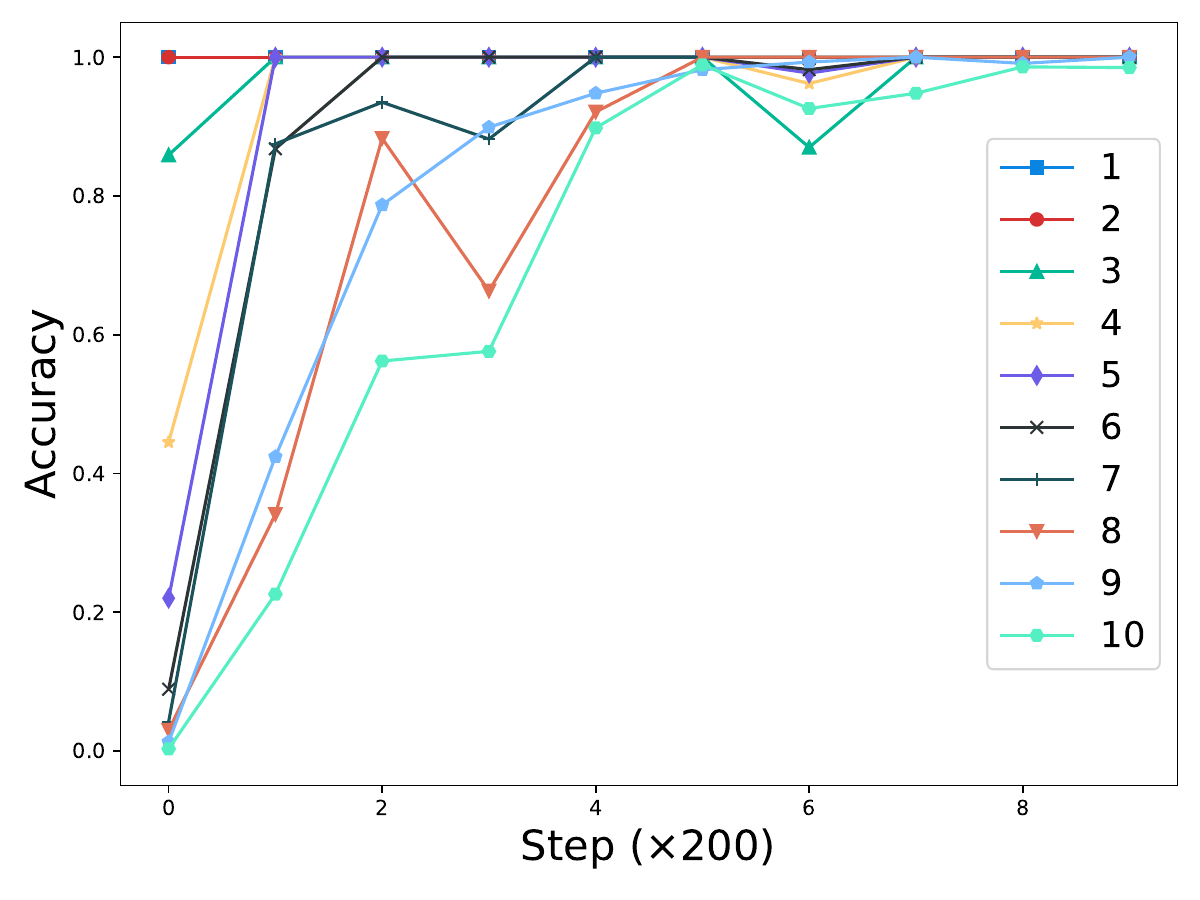}
        \subcaption{Abacus}
    \end{subfigure}
    \caption{ParityCoT. The models are trained on scales 1-5 and tested on scales 1-10. An instance is like ``$\mathtt{[BOS] x_0 \dots x_{n-1} = y_0 \dots y_{n-1} [EOS]}$'', where $y_0=x_0$ and $y_k = x_k \oplus y_{k-1}$ for $k=1,\dots, n-1$.}
    \label{fig:ae-paritycot}
\end{figure*}

\begin{figure*}[h]
    \centering
    \begin{subfigure}[t]{0.32\linewidth}
        \centering
        \includegraphics[width=\linewidth]{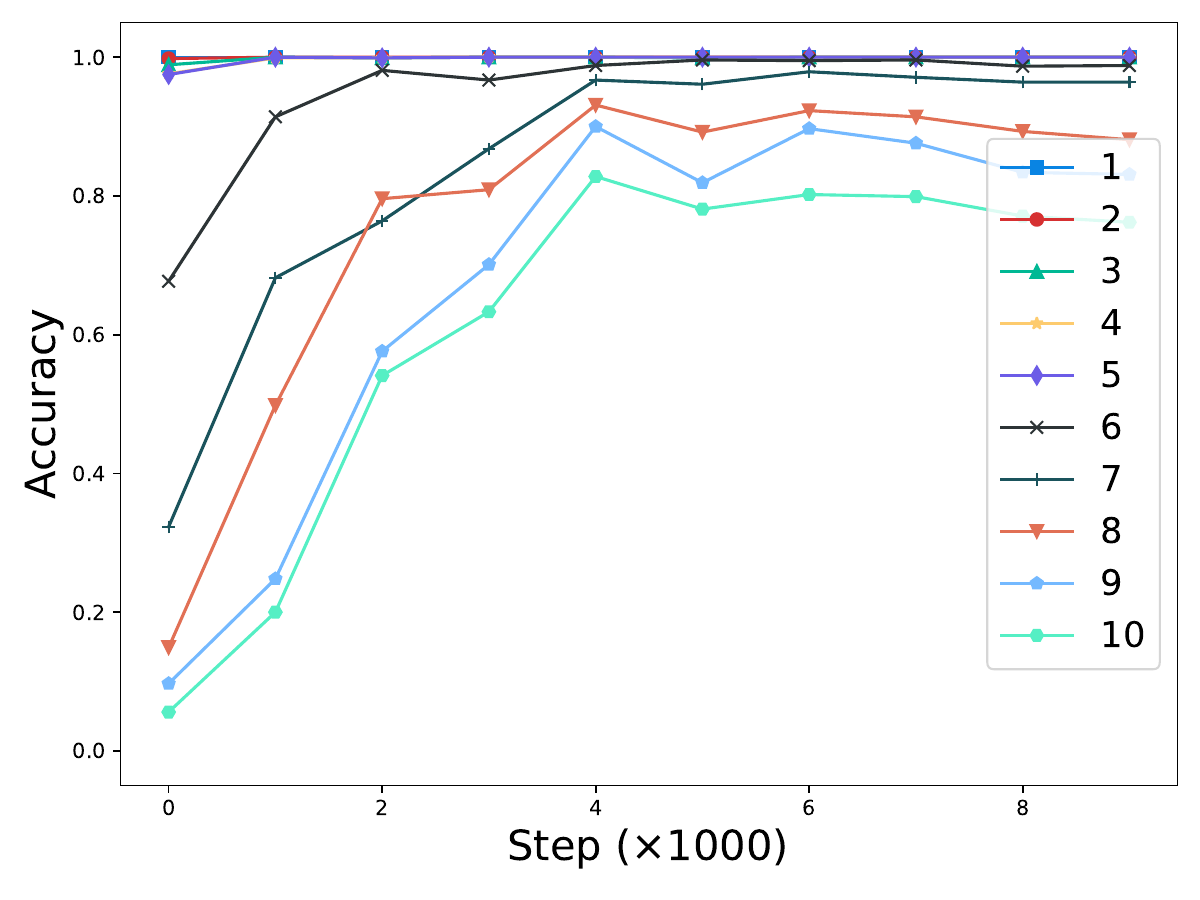}
        \subcaption{RPE-Square}
    \end{subfigure}
    \hfill
    \begin{subfigure}[t]{0.32\linewidth}
        \centering
        \includegraphics[width=\linewidth]{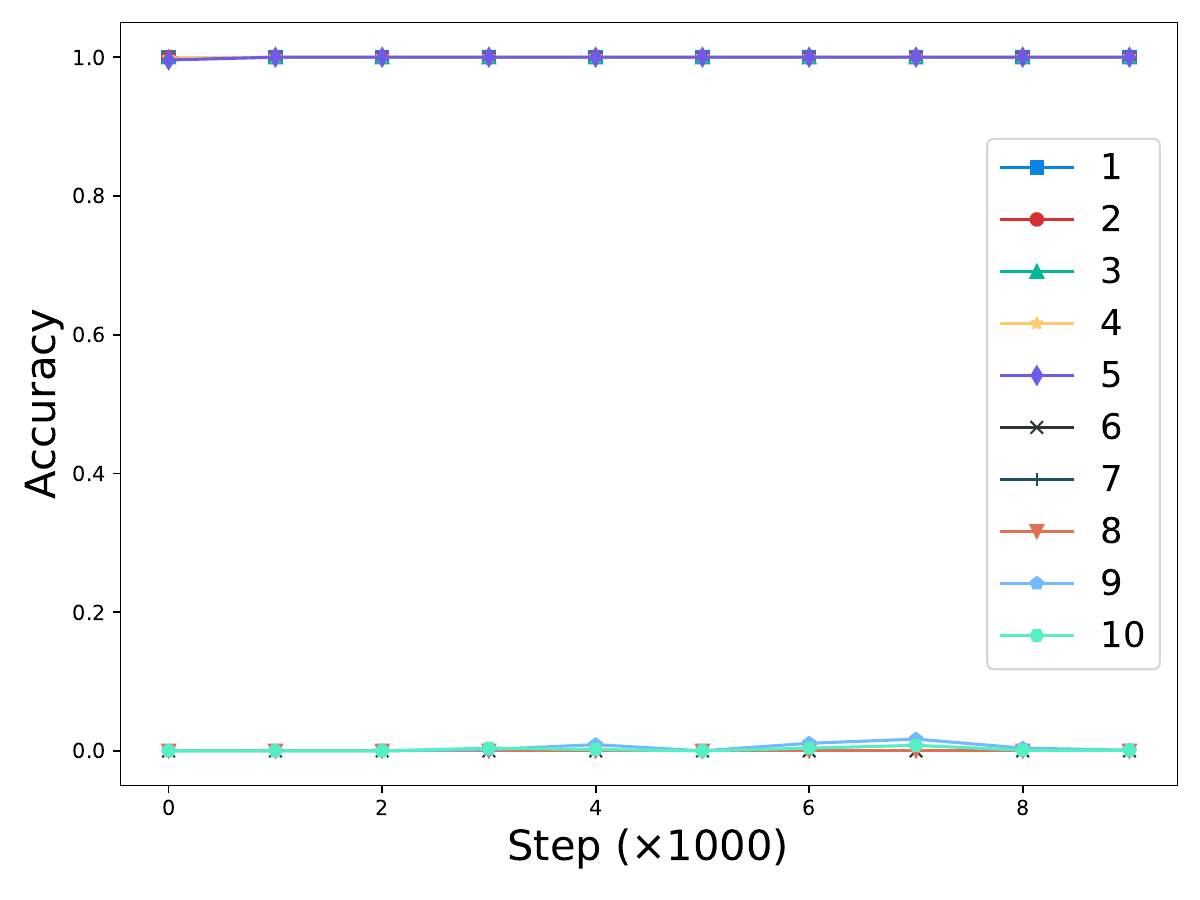}
        \subcaption{RPE}
    \end{subfigure}
    \hfill
    \begin{subfigure}[t]{0.32\linewidth}
        \centering
        \includegraphics[width=\linewidth]{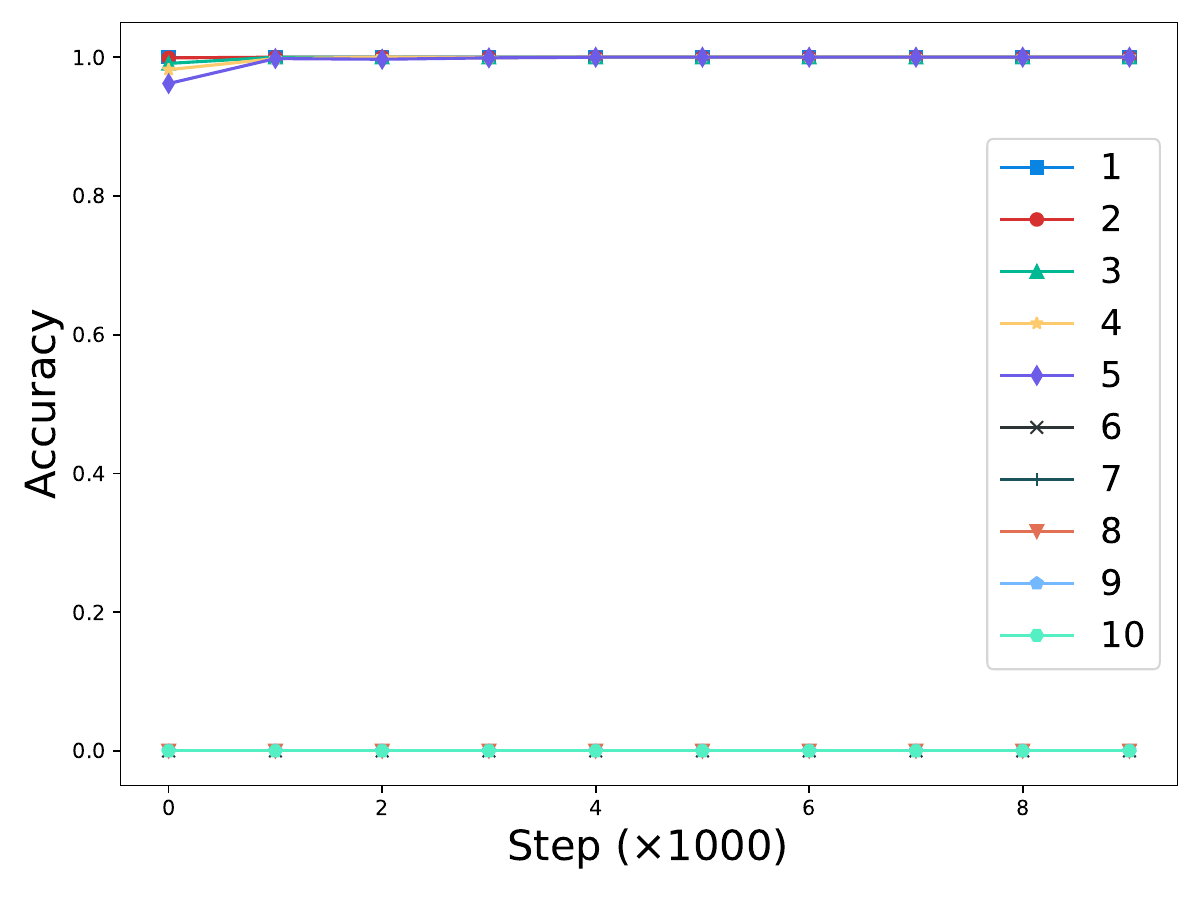}
        \subcaption{RoPE}
    \end{subfigure}
    \begin{subfigure}[t]{0.32\linewidth}
        \centering
        \includegraphics[width=\linewidth]{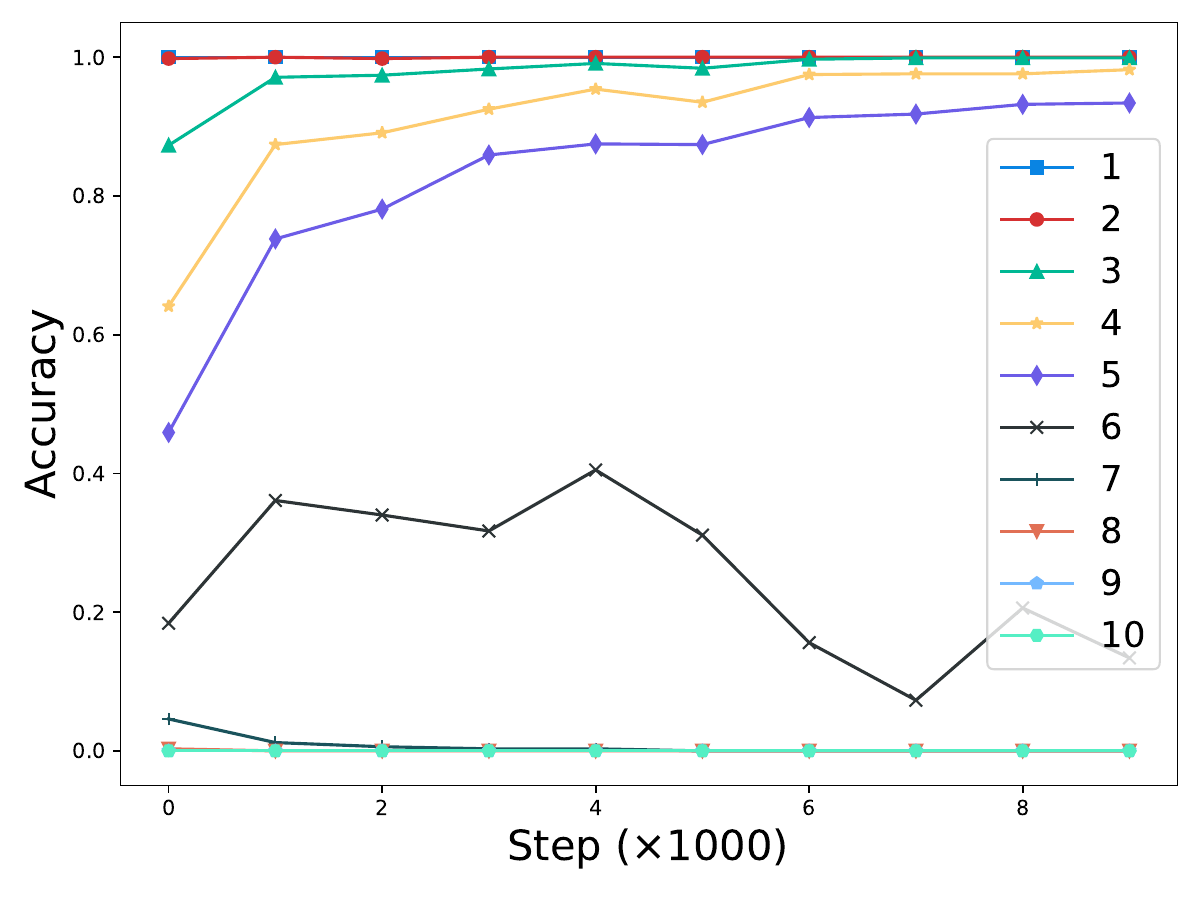}
        \subcaption{NoPE}
    \end{subfigure}
    \begin{subfigure}[t]{0.32\linewidth}
        \centering
        \includegraphics[width=\linewidth]{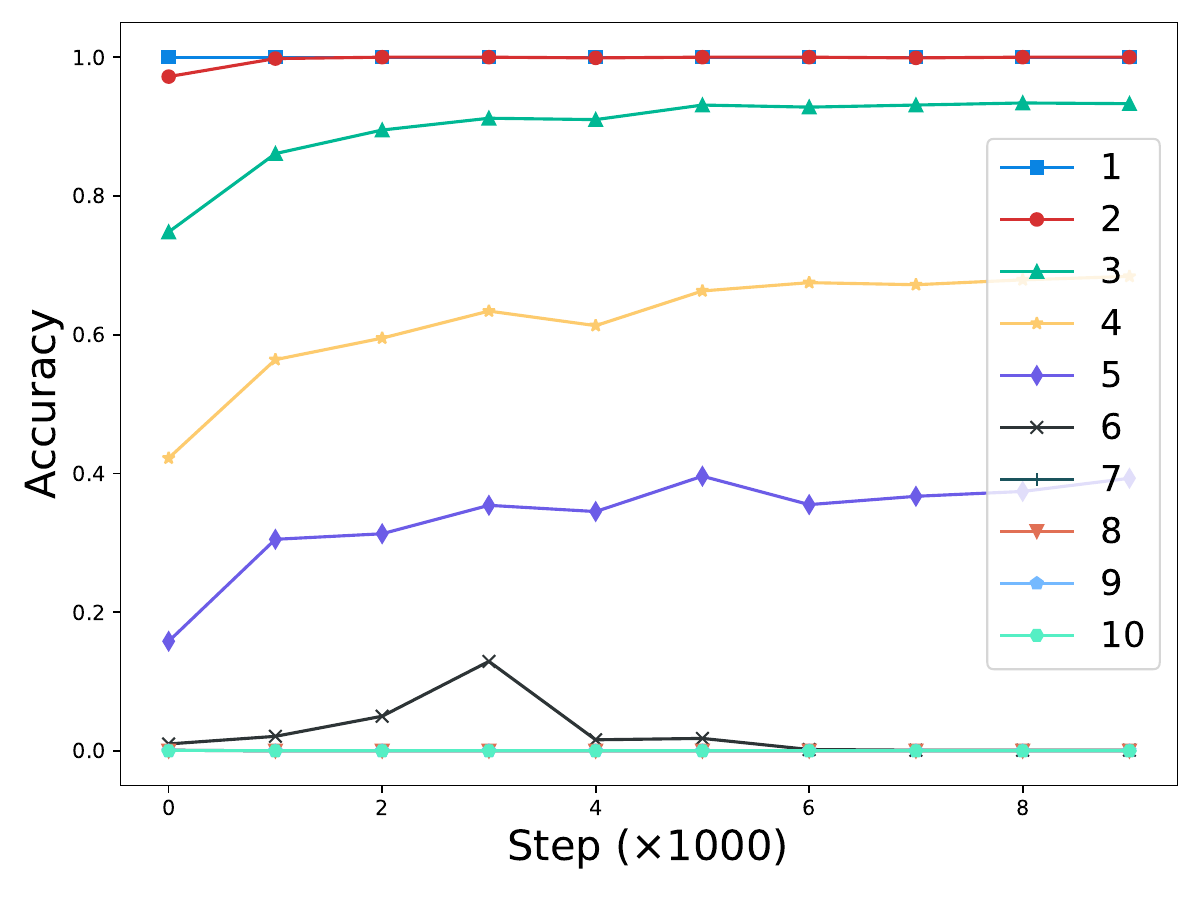}
        \subcaption{ALiBi}
    \end{subfigure}
    \begin{subfigure}[t]{0.32\linewidth}
        \centering
        \includegraphics[width=\linewidth]{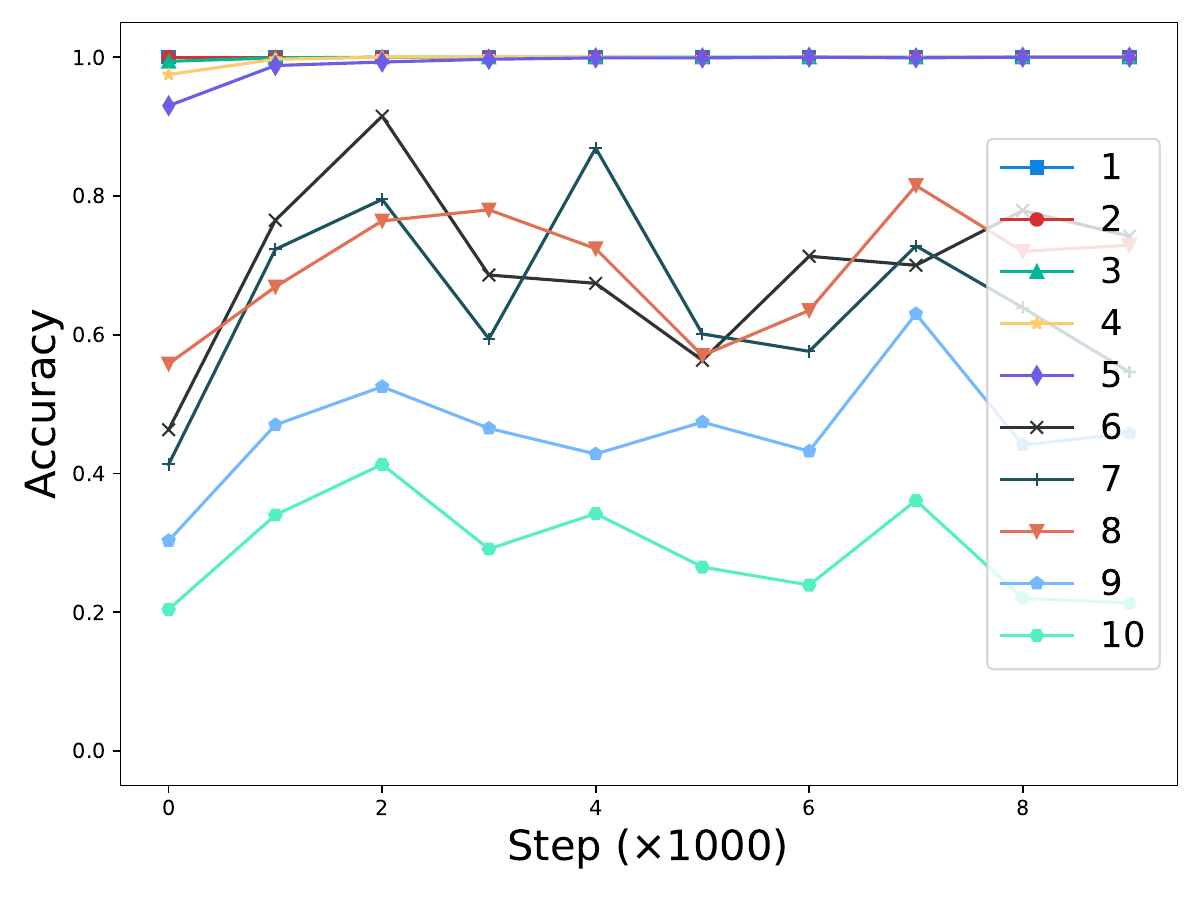}
        \subcaption{Abacus}
    \end{subfigure}
    \caption{Multiplication ($1 * N$). The models are trained on scales 1-5 and tested on scales 1-10. An instance is like ``$\mathtt{[BOS] x * y_0 \dots y_{n-1} = z_0 \dots z_n [EOS]}$''.}
    \label{fig:ae-multiplication1n}
\end{figure*}

\begin{figure*}[h]
    \centering
    \begin{subfigure}[t]{0.32\linewidth}
        \centering
        \includegraphics[width=\linewidth]{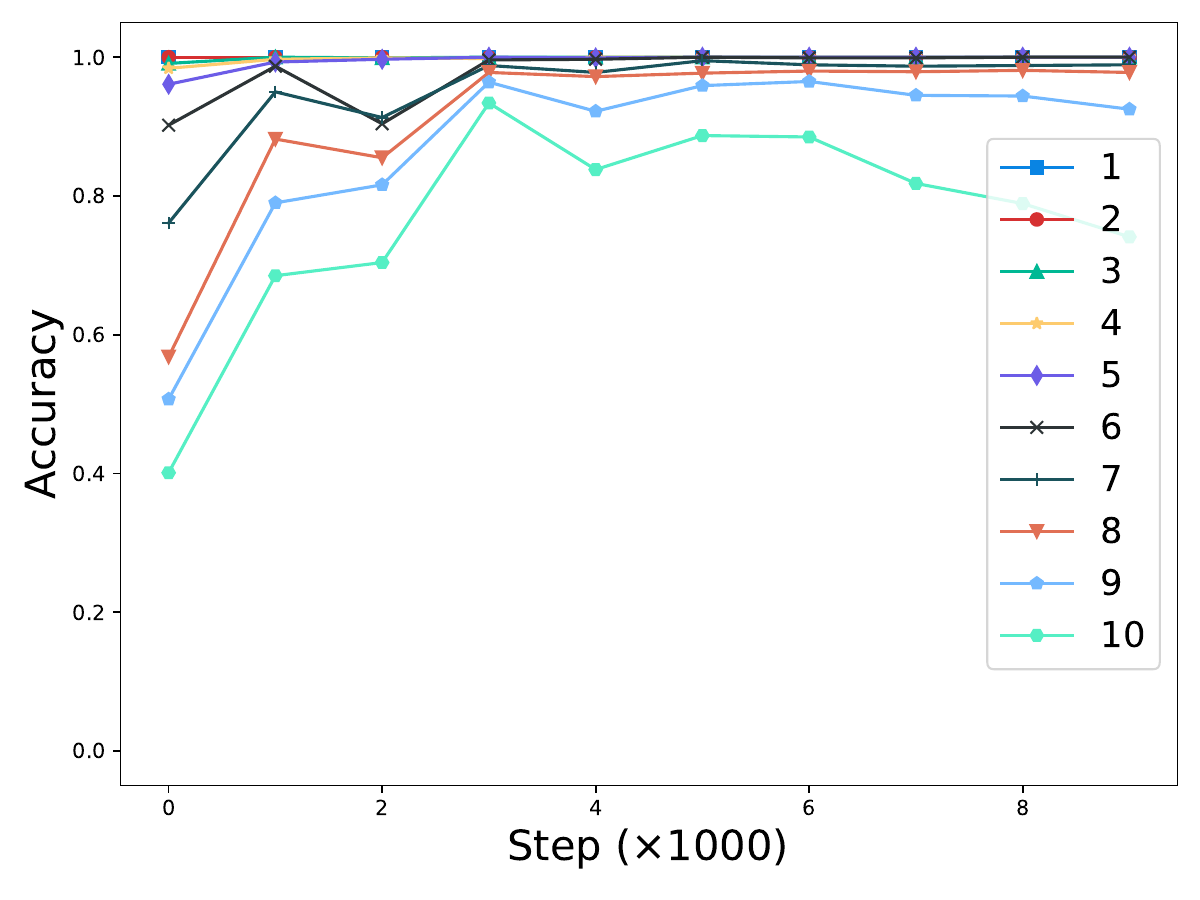}
        \subcaption{RPE-Square}
    \end{subfigure}
    \hfill
    \begin{subfigure}[t]{0.32\linewidth}
        \centering
        \includegraphics[width=\linewidth]{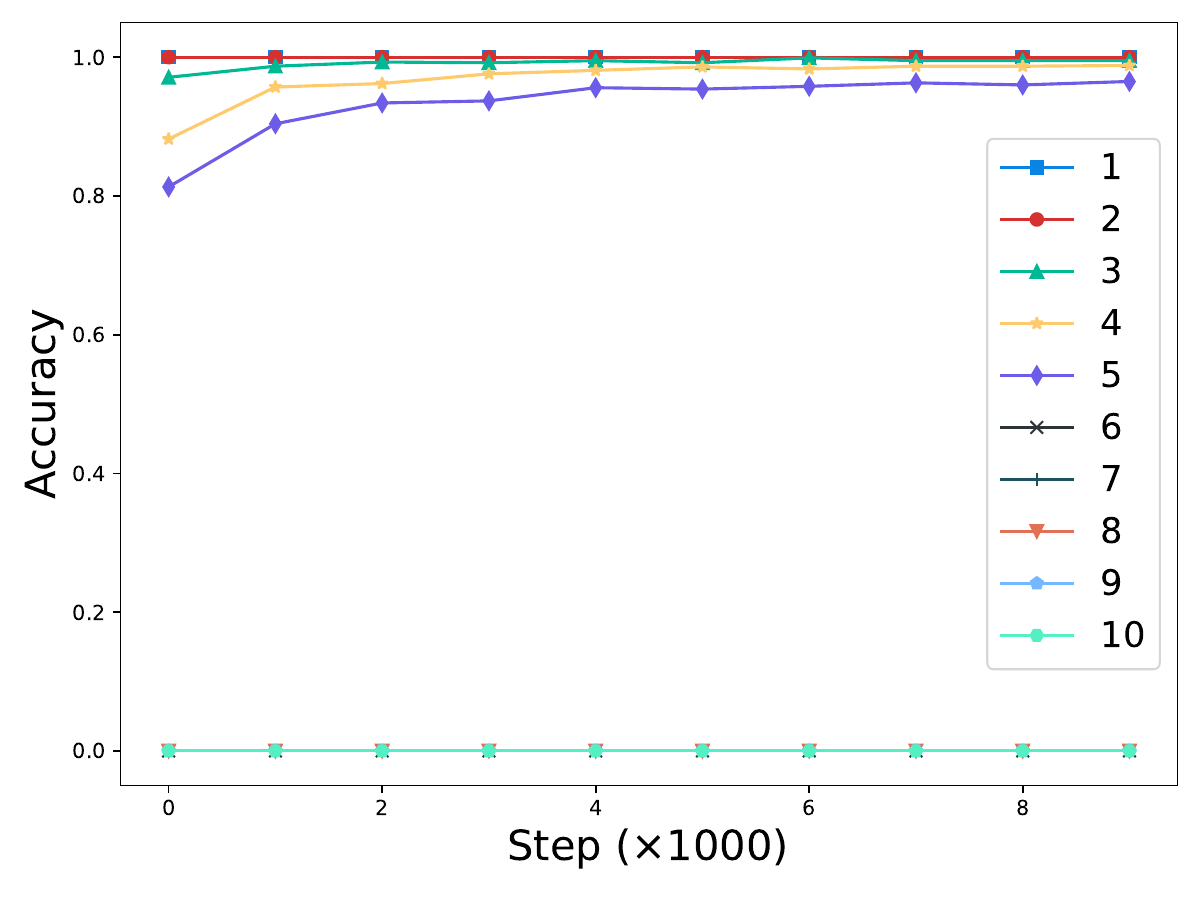}
        \subcaption{RPE}
    \end{subfigure}
    \hfill
    \begin{subfigure}[t]{0.32\linewidth}
        \centering
        \includegraphics[width=\linewidth]{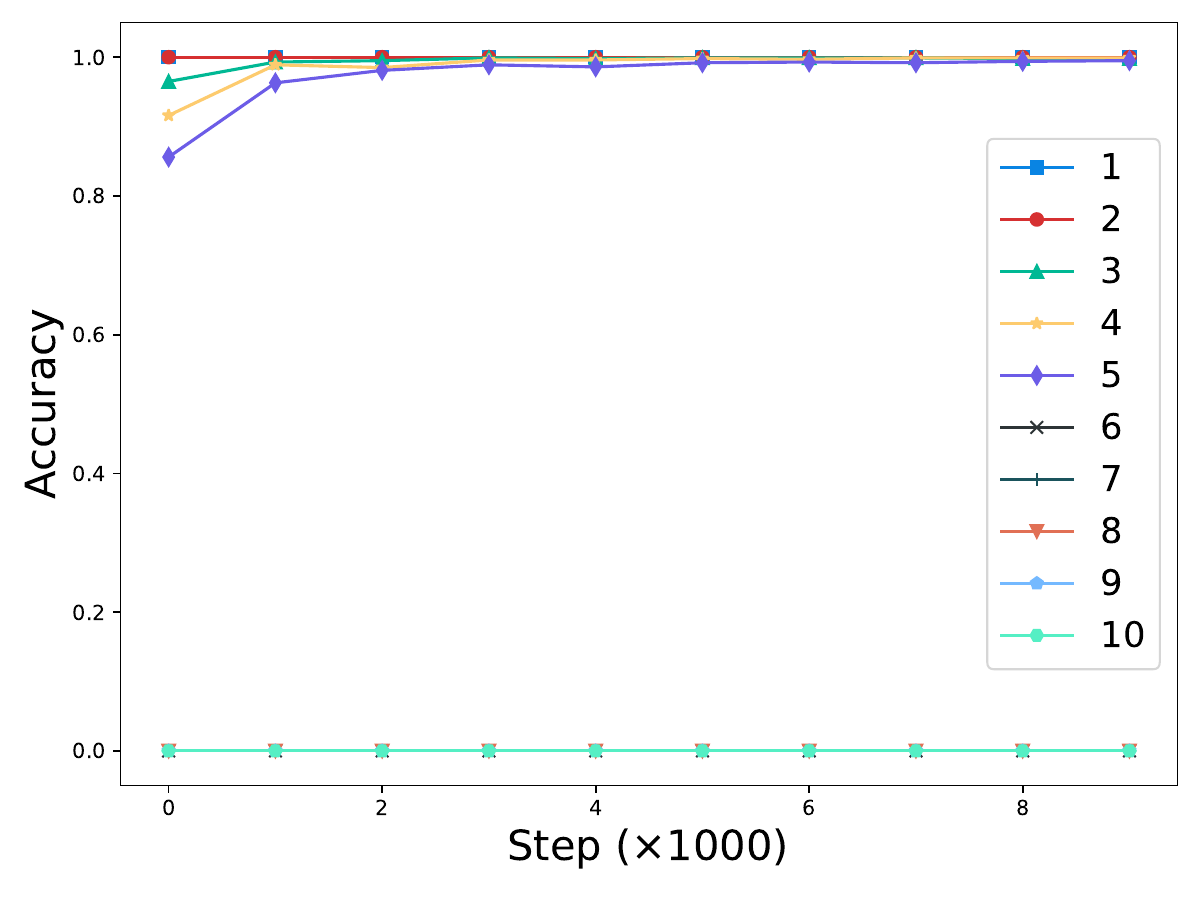}
        \subcaption{RoPE}
    \end{subfigure}
    \begin{subfigure}[t]{0.32\linewidth}
        \centering
        \includegraphics[width=\linewidth]{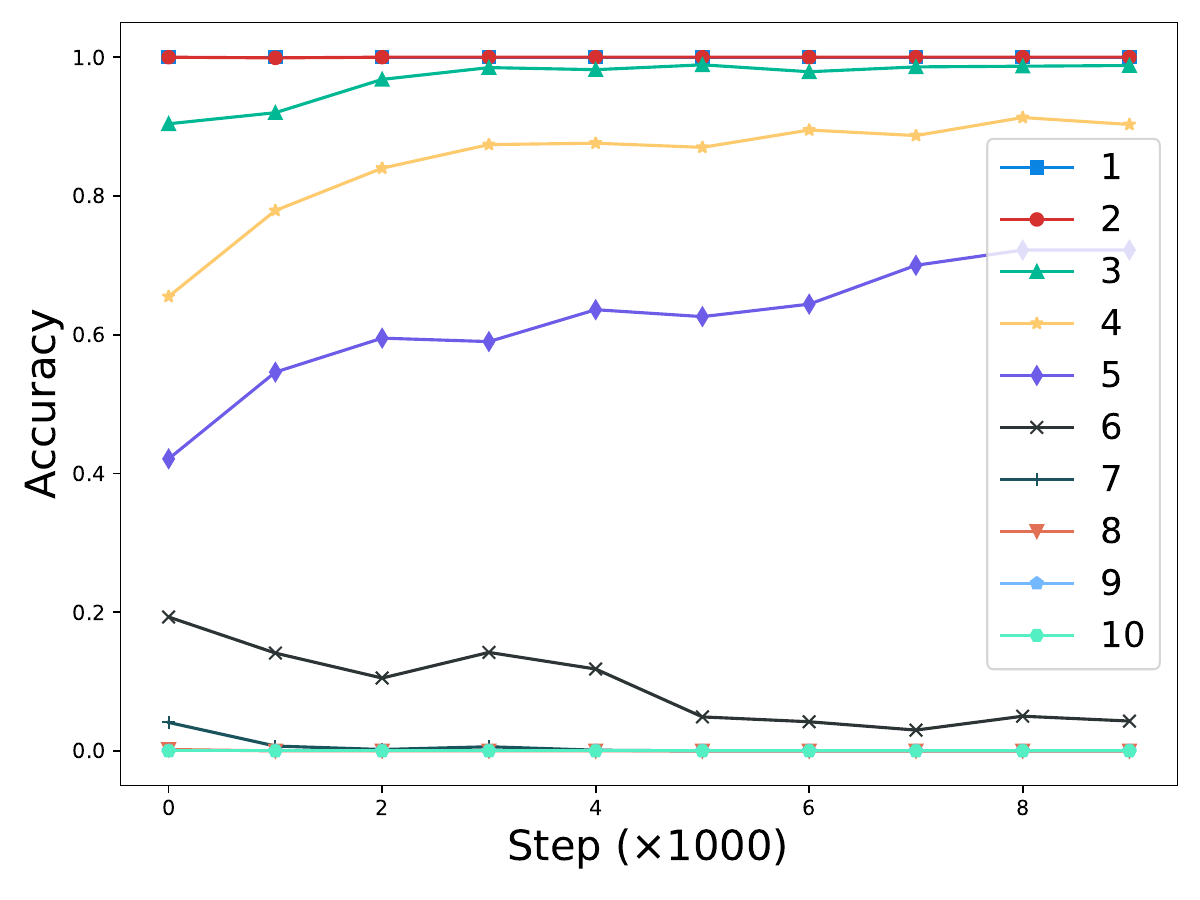}
        \subcaption{NoPE}
    \end{subfigure}
    \begin{subfigure}[t]{0.32\linewidth}
        \centering
        \includegraphics[width=\linewidth]{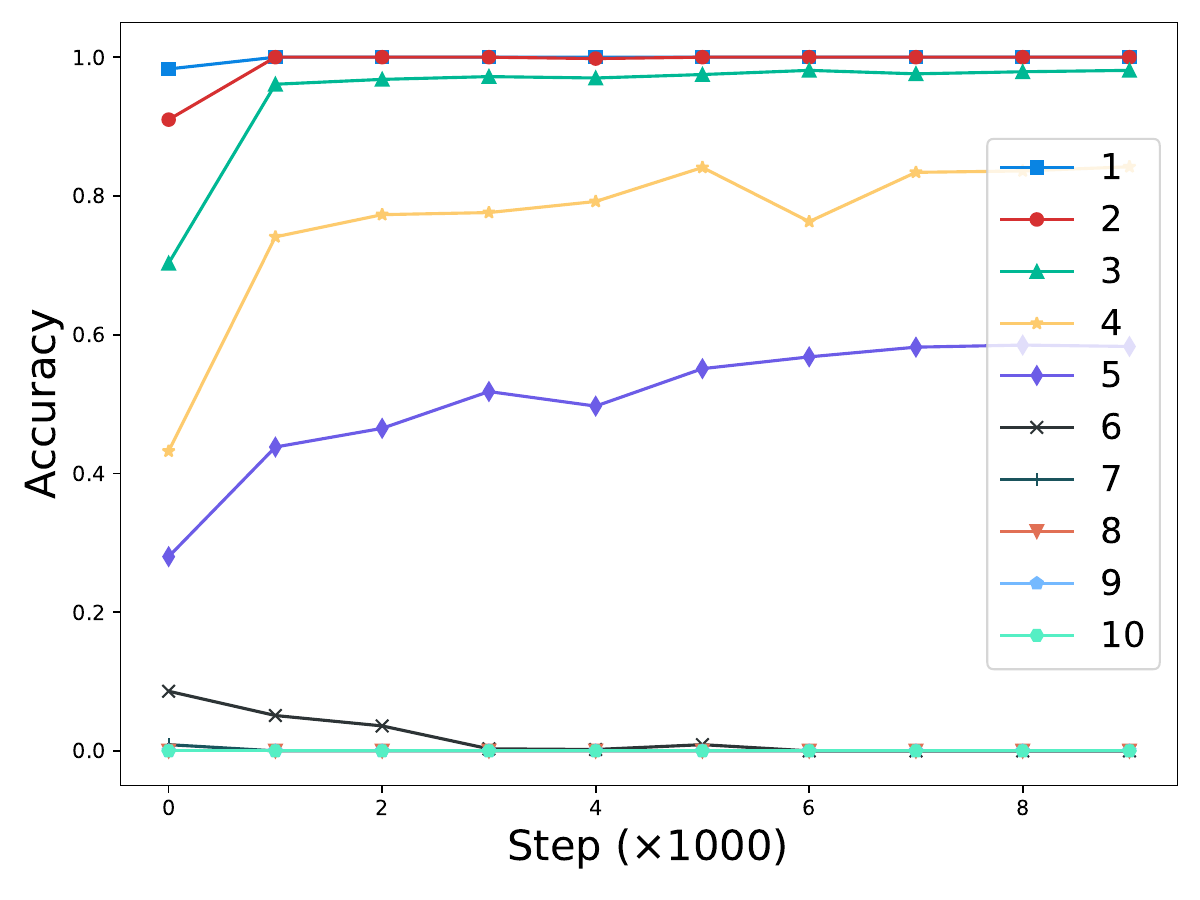}
        \subcaption{ALiBi}
    \end{subfigure}
    \begin{subfigure}[t]{0.32\linewidth}
        \centering
        \includegraphics[width=\linewidth]{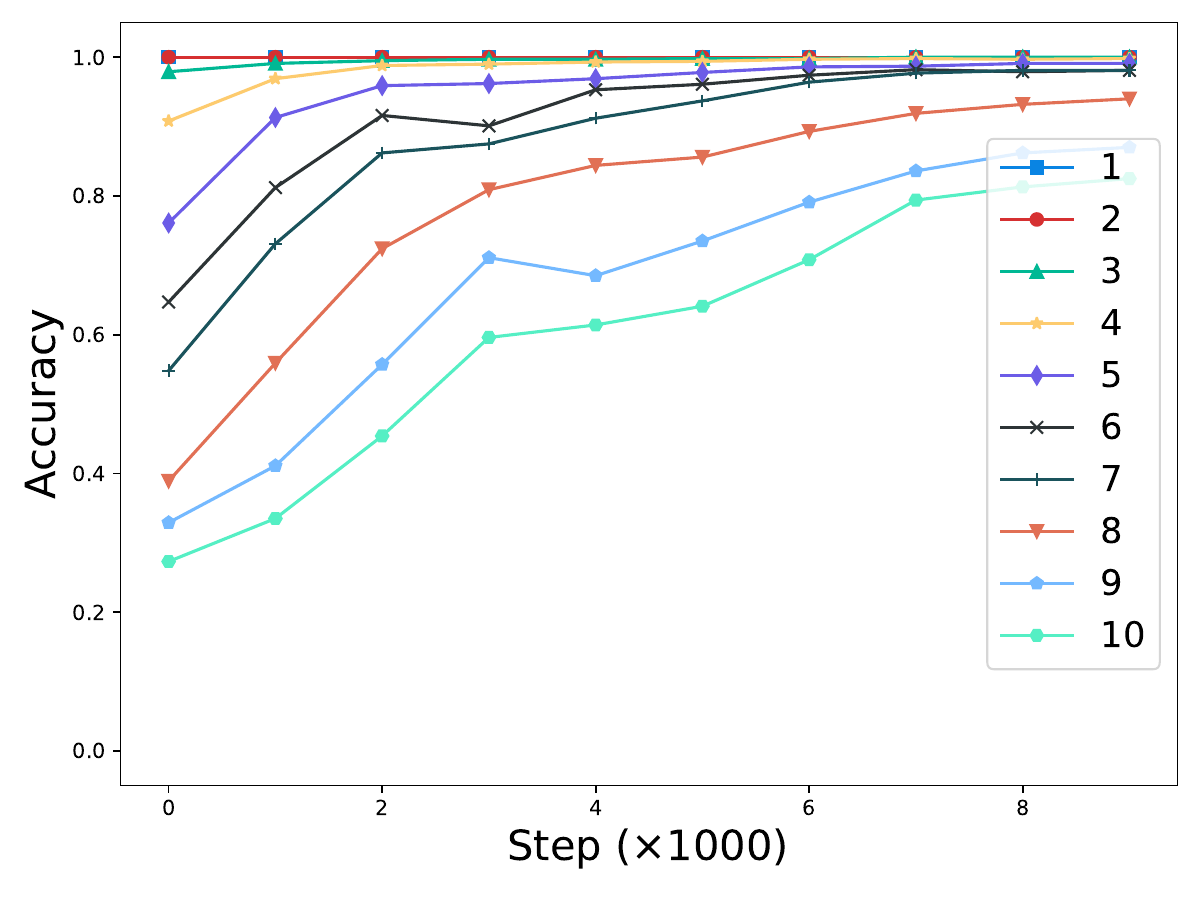}
        \subcaption{Abacus}
    \end{subfigure}
    \caption{Division ($N / 1$). The models are trained on scales 1-5 and tested on scales 1-10. An instance is like ``$\mathtt{[BOS] x \setminus y_{n-1} \dots y_0 = z_{n-1} \dots z_0 [EOS]}$'', where $z=y//x$.}
    \label{fig:ae-division1n}
\end{figure*}

\clearpage
\subsection{Additional Application of the Position Embedding Design Principle}

In this subsection, we present another application of the position embedding design principle besides RPE-Square. We consider a task called AdditionMod10, which computes the sum of the addends modulo 10, in the unaligned format. An instance takes the form 
\begin{equation*}
    \mathtt{[BOS] x_0 \dots x_{n-1} + y_0 \dots y_{n-1} = z [EOS]},
\end{equation*}
where $z=(x+y)\bmod 10$. Due to the modulo operation, the output depends solely on the first digits of the addends. This means the positional embedding need to capture the absolute relative distances to specific tokens (namely, ``[BOS]'' and ``+''). The ``absolute value'' handles the LDHD generalization in the latent space, and ``the relative distances to some special tokens'' deals with the unaligned data format. Following the same design principle used in RPE-Square, we introduce a new position embedding called RPE-Absolute to encode ``the absolute value of the relative distances to some special tokens''. The expression of $\text{RPE-Absolute}_{i,j}$ is
\begin{equation*}
        \sum_{1\leq k\leq i} \frac{\exp\left((W_Q x_i)^\intercal (W_K x_k)\right)}{\sum_{1\leq k'\leq i}\exp\left((W_Q x_i)^\intercal (W_K x_{k'})\right)} R_{i-k},
\end{equation*}
where $W_K, W_Q, R$ are the learnable parameters.

The results are shown in \cref{fig:ae-additionmod10}. We compare GPT-2 with RPE-Absolute against GPT-2 with RPE. The training hyperparameters are identical to those in the experiments of RPE-Square with the initial learning rate $0.0001$. The results demonstrate that RPE-Absolute achieves length generalization, while RPE only generalizes within the training lengths. This finding further supports the effectiveness of our position embedding design principle.

\begin{figure*}[ht]
    \centering
    \begin{subfigure}[t]{0.45\linewidth}
        \centering
        \includegraphics[width=\linewidth]{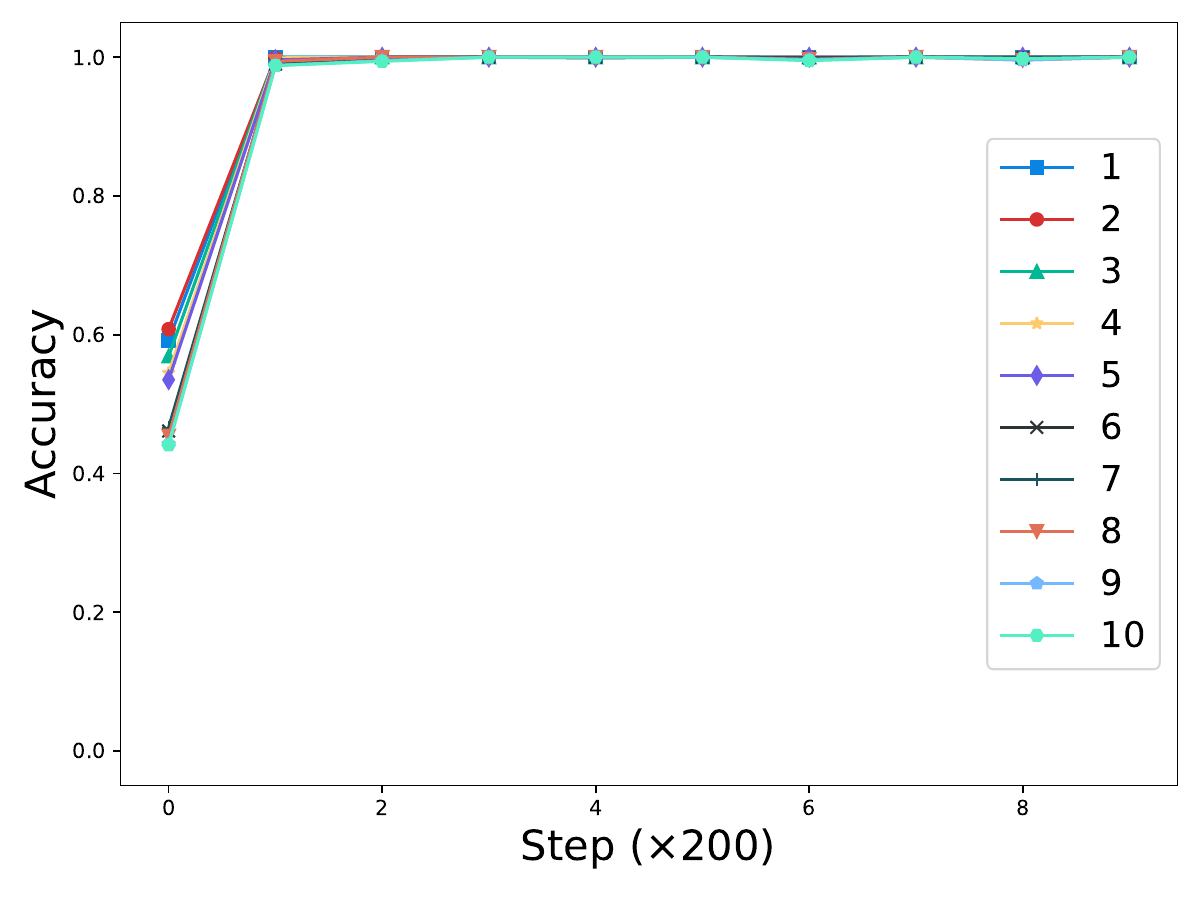}
        \subcaption{RPE-Absolute}
    \end{subfigure}
    \hfill
    \begin{subfigure}[t]{0.45\linewidth}
        \centering
        \includegraphics[width=\linewidth]{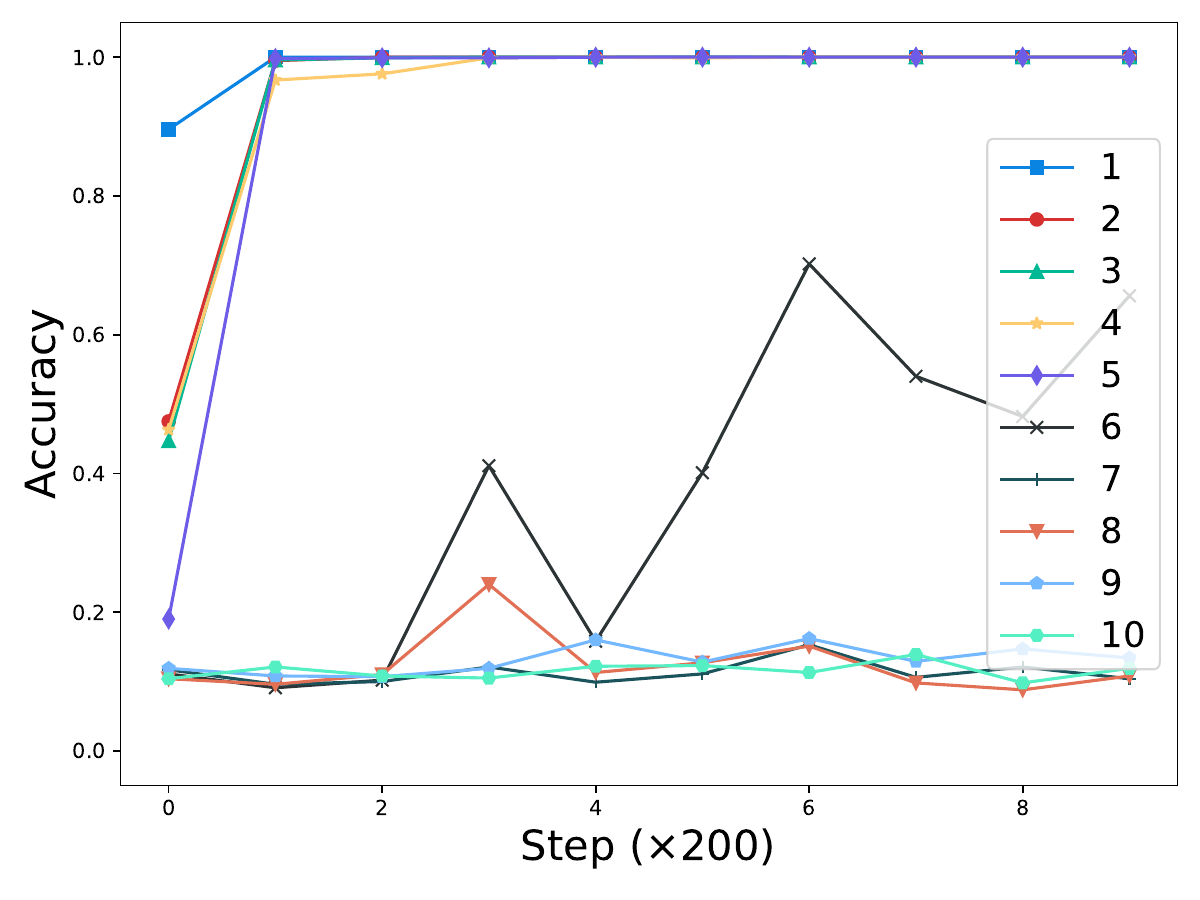}
        \subcaption{RPE}
    \end{subfigure}
    \caption{AdditionMod10. The models are trained on scales 1-5 and tested on scales 1-10. }
    \label{fig:ae-additionmod10}
\end{figure*}


\end{document}